\documentclass[10pt,a4paper]{article}
\usepackage{jheppub}
\usepackage{etoolbox}
\usepackage[utf8]{inputenc}
\usepackage[english]{babel}
\usepackage{amsmath}
\usepackage{amsfonts}
\usepackage{amssymb}
\usepackage{xcolor}
\usepackage{mathrsfs}
\usepackage{amsthm}

\newcommand{\be}{\begin{equation}}
\newcommand{\ee}{\end{equation}}

\theoremstyle{plain}
\newtheorem{Remark}{Remark}
\newtheorem{Theorem}{Theorem}
\newtheorem{Lemma}{Lemma}
\newtheorem{Proposition}{Proposition}
\newtheorem{Corollary}{Corollary}
\newtheorem{Definition}{Definition}
\graphicspath{{./Figures/}}

\patchcmd{\maketitle}{\@fpheader}{}{}{}

\title{Dreaming neural networks:  forgetting spurious memories and reinforcing pure ones.}
\author[a,b,c]{Alberto Fachechi}
\author[d,e]{Elena Agliari}
\author[a,b,c]{Adriano Barra}

\affiliation[a]{Dipartimento di Matematica e Fisica Ennio De Giorgi, Universit\`a del Salento, Italy}
\affiliation[b]{GNFM-INdAM Sezione di Lecce, Italy}
\affiliation[c]{INFN, Istituto Nazionale di Fisica Nucleare, Sezione di Lecce, Italy}
\affiliation[d]{Dipartimento di Matematica, Sapienza Universit\`a di Roma, Italy}
\affiliation[e]{GNFM-INdAM Sezione di Roma, Italy}
\emailAdd{alberto.fachechi@le.infn.it}
\emailAdd{elena.agliari@uniroma1.it}
\emailAdd{adriano.barra@unisalento.it}
\abstract{The standard Hopfield model for associative neural networks accounts for biological Hebbian learning and acts as the {\em harmonic oscillator} for pattern recognition, however its maximal storage capacity  is $\alpha \sim 0.14$, far from the theoretical bound for symmetric networks, i.e. $\alpha =1$. \newline Inspired by sleeping and dreaming mechanisms in mammal brains, we propose an extension of this model displaying the standard on-line (awake) learning mechanism (that allows the storage of external information in terms of patterns) and an off-line (sleep) unlearning$\&$consolidating mechanism (that allows spurious-pattern removal and pure-pattern reinforcement): this obtained {\em daily prescription} is able to saturate the theoretical bound $\alpha=1$, remaining also extremely robust against thermal noise.
\newline
Both neural and synaptic features are analyzed both analytically and numerically. In particular, beyond obtaining a phase diagram for neural dynamics, we focus on synaptic plasticity and we give explicit prescriptions on the temporal evolution of the synaptic matrix. We analytically prove that our algorithm makes the Hebbian kernel converge with high probability to the projection matrix built over the pure stored patterns. Furthermore, we obtain a sharp and explicit estimate for the ``sleep rate'' in order to ensure such a convergence.
\newline
Finally, we run extensive numerical simulations (mainly Monte Carlo sampling) to check the approximations underlying the analytical investigations (e.g., we developed the whole theory at the so called {\em replica-symmetric} level, as standard in the Amit-Gutfreund-Sompolinsky reference framework) and possible finite-size effects, finding overall full agreement with the theory.}
\keywords{Unlearning, Reinforcement learning, Statistical Mechanics, Sleep$\&$Dream}
\begin{document}
\maketitle

\section{Introduction: the starting points}

An \emph{intelligent} machine must be able to \emph{learn} new patterns of information and to \emph{retrieve} previously learnt ones as a response to external stimuli: these two intimately related concepts are the main aspects of cognition in Artificial Intelligence (AI). More sophisticated machines also exhibit the ability to \emph{reinforce} relevant memories (e.g. pure states) and to \emph{remove} irrelevant ones (e.g. mixture states), allowing a smarter storage of information. In this work, keeping the paradigmatic Hopfield model as the awake reference, we equip it with reinforcement and remotion features (able to work simultaneously,  during the network {\em sleep}, as inspired by real sleeping and dreaming mechanisms in mammal brains:
oversimplifying, a sleeping session can be split in two different modes: {\em rapid eye movement sleep} (REM sleep) and {\em slow wave sleep} (SW sleep); the former yields to erasure of unnecessary memories, the latter to consolidation of the important ones \cite{onde,Diekerlmann,Rash,unlearning4}.
Usually these two stages of sleep alternate during the night and, of relevance for synaptic homeostasis, the former is particular important in order to globally reduce synaptic strength (and its relative consumption of energy and tissue, an idea in agreement with the original Parisi proposal on forgetting neural networks \cite{Giorgio,Enzo}), while the latter is more dedicated to consolidation of relevant memories through some sort of off-line reinforcement learning \cite{amigdala1,amigdala2}.
\newline
In the Literature on Artificial Intelligence, reinforcement (of pure states) and remotion (of spurious states) are typically addressed separately (see {\it e.g.} \cite{HopfieldUnlearning,VanHemmen} for the former and \cite{RL1,RL2} for the latter). Here, instead, we propose a unified framework for synaptic plasticity where simultaneously reinforcement \emph{and} remotion take place. As we will see, the combined effect of these mechanisms determines a larger retrieval region, where retrieval is stable against both the fast and the slow noise. To our knowledge, the resulting associative network outperforms other models (with symmetric interactions) appeared in the Literature. In the remainder of this Section, we provide a short description of the state of the art focusing on those aspects that are mostly related to our work.
\par
Since the seminal work by John J. Hopfield in the eighties \cite{Hopfield}, associative neural networks have become the standard model to capture collective capabilities
spontaneously shown by networks of interacting neurons.
In a nutshell, a Hopfield network is made of $N$ units mimicking binary neurons, whose state (spiking/quiescent) is described by an Ising spin ($\sigma =\pm 1$). Units interact pairwise through weighted links mimicking synaptic connections, whose magnitude is defined according to Hebb's rule for learning, namely, given $P$ patterns of information $\{\xi^{\mu}\}_{\mu=1,...,P}$ of length $N$,
the coupling $J_{ij}$ between the neuron $i$ and the neuron $j$ reads
\be\label{eq:hebb}
J_{ij} \equiv \frac{1}{N}\sum_{\mu=1}^{P}\xi_i^{\mu}\xi_j^{\mu}, ~~ i,j=1,...,N.
\ee
Typically, one takes Boolean patterns with entries identically and independently drawn with equal probability, {\it i.e.}  $P(\xi_i^{\mu} = +1 ) = P(\xi_i^{\mu} = -1 ) = 1/2$.
Moreover, the set of patterns is taken as static\footnote{The time scale for neuronal dynamics is much shorter than the time scale for synaptic (and therefore pattern) dynamics, in such a way that, when focusing on retrieval tasks one can take synapsis as static, see {\it e.g.} \cite{Amit,Coolen}.} and are thus called {\em quenched}. In order to assess the retrieval of the $\mu^{th}$ pattern,
one introduces the so-called Mattis overlaps
\begin{equation}\label{eq:mattis}
m_{\mu} \equiv \frac{1}{N} \sum_{i=1}^{N}\xi_i^{\mu}\sigma_i, ~~ \mu=1,...,P,
\end{equation}
in such a way that, when the neuronal configuration $\{\sigma_i\}_{i=1,...,N}$ is aligned with $\xi^{\mu}$, then $m_{\mu}=1$; this configuration is interpreted as the retrieval of the pattern $\xi^{\mu}$. The Hopfield model is formally described by a cost-function (or {\em energy}, or {\em Hamiltonian}, to keep a physical jargon) $H_{N,P}(\sigma|\xi)$ defined as
\be\label{eq:hopfield-Hamiltonian}
H_{N,P}(\sigma|\xi) \equiv  -\sum_{i<j}^{N,N}J_{ij}\sigma_i \sigma_j \sim -\frac{1}{2N}\sum_{i,j=1}^{N,N} \sum_{\mu=1}^{P} \xi_i^{\mu}\xi_j^{\mu} \sigma_i\sigma_j = - \frac{N}{2}\sum_{\mu=1}^{P} m_{\mu}^2.
\ee
Here, the first sum runs over all possible pairs of neurons, while in the second passage we neglected $O(N^{-1})$ terms and implemented the Hebb coupling (\ref{eq:hebb}), and lastly we used the definition (\ref{eq:mattis}). Once that the cost-function $H_{N,P}(\sigma|\xi)$ is given, exploiting the mean-field nature of the model, a neural dynamics can be easily constructed \cite{Amit,Coolen}. To this goal, we introduce the \emph{fast noise} ({\it i.e.} standard white noise, or {\em temperature} in physical jargon)
whose magnitude is tuned by a parameter $T \equiv 1/\beta$ (such that as $T \to \infty$ the neural update is entirely random, while as $T \to 0$ it reduces to a deterministic evolution \cite{Amit,Coolen}) and the internal field $h_i$ acting on the $i^{th}$ neuron. In this way, the Hopfield cost-function (\ref{eq:hopfield-Hamiltonian}) can be written as
\begin{eqnarray}\label{eq:withfield}
H_{N,P}(\sigma|\xi) &=& - \sum_{i=1}^{N} h_i \sigma_i,  \ \ \ \ \ h_i = \frac{1}{2N}\sum_{j=1}^{N} \sum_{\mu=1}^{P} \xi_i^{\mu}\xi_j^{\mu} \sigma_j,
\end{eqnarray}
and the stochastic neural update rule can be written as
\begin{equation}
\label{eq:evolution-rule}
P\left(\sigma_1(\tau+1),...,\sigma_N(\tau+1)\right) = \prod_{i=1}^{N}\left \{ \frac12 \left[ 1 + \sigma_i(\tau) \cdot \tanh(\beta h_i(\tau)) \right] \right \},
\end{equation}
where the parameter $\tau$ identifies a suitable neural-update timescale. This dynamics ensures that detailed balance holds and the neural configuration eventually converges to the Boltzmann-Gibbs distribution associated to the Hamiltonian (\ref{eq:hopfield-Hamiltonian}), the latter playing as a Lyapunov function at $T=0$.
\newline
This system can be addressed via sophisticated techniques stemming from the statistical mechanics of disordered systems, as pioneered by Amit-Gutfreund-Sompolinksy (AGS) \cite{AGS1,AGS2}. Before moving to that, we sketch a heuristic argument, due to Hopfield and Tank \cite{HopfieldTank}, to see the retrieval capabilities of the network. Since patterns are randomly generated, for an arbitrary vector state $\{ \sigma_i \}_{i=1,...,N}$ the related Mattis magnetizations would vanish as $O(N^{-1/2}$) and the corresponding contribution to the cost-function (\ref{eq:hopfield-Hamiltonian}) is negligible. On the other hand, for a vector state that is (partially) aligned with a given pattern, the contribution to the energy would be $O(N)$, and it therefore occurs to be a convenient (stable) state for the system.
This suggests that the model displays energy minima at each of the assigned memories. In order to strengthen this picture, statistical mechanics definitions and tools are now essential.
\par
The (intensive) free energy associated to the cost-function (\ref{eq:hopfield-Hamiltonian})
is defined as
\be\label{eq:free-energy-def}
A_{N,P}(\beta) \equiv - \frac{1}{\beta N}\mathbb{E}\ln Z_{N,P}(\beta|\xi),
\ee
where $Z_{N,P}(\beta|\xi)\equiv \sum_{\{ \sigma \}}\exp[-\beta H_{N,P}(\sigma|\xi)]$ is called the {\em partition function}  and $\mathbb{E}$ denotes the average over the patterns, also termed {\em slow noise}.
In the following, we will mainly focus on the thermodynamic limit of the free energy, namely $A(\alpha,\beta) \equiv \lim_{N\rightarrow \infty} A_{N,P}(\beta)$, where $\alpha \equiv \lim_{N\to\infty}P/N$ is referred to as the \emph{load} (or {\it storage capacity}) of the system.
\newline
In a statistical-mechanical approach, one aims to express the free energy explicitly in terms of the {\em order parameters}, namely simple functions of the state of the system under study giving informations about the behavior of the system itself. In this context, the $P$ Mattis overlaps $\mathbf{m} = (m_1, ..., m_P)$ work as order parameters since, according to their value, one can infer whether the network is retrieving ($\exists \mu\, | \, m_{\mu} \neq 0$) or not ($m_{\mu} = 0, \forall \mu$). Further, the so-called Edward-Anderson overlap $q_{ab} \equiv N^{-1} \sum_{i}^{N}\sigma_i^{a}\sigma_i^b$ is likewise useful as it detects the {\em spin-glass} regime (namely a region where ``structured disorder'' prevails, as will be explained in more details later) \cite{Amit,Coolen}. In the replica symmetric approximation provided by AGS theory, the free energy of the Hopfield model expressed in terms of the Mattis and Edward-Anderson overlaps ($q_{ab} \equiv q$ for simplicity) reads as
\begin{eqnarray}\label{eq:hopfield-free-energy}
A(\alpha,\beta) &=& \frac{1}{2}\bold{m}^2 -\frac{1}{\beta}\int_{-\infty}^{+\infty}d \mu(x)\left\langle\ln \cosh \left \{ \beta \left( \bold{m}\cdot \boldsymbol{\xi} + x \frac{\sqrt{\alpha q}}{[1-\beta(1-q)]^2} \right)\right\} \right\rangle_{\boldsymbol{\xi}}\\ \nonumber
&+& \frac{\alpha}{2\beta}\ln\left[{1-\beta(1-q)} \right] - \frac{\alpha }{2} \frac{q}{1-\beta(1-q)}+\frac{\alpha}{2}\frac{q}{[1-\beta(1-q)]^2}.
\end{eqnarray}
\begin{figure*}[t]
	\centering
	\begin{minipage}[c]{.7\textwidth}
		\centering
		\includegraphics[width=\textwidth]{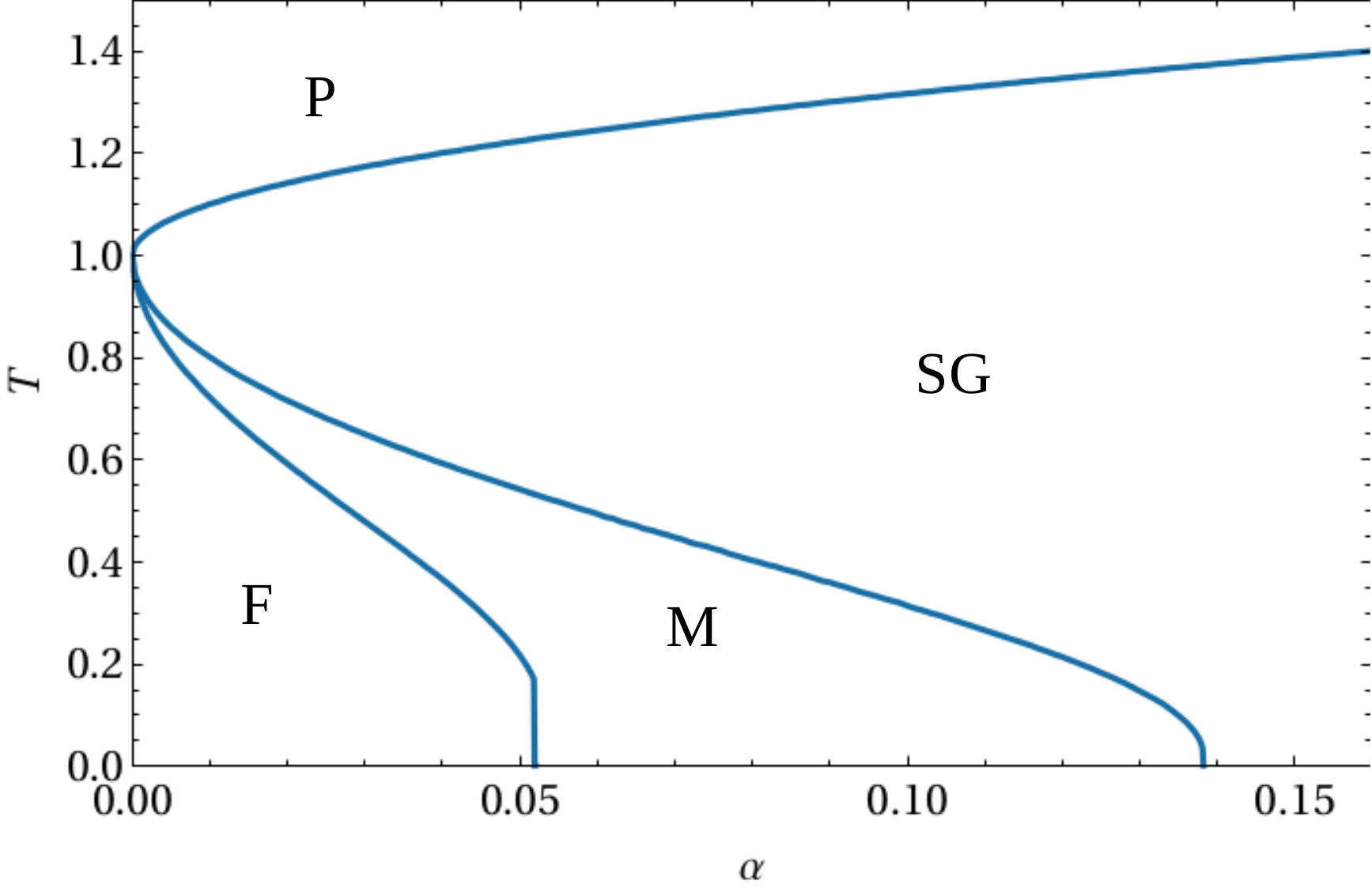}
	\end{minipage}
\caption{{\bfseries Phase diagram of the Hopfield network.} The phase diagram lives in the $(\alpha,\beta)$ plane. In the upper region (P) the network behaves randomly while in the top-right region (SG) it is frozen in a spin-glass phase. The working regions are solely the two down-left where patterns are global minima of the free energy (F) or relative minima of the free  energy (M). %
} \label{fig:equivalenza}
\end{figure*}
Recalling that, in its original interpretation in Thermodynamics \cite{Amit}, the free energy equals the difference between the energy  ({\it i.e.} the expectation value of the cost-function) and the entropy (related to the probability of observing a configuration $\{ \sigma_i \}_{i=1,...,N}$), the extremization of the free energy over the order parameters ensures simultaneously the minimum energy and the maximum entropy principles. Thus, the exploration of the free-energy landscape (as the noise level $\beta$ and the load $\alpha$ are tuned) allows us to inspect the system thermalization and equilibria.
Remarkably, as pointed out by Jaynes \cite{Jaynes}, this route has a clear meaning also from a statistical inference perspective, much closer in spirit to Machine Learning: minimizing the free-energy equals searching for the minima of the cost-function under the constraint of Maximum Entropy (see also \cite{Bialek}).
\newline
As anticipated, the order parameters for the Hopfield model are the $P$ Mattis overlaps $m_{\mu}$ and the Edward-Anderson overlap $q$ and, by extremizing the free-energy (\ref{eq:hopfield-free-energy}) with respect to such variables, we get the following self-consistent equations:
\begin{eqnarray}\label{eq:self-m}
\frac{dA(\alpha,\beta)}{d \bold{m}} = 0 \Rightarrow \bold m &=& \int_{-\infty}^{+\infty}d \mu(x) \left \langle \boldsymbol{\xi} \tanh\left [ \beta \left ( \bold{m}\cdot \boldsymbol{\xi} + x \frac{\sqrt{\alpha q}}{1-\beta(1-q)} \right)  \right] \right \rangle_{\boldsymbol{\xi}}, \\ \label{eq:self-q}
\frac{dA(\alpha,\beta)}{d q} = 0 \Rightarrow q &=& \int_{-\infty}^{+\infty}d \mu(x) \left \langle  \tanh^2\left[ \beta \left( \bold{m}\cdot \boldsymbol{\xi} +  x \frac{\sqrt{\alpha q}}{1-\beta(1-q)} \right)  \right] \right \rangle_{\boldsymbol{\xi}},
\end{eqnarray}
where the bracket $\langle \cdot \rangle_{\boldsymbol{\xi}}$ means the average over the quenched patterns.
\newline
By studying the solutions of these equations, one can obtain a {\em phase diagram} for the Hopfield network, namely, in the $(\alpha, \beta)$ plane one can distinguish three phases characterized by different solutions of Eqs. (\ref{eq:self-m}, \ref{eq:self-q}) and qualitatively different behaviors of the system. In our opinion, this is the greatest reward by the statistical-mechanics approach: 
the concept of phase diagram allows researchers to predict the network response as a function of the tunable parameters, and this can be a fundamental information in the modern theoretical foundation of AI.
The phase diagram\footnote{Despite almost four decades has elapsed since Hopfield's seminal work, a rigorous control of the entire phase diagram is still beyond the current mathematical technologies as the low temperature analysis of such disordered systems is notoriously difficult \cite{Tala,Viktor}. In neural networks we usually rely on the so-called {\em replica symmetric approximation} for the description of the free-energy landscape, as originally outlined by AGS \cite{Amit,AGS1,AGS2}. More details on replica symmetry will be presented in Sec.~\ref{replica-simmetric-theory}.} for the Hopfield model is shown in Fig. \ref{fig:equivalenza}, and one can detect:
\begin{itemize}
\item \emph{Ergodic phase}: in the high-temperature limit, the fast noise in the system is too strong for the neurons to reciprocally feel each other, therefore the system behaves randomly and no emergent collective properties of neurons can be appreciated. This region is characterized by $\bold{m}=0$ and $q=0$.

\item \emph{Spin-glass phase}: in the high-load limit, the slow-noise is too large for the neurons to correctly handle the whole set of patterns, and, again, the system fails to retrieve information. This region is characterized by $\bold{m}=0$ but $q \neq 0$.

\item \emph{Retrieval phase}: when both fast and small noise are relatively small, the system behaves as an associative neural network  and neural collective capabilities spontaneously appear. The phase is  characterized by $\bold{m} \neq 0$ and $q \neq 0$.
    \newline
    This region can be further split in a pure retrieval region (where pure states are global minima) and in a mixed retrieval region (where pure states are local minima, yet their attraction basin is large enough for the system to end there if properly stimulated).
\end{itemize}
As is clear from the phase diagram of the Hopfield model, the Hebbian prescription (\ref{eq:hebb}) implies a limitation in the number of patterns that the network can correctly handle. The network can, at most, manage a number of patterns $P$ that grows linearly in the number of neurons $N$ available, {\it i.e.} $P = \alpha N$ \cite{AGS2}. Once a critical threshold $\alpha_c\sim 0.14$ is reached, the network experiences a plethora of unpleasant symptoms, ranging from the worst scenario (the abrupt transition to the spin-glass phase, sometimes called {\em blackout catastrophe}, affecting solely fully connected models), to milder confusional states where network's performances are sensibly reduced, if not lost at all.\footnote{It is worth pointing however that, despite the severe criticism - see {\it e.g.} \cite{French} - initially raised against the {\em connectionist perspective} this approach belongs to (as far as the Hopfield blackout scenario is concerned), in diluted models this abrupt transition gets smoothed and switches from first-order to second-order (in the Ehrenfest notation), thus cross-talks effects in more realistic networks certainly are present (and strong), but the discontinuous lost of the whole information is just a chimera of fully connected models \cite{Ton}.}
\newline
The reason underlying this impasse is that the free-energy landscape of the system is characterized by pure-state minima (corresponding to  pure pattern retrieval) but also by spurious-state minima (corresponding to mixtures of patterns that are interpreted as errors); as long as the network is fed by a linear increment (in the neural volume $N$) of patterns to be stored (i.e. $P \propto N$), there is an unavoidable, combinatorial (roughly exponential in $N$) proliferation of mixed patterns which may work as ``traps'' for the system state \cite{Amit,Coolen}.
\par
In the late eighties, Elisabeth Gardner found, by general arguments, that the maximal theoretical capacity for symmetric networks\footnote{The maximal theoretical capacity reaches $\alpha_c =2$ for asymmetric networks.} is $\alpha_c=1$ \cite{Gardner}, so the Hopfield model threshold $\alpha_c \sim 0.14$ has always been looked at as rather poor: indeed, along the decades, scientists tried to improve the maximal capacity by implementing some extensions and variations on theme ({\it e.g.} keeping the network out of equilibrium \cite{offequilibrium1,offequilibrium2}, allowing the network to process multiple tasks at once \cite{Agliari-Dantoni,Agliari-Isopi,Agliari-PRL1}).
In this context, particularly appealing works were inspired by Crick and Mitchinson's paper \cite{Crick}, where it was argued that the REM phase of sleep  in mammals may serve to delete all the (involuntarily stored) irrelevant information (in order to save memory and avoid overloading catastrophes). Further evidences toward this hypothesis were found both on the empirical \cite{onde,Rash,Diekerlmann,consolidation,unlearning4} and the theoretical \cite{DotsenkoDorotheyev,Horas,Albert,unlearning0,unlearning1,Plakhov,VanHemmen} level.
\par
A first crucial contribution to frame this idea in AI was achieved by Hopfield himself (with Feinstein and Palmer \cite{HopfieldUnlearning}), and can be summarized as follows:  the spin-glass transition occurring as $\alpha$ increases beyond $\alpha_c$ ultimately originates from the fact that the number of spurious states are exponentially more abundant than the number of pure states (regardless their depth in the free energy landscape), such that, making a quench from infinite to zero temperature, the system would get trapped with higher probability in one of these mixtures. By sampling a number of these final configurations, one can measure the average pairwise correlation $\langle \sigma_i \sigma_j \rangle_{\textrm{mix}}$, for any $i$ and $j$. Next, one updates the coupling matrix performing an {\it inverse} Hebbian rule so that these mixture states are effectively removed:
\be\label{eq:HopUn}
J_{ij} \to J_{ij} - \frac{\epsilon}{N}\langle \sigma_i \sigma_j \rangle_{\textrm{mix}} = \frac{1}{N}\sum_{\mu=1}^{P}\xi_i^{\mu}\xi_j^{\mu}- \frac{\epsilon}{N} \langle \sigma_i \sigma_j \rangle_{\textrm{mix}},
\ee
where $\epsilon$ is a tunable (but small) parameter called {\it unlearning strength}.
Such a procedure should be iterated in order to progressively clean the free-energy landscape from these traps.\footnote{We stress that the normalization factor $N^{-1}$ in front of the term $\langle \sigma_i \sigma_j \rangle_{mix}$ is appropriately chosen if the unlearning algorithm is iterated $O(N)$ times. We will deepen this point (the amplitude of the unlearning or consolidating rates) in  Section \ref{sezione-quattro} and in the Appendix \ref{app:convergence}.} The minus sign in (\ref{eq:HopUn}) is responsible for the so-called \emph{unlearning} process. A further motivation for the analogy with the REM phase is supplied by the fact that, during these REM phases, dreams are not entirely uncorrelated with the experiences we actually lived during the wakefulness state and there are (possibly weird) correlations between dreams and these experiences; a similar scenario happens in the artificial side as spurious states are just mixtures of patterns\footnote{The typical example is given by the symmetric mixtures of three patterns, that is $\sigma_i = \textrm{sign}(\xi_i^{1}+\xi_i^{2}+\xi_i^{3})$.} that unavoidably implies short-length correlations with the pure patterns.
In the same spirit as Hopfield's proposal, Plakhov and Semenov \cite{Plakhov} realized an unlearning algorithm by replacing the pure pairwise correlations between spins with correlations between inner fields, namely
\be\label{eq:unlearningpl}
J_{ij} \to J_{ij} - \epsilon\langle h_i h_j \rangle ,
\ee
where the average $\langle \cdot  \rangle$ is performed on a sample of randomly selected states in the configuration space with internal fields $h_i$. The main result is that, with a suitable choice of the unlearning strength, this algorithm is ensured to converge (up to scaling factors) to the projector (or pseudo-inverse) matrix
\begin{equation} \label{eq:J_K}
J_{ij}^{p} = \frac{1}{N}\sum_{\mu,\nu=1}^{P,P}\xi_i^{\mu}(C^{-1})_{\mu,\nu}\xi_j^{\nu},
\end{equation}
where
\begin{equation}
C_{\mu,\nu} = \frac{1}{N}\sum_{i=1}^{N}\xi_i^{\mu}\xi_i^{\nu},
\end{equation}
is the pattern correlation matrix. Notably, in this model, similar in spirit to the origina Kohonen idea \cite{kohonen} but closer in its statistical mechanical construction to the model introduced and studied by Kanter and Sompolinsky in \cite{KanterSompo}, the storage capacity reaches $\alpha_c=1$. A closely related model, studied by Dotskenko and coworkers \cite{DotsenkoDorotheyev,DotsenkoTirozzi}, is based on the following coupling matrix\footnote{While quite marginal in AI, it is still worth stressing that such a learning rule is no longer {\em local}, like the Hebbian prescription, in fact, the coupling between neurons $i$ and $j$ now depends on pattern entries related to all the neurons making up the system. In the biological world this point constitutes a modeling weakness, however Dotsenko and coworkers have shown how to bypass it in order to obtain roughly the same results \cite{DotsenkoDorotheyev}.
Another local algorithm able to converge to the projector matrix is the called Adeline learning rule (see \cite{Kinzel,VanHemmen} for an overview.)}
 \begin{equation} \label{eq:J_D}
J_{ij}({t}) = \frac{1}{N}\sum_{\mu,\nu=1}^{P,P}\xi_i^{\mu}(\mathbb{I} + t C)_{\mu,\nu}^{-1}\xi_j^{\nu},
\end{equation}
where $\mathbb{I}$ is the identity matrix and $t \in \mathbb{R}^+$ is a tuneable parameter. This model emerges as a continuous time limit ({\it i.e.} $\epsilon \sim dt$) of the unlearning rule \eqref{eq:unlearningpl}.
Most remarkably, it turns out that the maximal storage capacity increases as $t$ gets large.\footnote{Actually, the critical threshold found by \cite{DotsenkoDorotheyev} is approximately $1.07$. This is not to be meant as a violation of Garner's bound: the overflow is due to the  underlying replica-symmetry approximation.}
However, as $t$ gets larger and larger (which is the interesting limit in order to see the maximal capacity), the coupling matrix identically vanishes. As a result, on one side the retrieval region (see Fig.~\ref{fig:consVSdot}, right panel) is stretched toward higher values in $\alpha$ with respect to the Hopfield reference (see Fig.~\ref{fig:equivalenza}), but on the other side it is also confined to smaller values of $T$. This effect gets more pronounced as $t$ is increased, resulting in the total disappearance of the retrieval region. %
\par
One of the main contributions of the present work is to extend these unlearning approaches by simultaneously allowing also for reinforcement of the pure states \cite{RL1,RL2}. As we will see, this confers an extra-stability of these states against the fast noise, finally resulting in a sensibly enlarged and more robust retrieval region (with respect to the Hopfield reference and all the past extensions).
This result also suggests that, for a smart storage of information, remotion alone does not suffice: a suitable reinforcement is also in order.

\newpage

\section{{\em Unlearning$\&$Consolidating}: Focusing on Neurons}

\subsection{Model's definition, free energy and self-consistency equations}\label{replica-simmetric-theory}
Our investigation is based on the works by Personnaz, Guyon, Dreyfus \cite{Personnaz}, by Kanter and Sompolinksy \cite{KanterSompo}, and by Dotsenko et al. \cite{DotsenkoDorotheyev,DotsenkoTirozzi}. Along the same lines, we consider a network composed by $N$ neurons $\{ \sigma_i \}_{i=1,...,N}$, with $\sigma_i \in \{-1,+1\}$ $\forall i$, and $P$ patterns $\{\xi^{\mu}\}$, with $\xi_i^{\mu} \in \{-1,+1\}$ $\forall i,\mu$. Denoting with $t \in \mathbb{R}^+$ the sleep extent, we propose the following
\begin{Definition}
The {\em reinforcement$\&$removal} algorithm we propose has the following Hamiltonian representation:\footnote{As a matter of notation, we stress that the denominator $1/(\mathbb I+tC)$ in the generalized kernel is intended as the inverse matrix $(\mathbb I+tC)^{-1}$.}
\be\label{new-model}
H_{N,P}(\sigma|\xi,t)= - \frac{1}{2N}\sum_{i=1}^{N}\sum_{j=1}^{N}\sum_{\mu=1}^{P}\sum_{\nu=1}^{P}\xi_i^{\mu}\xi_j^{\nu}\left( \frac{1+t}{\mathbb{I}+t C} \right)_{\mu,\nu} \sigma_i \sigma_j,
\ee
where the $P$ patterns $\{\xi^{\mu}\}_{\mu=1,...,P}$, have $N$ binary entries $\xi_i^{\mu} \in \{-1,+1\}$, with $i \in (1,...,N)$, drawn from
$$
P(\xi_i^{\mu}=+1) = P(\xi_i^{\mu}=-1) = \frac12,
$$
and the correlation matrix is defined as
$$
C_{\mu,\nu} \equiv \frac{1}{N}\sum_{i=1}^{N}\xi_i^{\mu}\xi_i^{\nu}.
$$
\end{Definition}
Note that the interpretation of $t$ as the sleep extent is clear: for $t=0$  the system reduces to the standard Hopfield model, while for $t\rightarrow \infty$ the system approaches the pseudo-inverse matrix model (see the Appendix \ref{app:convergence} for the analytical proof). Remarkably, during the sleeping session, both reinforcement and remotion take place. In fact, in the generalized kernel appearing in \ref{new-model}, 
 the denominator ({\it i.e.}, the term $\propto (1+tC)^{-1}$) yields to the remotion of unwanted mixture states, while the numerator ({\it i.e.}, the term $\propto 1+t$) reinforces the memories. We refer to Secs. \ref{sec:separate} and \ref{sezione-quattro} for a more extensive discussion.
\par
In this Section,  we are instead interested in obtaining the phase diagram of our model, thus to compute explicitly -and extremize over the order parameters- the model's free energy (in the thermodynamic limit and under the replica symmetric assumption). The partition function of such a model is
\be
Z_{N,P}(\sigma|\xi,t)  = \sum_{\{\sigma\}} e^{-\beta H_{N,P}(\sigma|\xi,t)} = \sum_{\{ \sigma \}}\exp\left[ \frac{\beta}{2N }\sum_{i,j=1}^{N,N}\sum_{\mu,\nu=1}^{P,P}\xi^\mu _i \xi ^\nu _j \left( \frac{1+t}{\mathbb{I} +t  C}\right)_{\mu,\nu} \sigma_i \sigma_j\right].
\ee
by which we can introduce the main observable, namely
\begin{Definition}
The infinite volume limit of the intensive free energy $A(\alpha,\beta,t)$ associated to the model (\ref{new-model}) is defined as
\be
A(\alpha,\beta,t) =- \lim_{N \to \infty} \frac1{\beta N} \mathbb{E} \ln  Z_{N,P}(\sigma|\xi,t).
\ee
\end{Definition}
\begin{Remark}
The ``temporal variable'' $t$ within an (equilibrium) statistical mechanical theory may look weird, yet it should be noticed that the timescale for a sleeping session is much longer than the typical time scale for neuronal dynamics.\footnote{The latter, at least within a biological context, is fixed around $O(10^2)$ Hertz, namely the typical spiking time (considering also the absolute refractory period of a biological neuron).}
\end{Remark}

\subsubsection{Replica-symmetric scenario through statistical mechanics}

The replica symmetric assumption means that, in the thermodynamic limit $N \to \infty$, the order parameters self-average over their averages (denoted hereafter by a bar), {\it i.e.} $\lim_{N \to \infty}P(q)=\delta(q-\bar{q})$ and $\lim_{N \to \infty}P(\bold{m})=\delta(m- \bold{\bar{m}})$, in such a way that the related fluctuations can be discarded. Although this is a {\em reasonable} assumption, we actually know that in mean-field spin-glasses replica symmetry is broken at low temperatures. However, the effects of replica symmetry breaking are expected to be mild in associative neural networks \cite{Amit} and replica symmetry is the standard level of approximation in the statistical mechanical analysis of these models.
\par
Using the standard approach of replica technique ({\it i.e.} the so-called {\em replica trick} \cite{Amit,Coolen}), we write the large $N$ free-energy $A(\alpha,\beta,t)$
\be
A(\alpha,\beta,t) = -\lim_{ N \rightarrow \infty}\frac{1}{\beta N}\mathbb{E}\log Z_{N,P}(\sigma|\xi,t) =- \lim_{\substack{n\rightarrow 0 \\ N \rightarrow \infty}}\frac{\mathbb E Z_{N,P}(\sigma|\xi,t)^n -1}{\beta nN}.
\ee
Throughout the paper, we shall assume that the candidate pattern to be retrieved is $\xi^1$ and $\xi^\mu$ for $\mu \ge 2$ contribute to the slow noise, therefore here $\mathbb E$ is the average over the $P-1$ not-retrieved patterns. The quenched average of the replicated partition function can be represented in Gaussian integral form as
\be\label{eq:boltzmann}
\begin{split}
\mathbb E Z_{N,P}(\sigma|\xi,t)^n &=\mathbb E \mathcal \prod_{n=1}^\alpha \mathcal{C} \sum_{\{ \sigma^1 \} }\dots \sum_{\{ \sigma ^n \} } \int \Big(\prod_{\mu, \alpha=1}^{P,n}Dz_\mu^\alpha\Big)\Big(\prod_{i,\alpha=1}^{N,n}D\phi_i^\alpha\Big)\cdot \\ &\cdot \exp\Big(\sqrt{\frac{\beta}{N} (t+1)}\sum_{\mu, i, \alpha=1}^{P,N,n}z_\mu ^\alpha \xi^\mu _i \sigma_i ^\alpha +i\sqrt{\frac{t}{N}} \sum_{\mu, i, \alpha=1}^{P,N,n}z_\mu ^\alpha \xi^\mu _i \phi _i ^\alpha \Big),
\end{split}
\ee
where $\mathcal P (z_\mu ^\alpha)= \mathcal P (\phi_i ^\alpha)= \mathcal N(0,1)$ and $\mathcal C=\det^{1/2}(\mathbb I + t C)$ is a normalization constant compensating the prefactors of the Gaussian integrations and trivially contributing to the free energy. %
This model strongly resembles \cite{DotsenkoDorotheyev}, with the only difference that - in the first term - here we have $\beta(1+t)$ (instead of $\beta$) realizing an optimal tuning between the two 2-body couplings of the relevant variables. As we will see, this scaling is crucial to keep the thermodynamic properties of the model stable, since it ensures that the critical temperature at zero load stays fixed at $\beta_c=1$ as $t$ is tuned. Thus, this interpolation between the Hopfield model and the pseudo-inverse one automatically prevents the collapse of the retrieval region on the horizontal axis in the phase diagram. %
In the Appendix \ref{app:replica}, we report in details the calculations of the free energy $A(\alpha,\beta,t)$  of the model, while here we provide just the explicit expression (ignoring trivial contributions):
\be\label{eq:fgeneral}
\begin{split}
	A(\alpha,\beta,t) &= \frac{1}{2n(1+t)}\sum_{\alpha=1}^n (m_1 ^\alpha)^2+\frac{\alpha \beta}{2n}\sum_{\alpha,\beta=1}^{n,n} p_{\alpha\beta}q_{\alpha\beta}+\frac{\alpha}{2n\beta}\log\det \left[ \mathbb I-\beta(1+t)\hat q \right]\\
&-\frac{1}{n\beta}\mathbb E\ln \sum_{ \{ \sigma \} }  \int\Big(\prod_{\alpha=1}^n D\phi^{\alpha}\Big)\exp\Bigg[\beta \sum_{\alpha=1}^n m_1^\alpha \xi^1\Big(\sigma^\alpha+i\sqrt{\frac{t}{\beta(1+t)}}\phi^\alpha\Big)\\
&+\frac{\alpha \beta^2}{2}\sum_{\alpha,\beta=1}^{n,n}p_{\alpha\beta}\Big(\sigma^\alpha+i\sqrt{\frac{t}{\beta(1+t)}}\phi^\alpha\Big)\Big(\sigma^\beta+i\sqrt{\frac{t}{\beta(1+t)}}\phi^\beta\Big)\Bigg],
\end{split}
\ee
where $m_1^\alpha$ is the Mattis magnetization (of the $\alpha$-th replica) associated to the pattern $\xi^1$ to be retrieved, $\hat q$ is the overlap matrix whose element $q_{\alpha\beta}$ is the generalized overlap between the replicas of the system (labeled with $\alpha$ and $\beta$), and it is defined as \cite{DotsenkoDorotheyev}
\be\label{eq:overlap}
q_{\alpha\beta}= \frac{1}{N}\sum_i \Big(\sigma^\alpha+i\sqrt{\frac{t}{\beta(1+t)}}\phi^\alpha\Big)\Big(\sigma^\beta+i\sqrt{\frac{t}{\beta(1+t)}}\phi^\beta\Big),
\ee
with $p_{\alpha\beta}$ its conjugate variables \cite{Coolen}.
\par
Imposing replica symmetry
\begin{subequations}
	\begin{align}
	m_1^\alpha &=m\quad \forall \alpha, \\
	q_{\alpha\beta} &=Q\delta_{\alpha\beta}+q(1-\delta_{\alpha\beta}), \\
	p_{\alpha\beta} &=P\delta_{\alpha\beta}+p(1-\delta_{\alpha\beta}),
	\end{align}
\end{subequations}
after straightforward computations, we can finally state the next
\begin{Proposition}
The infinite volume limit of the replica-symmetric free energy for the model (\ref{new-model}), expressed in terms of the order parameters $m$ and $q$, reads as
\be\label{eq:frsa}
\begin{split}
A(\alpha,\beta,t) &=\frac{m ^2}{2(1+t)} \Big(1+\frac{t}{\Delta}\Big)+\frac{(1+t)(\Delta-1)}{2t}Q + \frac{\alpha\beta }{2}p (Q-q)\\&
+\frac{\alpha}{2\beta}\Big(\log[1-\beta(1+t)(Q-q)]-\frac{q\beta(1+t)}{1-\beta(1+t)(Q-q)}\Big)+\frac{(1+t)(1-\Delta)}{2 t \Delta}\\&+
\frac{\log \Delta}{2\beta}+\frac{\alpha p t}{2(1+t)\Delta}-\frac{1}{\beta} \int Dx \log \cosh \Big[\frac{\beta }{\Delta}(m+\sqrt{\alpha p}x)\Big]-\frac{\log 2}{\beta},
\end{split}
\ee
where $Dx$ is the Gaussian measure and $\Delta= 1+\alpha \beta t(1+t)^{-1}(P-p)$.
\end{Proposition}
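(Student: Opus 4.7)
The plan is to substitute the replica-symmetric ansatz into the general replica expression (\ref{eq:fgeneral}) and carry out the resulting algebraic and Gaussian manipulations term by term, taking $n \to 0$ at the end. The prefactor $\frac{1}{2n(1+t)}\sum_\alpha (m_1^\alpha)^2$ collapses immediately to $m^2/[2(1+t)]$, while $\frac{\alpha\beta}{2n}\sum_{\alpha,\beta}p_{\alpha\beta}q_{\alpha\beta}$ becomes $\frac{\alpha\beta}{2}(PQ-pq)$ via $\sum p_{\alpha\beta}q_{\alpha\beta}=nPQ+n(n-1)pq$. For the $\log\det$ term, I would diagonalize $\mathbb{I}-\beta(1+t)\hat q$ (one eigenvalue $1-\beta(1+t)[Q+(n-1)q]$, and $n-1$ degenerate eigenvalues $1-\beta(1+t)(Q-q)$), so that its $\frac{1}{n}\log$ admits a clean expansion around $n=0$ yielding $\log[1-\beta(1+t)(Q-q)]-q\beta(1+t)/[1-\beta(1+t)(Q-q)]$.

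The main effort lies in the effective one-site integral in the last line of (\ref{eq:fgeneral}). Under RS the quadratic form in $\sigma^\alpha+iy\phi^\alpha$, with $y=\sqrt{t/[\beta(1+t)]}$, splits into a diagonal piece $\propto P-p$ and an off-diagonal piece $p\bigl(\sum_\alpha(\sigma^\alpha+iy\phi^\alpha)\bigr)^2$. I would linearize the latter via a Hubbard--Stratonovich transformation introducing an auxiliary Gaussian variable $x$ that fully decouples the replicas, so that the exponent factorizes into a product over $\alpha$ of single-replica terms driven by the effective field $A=\beta m\xi^1+\beta\sqrt{\alpha p}\,x$ and the diagonal coefficient $B=\alpha\beta^2(P-p)/2$.

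For each replica, exploiting $\sigma^2=1$ to reduce $(\sigma+iy\phi)^2=1-y^2\phi^2+2iy\sigma\phi$, the Gaussian integration over $\phi^\alpha$ produces a factor $\Delta^{-1/2}$ with $\Delta=1+2By^2=1+\alpha\beta t(P-p)/(1+t)$, together with an effective coupling to $\sigma$ equal to $A(1-2y^2B/\Delta)=A/\Delta$ (a key simplification, since $\Delta-2By^2=1$). Summing $\sigma=\pm 1$ then gives $2\cosh(A/\Delta)$, and exchanging $n\to 0$ with $\log\int Dx$ yields the $\log\cosh$ term, along with $x$-averaged constant pieces coming from $-y^2A^2/(2\Delta)$ and $-2y^2B^2/\Delta$.

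The main obstacle will be the bookkeeping of the numerous constants and their reorganization into the stated form. Specifically, $\mathbb{E}\int Dx\,A^2=\beta^2(m^2+\alpha p)$ must be combined with the $-y^2/(2\Delta)$ prefactor to produce both the correction $tm^2/[2(1+t)\Delta]$ (which combines with the original $m^2/[2(1+t)]$ into the factor $1+t/\Delta$) and the $\alpha pt/[2(1+t)\Delta]$ term. Similarly, the constant $B$ and the contribution $2y^2B^2/(\beta\Delta)$, when expressed through $\Delta-1=\alpha\beta t(P-p)/(1+t)$, combine into $(1+t)(1-\Delta)/(2t\Delta)$. Finally, using the same identity, the residual $-\alpha\beta pq/2$ from the quadratic $p$-$q$ term recombines with $\alpha\beta(P-p)Q/2=(1+t)(\Delta-1)Q/(2t)$ to produce the $\frac{\alpha\beta}{2}p(Q-q)$ and $(1+t)(\Delta-1)Q/(2t)$ pieces appearing in (\ref{eq:frsa}). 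Collecting all contributions reproduces the claimed replica-symmetric free energy.
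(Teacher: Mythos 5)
Your proposal is correct and follows essentially the same route as the paper's Appendix~\ref{app:replica}: substitution of the RS ansatz into Eq.~(\ref{eq:fgeneral}), diagonalization of $\mathbb I-\beta(1+t)\hat q$ for the $\log\det$ piece, Hubbard--Stratonovich decoupling of the replicas in the one-site term, Gaussian integration over $\phi$ exploiting $\Delta-2By^2=1$, and the final $n\to 0$ bookkeeping that recombines the constants into the stated form. The paper merely lists the term-by-term results without exhibiting these intermediate identities, so your write-up fills in the same computation at a finer level of detail rather than taking a different path.
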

The self-consistency equations for the model \ref{new-model} are derived by imposing the extremal condition for the free energy \ref{eq:frsa} with respect to the five order parameters, so we arrive at the following
\begin{Proposition}
The self-consistency equations read
\begingroup\makeatletter\def\f@size{9.5}\check@mathfonts
\begin{subequations}\label{eq:sceqs}
	\begin{align}
	m &=\frac{1+t}{\Delta+t}\int Dx \tanh\Big[\frac{\beta}{\Delta}(m+\sqrt{\alpha p}x)\Big],\label{eq:a} \\
	p &=\frac{q(1+t)^2}{[1-\beta(1+t)(Q-q)]^2}, \\
	\Delta &=1+\frac{\alpha t}{1-\beta(1+t)(Q-q)},\label{eq:cc}  \\
	\label{eq:d}
	q &=Q+\frac{t}{\beta (1+t)\Delta}-\frac{1}{\Delta^2}\int Dx \cosh^{-2}\Big[\frac{\beta}{\Delta}(m+\sqrt{\alpha p}x)\Big], \\
	Q \Delta^2 &=1-\frac{t\Delta}{\beta(1+t)}+\frac{\alpha p t^2}{(1+t)^2} -\frac{m^2 t(t+2\Delta)}{(1+t)^2}
	-\frac{2\alpha\beta p t}{(1+t)\Delta}\int Dx \cosh^{-2}\Big[\frac{\beta}{\Delta}(m+\sqrt{\alpha p}x)\Big].
	\end{align}
\end{subequations}
\endgroup
\end{Proposition}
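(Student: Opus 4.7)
The plan is to derive the five self-consistency equations directly from the stationarity of the replica-symmetric free energy \eqref{eq:frsa}. It is convenient to trade the conjugate variable $P$ for $\Delta$ itself and to treat $(m,q,Q,p,\Delta)$ as the five independent order parameters: this is legitimate because $\partial\Delta/\partial P=\alpha\beta t/(1+t)\neq 0$, so the relation $\Delta=1+\alpha\beta t(1+t)^{-1}(P-p)$ just fixes $P$ post hoc. Writing $f\equiv 1-\beta(1+t)(Q-q)$ and
\[
I_1\equiv\int Dx\,\tanh\!\Big[\tfrac{\beta}{\Delta}(m+\sqrt{\alpha p}\,x)\Big],\qquad I_2\equiv\int Dx\,\cosh^{-2}\!\Big[\tfrac{\beta}{\Delta}(m+\sqrt{\alpha p}\,x)\Big],
\]
the key technical tool throughout is the Gaussian integration by parts $\int Dx\,x\,g(x)=\int Dx\,g'(x)$, which converts $x$-moments of $\tanh[\cdots]$ into integrals of $\cosh^{-2}[\cdots]$.

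Step 1 ($\partial A/\partial m=0$). Only the quadratic term and the log-cosh integral depend on $m$, so the derivative yields $m(\Delta+t)/[(1+t)\Delta]=I_1/\Delta$, i.e.\ equation \eqref{eq:a}. Step 2 ($\partial A/\partial q=0$). The contribution from $\log f$ cancels against the first piece of the derivative of $-q\beta(1+t)/f$, leaving $\alpha\beta p/2=\alpha q\beta(1+t)^2/(2f^2)$, which is the $p$-equation. Step 3 ($\partial A/\partial p=0$). The nontrivial derivative of the log-cosh, after integration by parts in the $\int Dx\,x\tanh[\cdots]$ term, contributes $-\alpha\beta I_2/(2\Delta^2)$; combining this with the two algebraic $p$-terms and solving for $q$ reproduces \eqref{eq:d}.

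Step 4 ($\partial A/\partial Q=0$). This produces $(1+t)(\Delta-1)/(2t)+\alpha\beta p/2-\alpha(1+t)/(2f)-\alpha q\beta(1+t)^2/(2f^2)=0$; substituting the Step-2 identity for $p$ makes the second and fourth terms cancel, and what remains is $\Delta=1+\alpha t/f$, which is exactly \eqref{eq:cc}. Step 5 ($\partial A/\partial\Delta=0$) is the bookkeeping-heavy part, as every non-constant term in \eqref{eq:frsa} contributes. The integration-by-parts identity gives $\int Dx\,(m+\sqrt{\alpha p}x)\tanh[\cdots]=mI_1+(\alpha p\beta/\Delta)I_2$; inserting $mI_1=m^2(\Delta+t)/(1+t)$ from Step 1, multiplying through by $2\Delta^2$, and rearranging by a factor $t/(1+t)$ casts the resulting equation precisely in the stated closed form for $Q\Delta^2$.

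I expect no conceptual obstacle: the derivation is an entirely explicit calculus exercise on the expression \eqref{eq:frsa}, with the Stein identity as the sole auxiliary input. The delicate points are the signs in the chain-rule derivatives of $1/f$ (Steps 2 and 4) and the careful accounting of all $\Delta$-dependent prefactors in Step 5, where the many cancellations must line up correctly; apart from these, the proof is bookkeeping.
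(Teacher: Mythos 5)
Your proposal is correct and is exactly the route the paper takes (the paper simply asserts that the equations follow from extremizing the replica-symmetric free energy \eqref{eq:frsa} over the five order parameters, leaving the calculus implicit); your reparametrization of $P$ by $\Delta$ is legitimate since $\partial\Delta/\partial P\neq 0$, and each of your five steps, including the Stein-identity manipulations and the cancellation $\frac{m^2t^2}{(1+t)^2}-\frac{2m^2t(\Delta+t)}{(1+t)^2}=-\frac{m^2t(t+2\Delta)}{(1+t)^2}$ in Step 5, checks out against the stated system \eqref{eq:sceqs}.
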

By studying these equations it is possible to derive the phase diagram related to the cost-function (\ref{new-model}). This point will be achieved in Sec. \ref{sec:RS-FD}.
\begin{Remark}
For $t \to 0$, both the free energy \eqref{eq:frsa} and the self-consistency equations \eqref{eq:sceqs} reduces to the AGS ones as they should.
\end{Remark}

\subsection{Remotion {\it or} Reinforcement: a separate analysis}\label{sec:separate}
In order to better analyze the structure of our model, we split the whole Hamiltonian \eqref{new-model} in two by considering separately the contributions coming from the numerator (reinforcement) and the denominator (remotion) in the generalized kernel. 
In other words, we take into account the following cost-functions separately to show that, when isolated, none of them constitutes a major breakthrough, that appears solely when these two features are left to work together (as we will prove later).
\begin{subequations}
\begin{align}
H^{(1)}_{N,P}&= -\frac{1}{2}\sum_{\mu} \sum_{ij} \xi^\mu _i \xi^\mu _j (1+t) \sigma_i \sigma_j,\\
H^{(2)}_{N,P}&= -\frac{1}{2}\sum_{\mu\nu}\sum_{ij} \xi^\mu_i \xi^\nu_j (\mathbb I + t C)^{-1} _{\mu,\nu} \sigma_i \sigma_j.
\end{align}
\end{subequations}
\begin{itemize}
\item Concerning the first cost-function, due to reinforcement, it is evident that the only net effect it may induce (when playing along) is to stretch the minima landscape by amplifying the energetic gaps by a factor $(1+t)$.  The model is formally identical to the Hopfield one with a rescaled thermal noise $\tilde \beta=\beta(1+t)$: this implies that the zero-capacity critical temperature is given by $\tilde T_c = \tilde \beta_c ^{-1} = 1$, namely $T_c=(1+t)$. See Figure \ref{fig:consVSdot} (left panel).

\item Concerning the second cost-function, due to mixtures removal, this is precisely  the coupling matrix \eqref{eq:J_D}  whose statistical mechanics has been deeply analyzed in \cite{DotsenkoDorotheyev} (in the standard replica-symmetric regime). This model emerges as a continuous-time limit of the unlearning procedure analyzed by Plakov and Semenov \cite{Semenov1} and it is thus natural to link this model to unlearning features. An important point is that the zero-capacity critical temperature for the transition between the retrieval and spin-glass phases is $T_c=(1+t)^{-1}$, therefore in the large unlearning time limit the former is mashed on the $\alpha$ axes and, actually, no robustness is retained. See Figure \ref{fig:consVSdot} (right panel).

\end{itemize}

\begin{figure}[t!]
	\centering
	\begin{minipage}[c]{.49\textwidth}
		\centering
		\includegraphics[width=\textwidth]{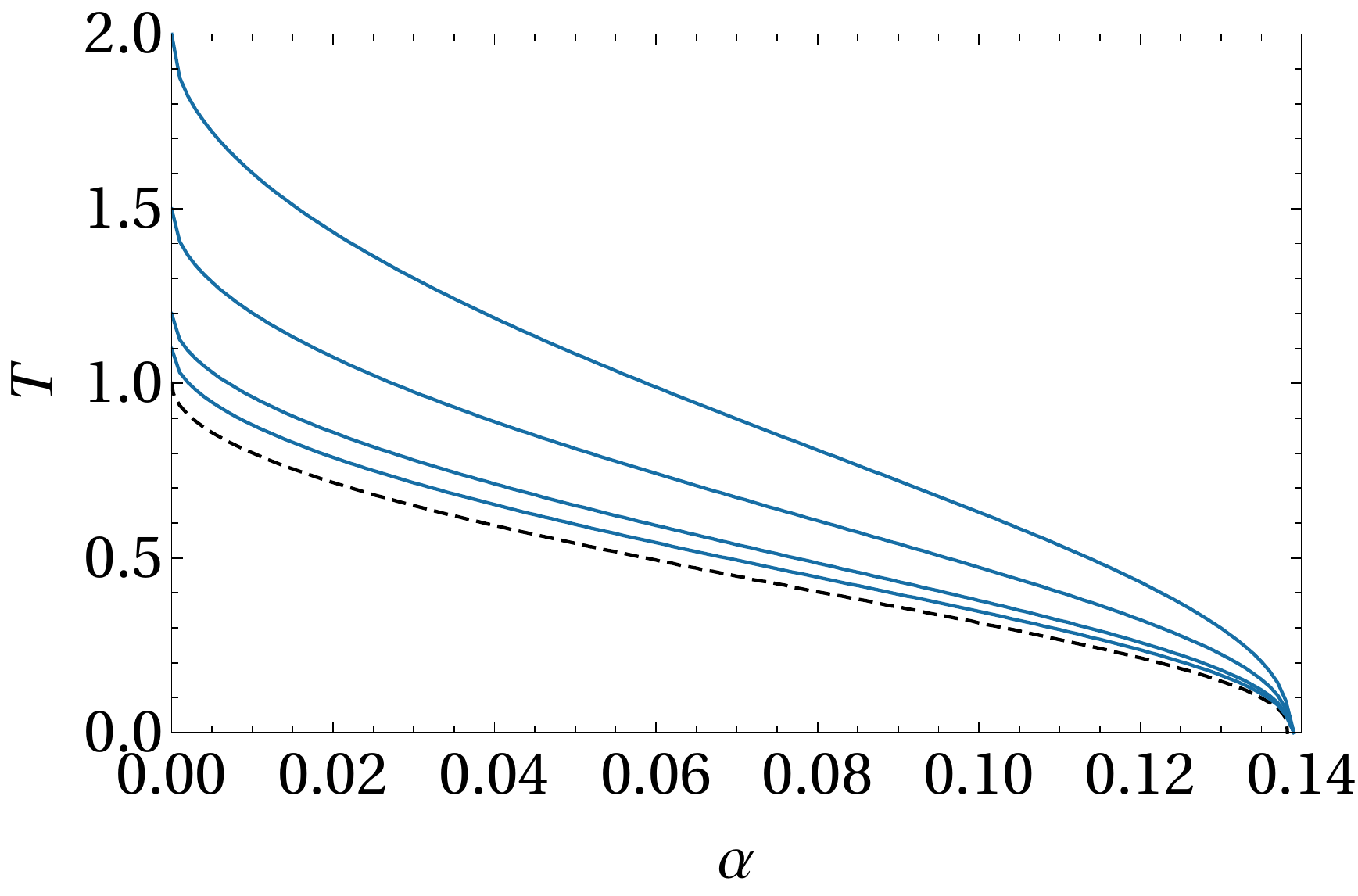}
	\end{minipage}
	\begin{minipage}[c]{.49\textwidth}
		\centering
		\includegraphics[width=\textwidth]{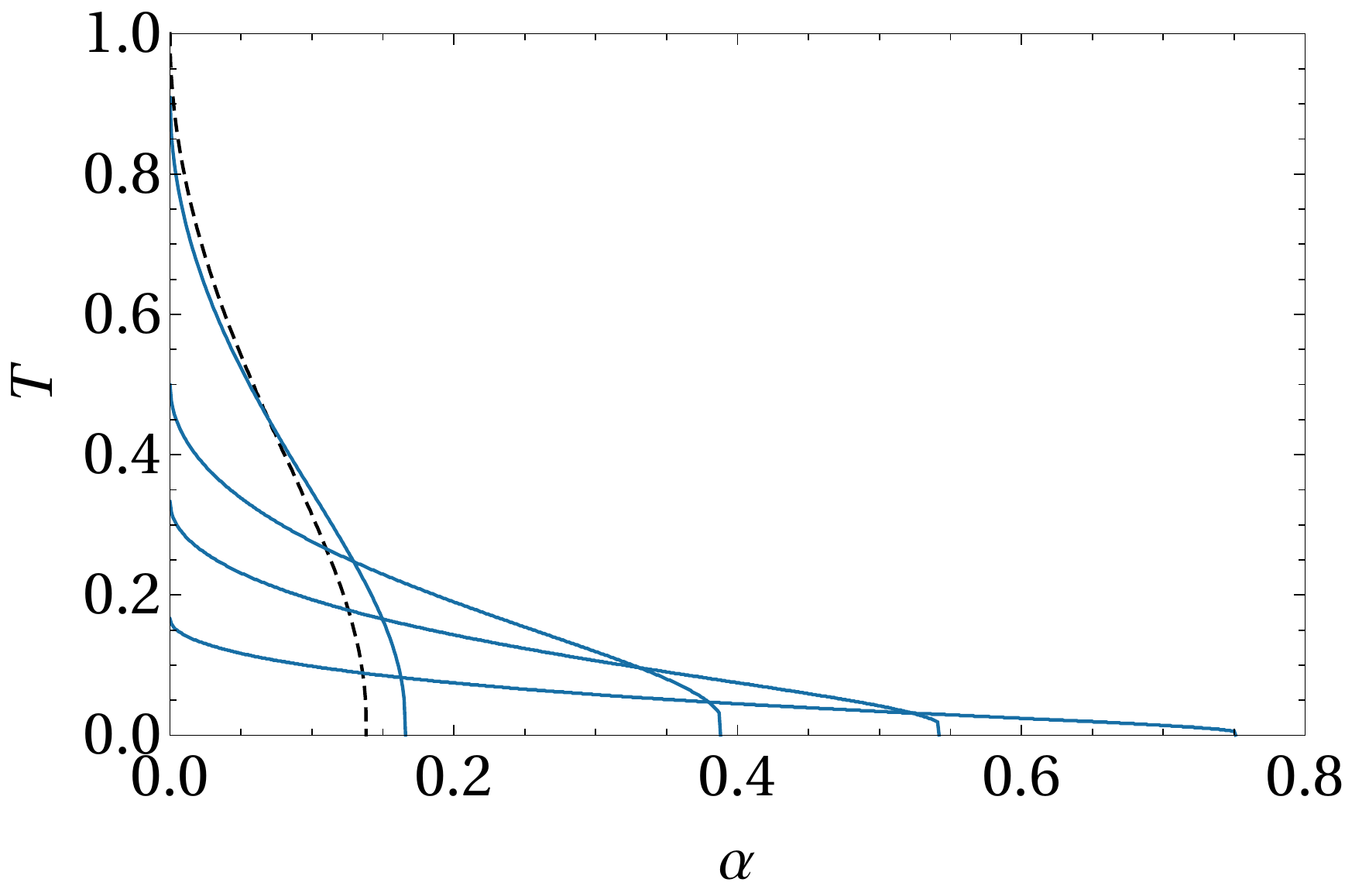}
	\end{minipage}
	\caption{{\bfseries Reinforcing and unlearning models.} Left: the plot shows the retrieval regions for the reinforcing model $H^{(1)}$ for $t=0$ (Hopfield), $0.1$, $0.2$, $0.5$ and $1$. The critical temperature in the zero-capacity limit is $T_c=(1+t)$ and this trivial shift in the critical temperature is the solely novelty of this model. Right: the plot shows the retrieval regions for the Dotsenko model as also discussed in \cite{DotsenkoDorotheyev}. The critical temperature grows with $t$, by the critical temperature in the zero-capacity limit decreases as $T_c =(1+t)^{-1}$, so that the retrieval regions are mashed on the horizontal axes.}\label{fig:consVSdot}
\end{figure}
With these ideas in mind, it is also reasonable to expect that - in the full model \eqref{new-model} - the mashing effect of unlearning can be compensated by the rescaling of the thermal noise, therefore giving an optimal balance between the Reinforcement and the Removal features. The evaluation of the phase diagram for our model is presented in the next Section.

\subsection{Zero-temperature (noise-less) critical capacity}
The first point we would like to analyze is the critical capacity in the vanishing temperature limit ($\beta \to \infty$). As standard in this case, it is convenient to introduce $c\equiv\beta(Q-q)$ quantifying the difference between diagonal and non-diagonal replica overlaps. From Eq.~\ref{eq:cc} it s easy to verify that this quantity satisfies the self-consistency equation
\be \label{eq:c}
c=\frac{\beta}{\Delta^2}\int Dx \cosh^{-2}\Big[\frac{\beta}{\Delta}(m+\sqrt{\alpha p}x)\Big]-\frac{t}{ (1+t)\Delta}.
\ee
Using the equation for $\Delta$ in the zero temperature limit, with simple arguments it is easy to check that $c$ is finite and, consequently, $q\rightarrow Q$ as $T \to 0$.
Now, since the hyperbolic tangent in \eqref{eq:a} tends to the error function,
after some rearrangement we end with the simplified set of equations
\begin{subequations}
	\begin{align}
	m&=\frac{1+t}{\Delta+t}\text{erf}\left(\frac{m}{\sqrt{2 \alpha p}}\right),\\
	p&=\frac{Q(1+t)^2}{[1-(1+t)c]^2},\\
	\Delta &= 1+\frac{\alpha t}{1-(1+t)c},\\
	c&=\frac{1}{\Delta}\sqrt{\frac{2}{\pi \alpha p}}\exp\left(-\frac{m^2}{2\alpha p}\right)-\frac{t}{\Delta(1+t)},\\
	Q\Delta^2 &= 1+\frac{\alpha p t^2}{(1+t)^2}-\frac{m^2 t (t+2\Delta)}{(1+t)^2}-\frac{2\alpha t}{1+t}\sqrt{\frac{2}{\pi \alpha p}}\exp\left(-\frac{m^2}{2\alpha p}\right).
	\end{align}
\end{subequations}
\begin{figure}[b!]
	\centering
	\begin{minipage}[c]{.49\textwidth}
		\centering
		\includegraphics[width=\textwidth]{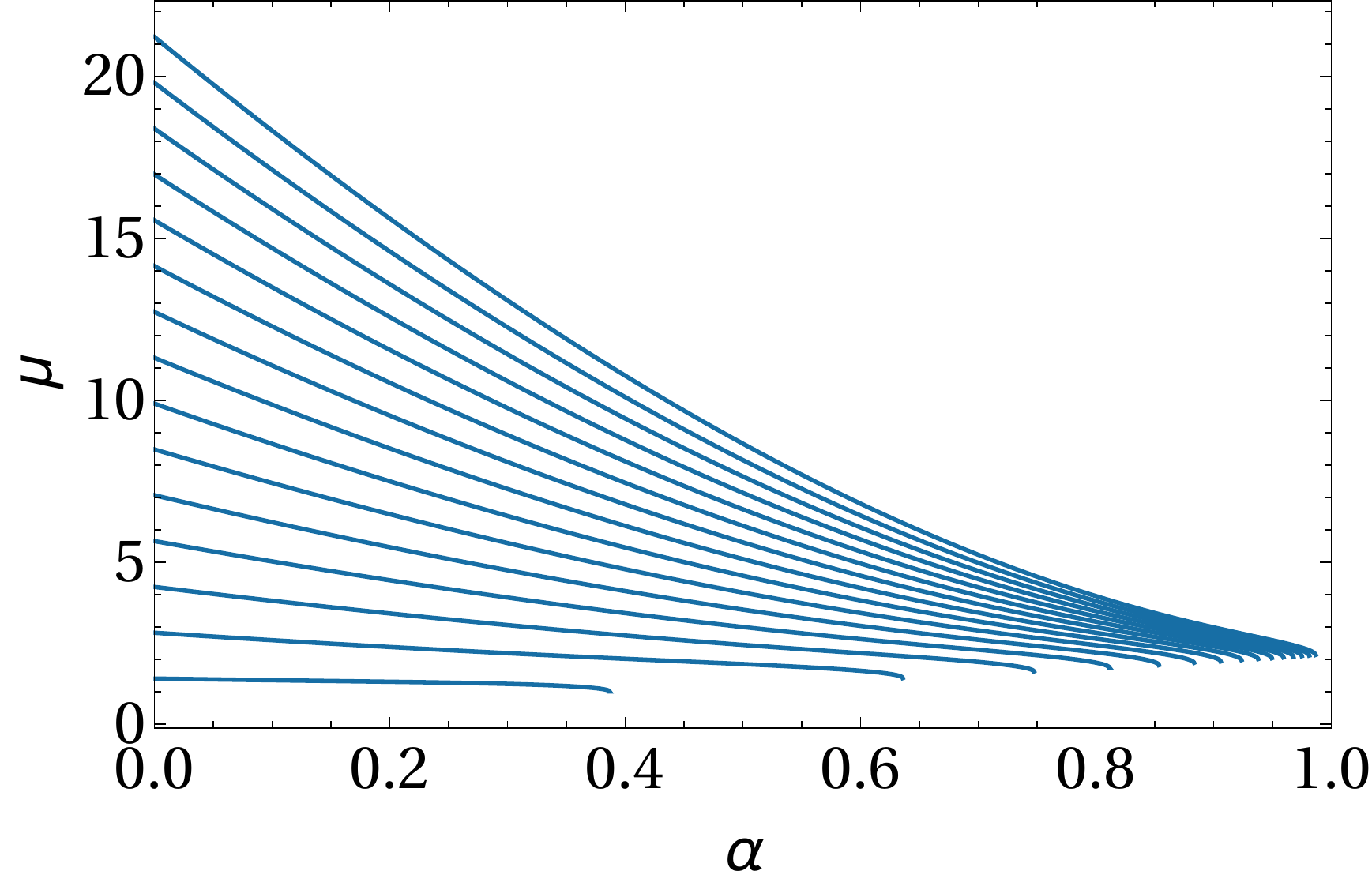}
	\end{minipage}
	\begin{minipage}[c]{.49\textwidth}
		\centering
		\includegraphics[width=\textwidth]{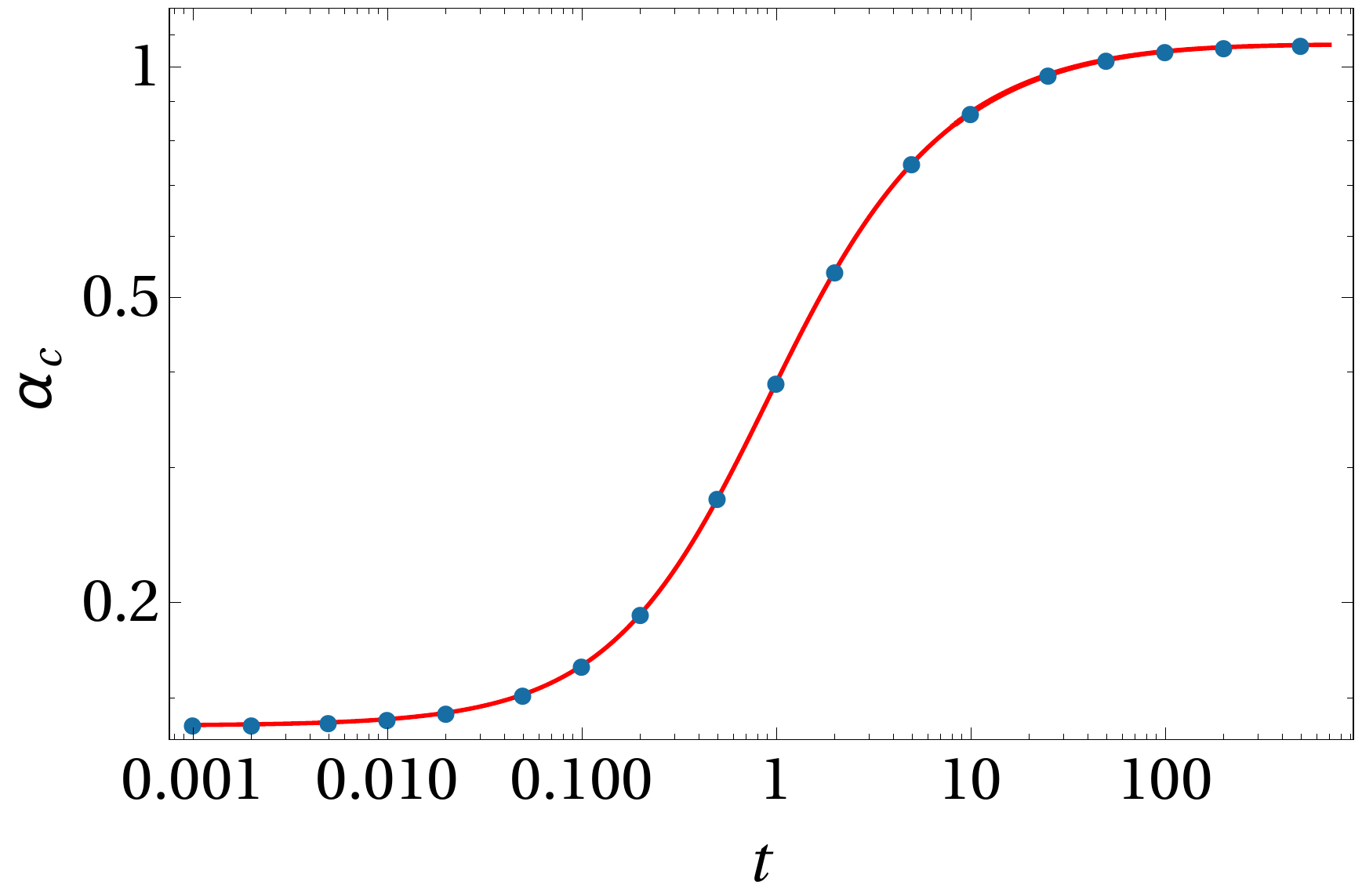}
	\end{minipage}
	\caption{{\bfseries Zero-temperature analysis of the critical capacity.} Left panel: numerical solutions for $\mu$ of the self-consistency equations in the zero temperature limit (\ref{eq:sc_T0}) for several unlearning times: $t=1,3,\dots, 29$. Right panel: temporal dependence of the critical capacity at zero temperature. The blue dots represent the storage capacity beyond which the only possible solution has $\mu=0$, {\it i.e.} the end-points of the curves in the left plot). The red curve is the fit given by $y = x/(x + a)$, with $a=2.84 \pm 0.01$ obtained by first normalizing data in $[0,1]$, namely $\alpha_c \rightarrow [\alpha_c-\min(\alpha_c)]/[\max(\alpha_c) - \min(\alpha_c)]$.}\label{fig:zeroT}
\end{figure}
Introducing the quantities $\mu = {m}({2p})^{-1/2}$, $\Pi= p^{-1/2}$ and eliminating $Q$, we recast the original set of equations as
\begin{subequations} \label{eq:sc_T0}
	\begin{align}
	\mu &= \frac{\Pi}{\sqrt{2}}\frac{1+t}{\Delta+t}\text{erf}\left(\frac{\mu}{\sqrt{\alpha}}\right),\\
	\Delta&= 1+\frac{\alpha t}{1-(1+t)c},\\
	c&= \frac{\Pi}{\Delta}\sqrt{\frac{2}{\pi \alpha}}\exp\left(-\frac{\mu^2}{\alpha}\right)-\frac{t}{\Delta(t+1)},\\
	\Delta^2[1-(1+t)c]^2&=\Pi^2 (1+t)^2+\alpha t^2 -2\mu^2 t (t+2\Delta)-2\alpha t (1+t)\Pi \sqrt{\frac{2}{\pi \alpha}}\exp\left(-\frac{\mu^2}{\alpha}\right).
	\end{align}
\end{subequations}
Since $\mu$ is proportional to $m$ and since $p$ is finite for any $t$, solutions with $\mu \neq 0$ correspond to retrieval solutions (since $m \neq 0$). Searching for solutions of \eqref{eq:sc_T0} with $\mu\neq0$ for given $t$ is therefore equivalent to determine the upper bound for the storage capacity $\alpha_c (t)$. We solved these equations numerically\footnote{Note that, for $\alpha \sim 0$, we have the behaviors $\Delta\sim1$, $c\sim -t/(t+1)$, $\mu \sim 2^{-1/2}(1+t)$ and $\Pi \sim 1+t$.} for $t \in (1,1000)$ and we reported the solutions in Fig. \ref{fig:zeroT} (left panel). The end point of each curve separates the $\alpha$ axis in the regions with respectively $\mu\neq 0$ and $\mu =0$, therefore identifying the critical capacity for each fixed $t$ value. We report the critical capacity $\alpha _c$ as a function of the sleep extent $t$ in the right plot (blue dots) of Fig. \ref{fig:zeroT}.
The $t\rightarrow 0$ limit provides the critical capacity $\alpha_c (t=0) \sim 0.138$, that correctly recovers the standard Hopfield result \cite{AGS1}, while in the opposite limit $t \rightarrow \infty$ we have the upper bound $\alpha_c \sim 1.07$, in perfect agreement with \cite{DotsenkoDorotheyev}. It is worth stressing that the gap between the critical capacity obtained here ($\alpha_c  \sim 1.07$) and the maximal critical capacity according to Gardner's theory ($\alpha_c =1.00$) should be ascribed to the replica symmetry breaking expected to take place in this model (this was already pointed out in \cite{DotsenkoDorotheyev}).
Interestingly, the critical capacity displays a log-sigmoidal growth in $t$. This suggest that the intrinsic scale for $t$ is logarithmic: relatively small values of $t$ already provide a critical threshold $\alpha_c$ close to $1$; more precisely, $\alpha_c(t=1) \approx 0.4$ and $\alpha_c(t=5) \approx 0.8$. The log-sigmoidal shape also gives hints for convenient choice of the sleep extent: assuming that we want the best possibile capable machine, increasing $t$ is somehow expensive ({\it e.g.} in terms of time), then the region corresponding to the flex of the curve (approximately $t = 1$) is where a small increase in $t$ determines the largest return in terms of capacity.

\subsection{Replica symmetric phase diagram}\label{sec:RS-FD}

We solved the set \eqref{eq:sceqs} of five self-consistency equations numerically for different values of $t$. We used these solutions to build the phase diagram depicted in Figs.~\ref{fig:criticallines} and \ref{fig:phased}. A comparison with the phase diagram from AGS theory and Dotsenko {\it et al.} (Figs.~\ref{fig:equivalenza} and \ref{fig:consVSdot}, respectively) shows that our model  displays the same qualitative behaviors ({\it i.e. }spin-glass, mixed retrieval and pure retrieval phases), but their boundary lines significantly depend on $t$. In particular, the retrieval region gets wider with $t$.\footnote{On the contrary, in the model studied by Dotsenko {\it et al.} \cite{DotsenkoDorotheyev}, the area of the retrieval region decreases as $t$ grows, and vanishes in the large unlearning time limit. This is clear by noticing that the critical capacity at zero thermal noise level and $t \rightarrow \infty$ reaches the fixed value $\alpha_c \sim 1.07$, while the critical temperature at zero capacity is $T_c =(1+t)^{-1}$, so it vanishes in the $t\rightarrow \infty$. Since the critical curve characterizing the phase transition to the spin-glass phase is (from thermodynamics argument) a monotonous decreasing as a function $T(\alpha)$, it immediately follows that the retrieval region area becomes smaller and smaller for increasing $t$.} 
More precisely, we distinguish the following transition lines:

\begin{figure}[!]
	\centering
	\begin{minipage}[c]{.7\textwidth}
		\centering
		\includegraphics[width=\textwidth]{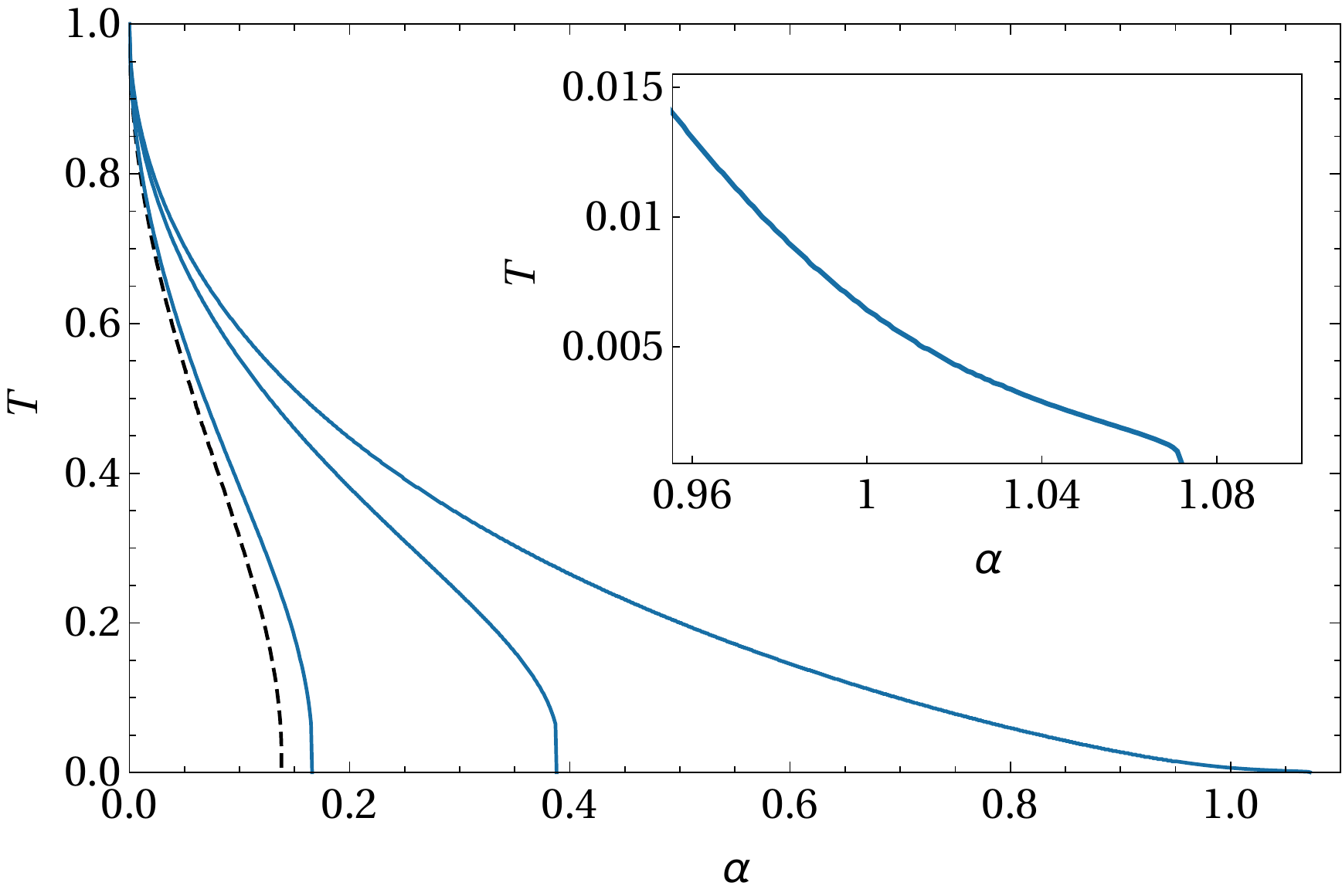}
	\end{minipage}%
	\caption{Critical line for the transition between retrieval and spin-glass phases for various values of the unlearning time. From the left to the right: $t=0$ (Hopfield, black dashed line), $0.1$, $1$ and $1000$. The inner plot on the top-right corner shows the tail of the critical curve for $t=1000$.}\label{fig:criticallines}
\end{figure}
\begin{figure}[t!]
	\centering
	\begin{minipage}[c]{.7\textwidth}
		\centering
		\includegraphics[width=\textwidth]{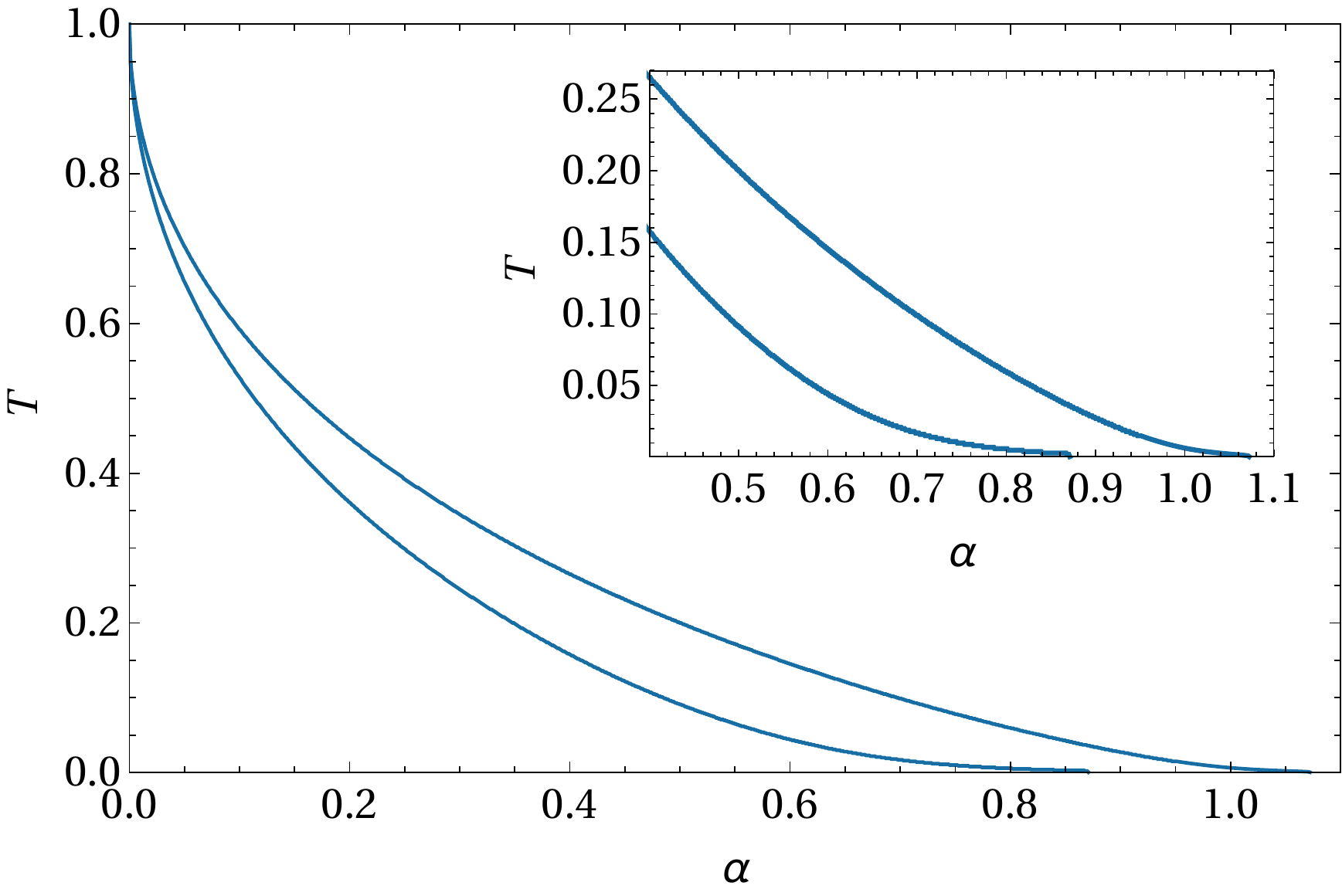}
	\end{minipage}%
	\caption{Phase diagram in the large unlearning time limit ($t=1000$). The two curves trace the boundary of the maximal retrieval regions where patterns are global free energy minima (inner boundary) or local free energy minima (outer boundary). The inner plot on the top-right corner shows the tails of both the critical curves. We stress that, as already pointed out by Dotsenko$\&$Tirozzi, the extension of the retrieval region in the low-temperature regime up to $\alpha_c \sim 1.07$ is just a chimera of the replica symmetric approximation, while in the true RSB phase $\alpha_c \to 1.00$, according to Gardner's theory \cite{Gardner}.}\label{fig:phased}
\end{figure}
\begin{figure}[t!]
	\centering
	\begin{minipage}[c]{.49\textwidth}
		\centering
		\includegraphics[width=\textwidth]{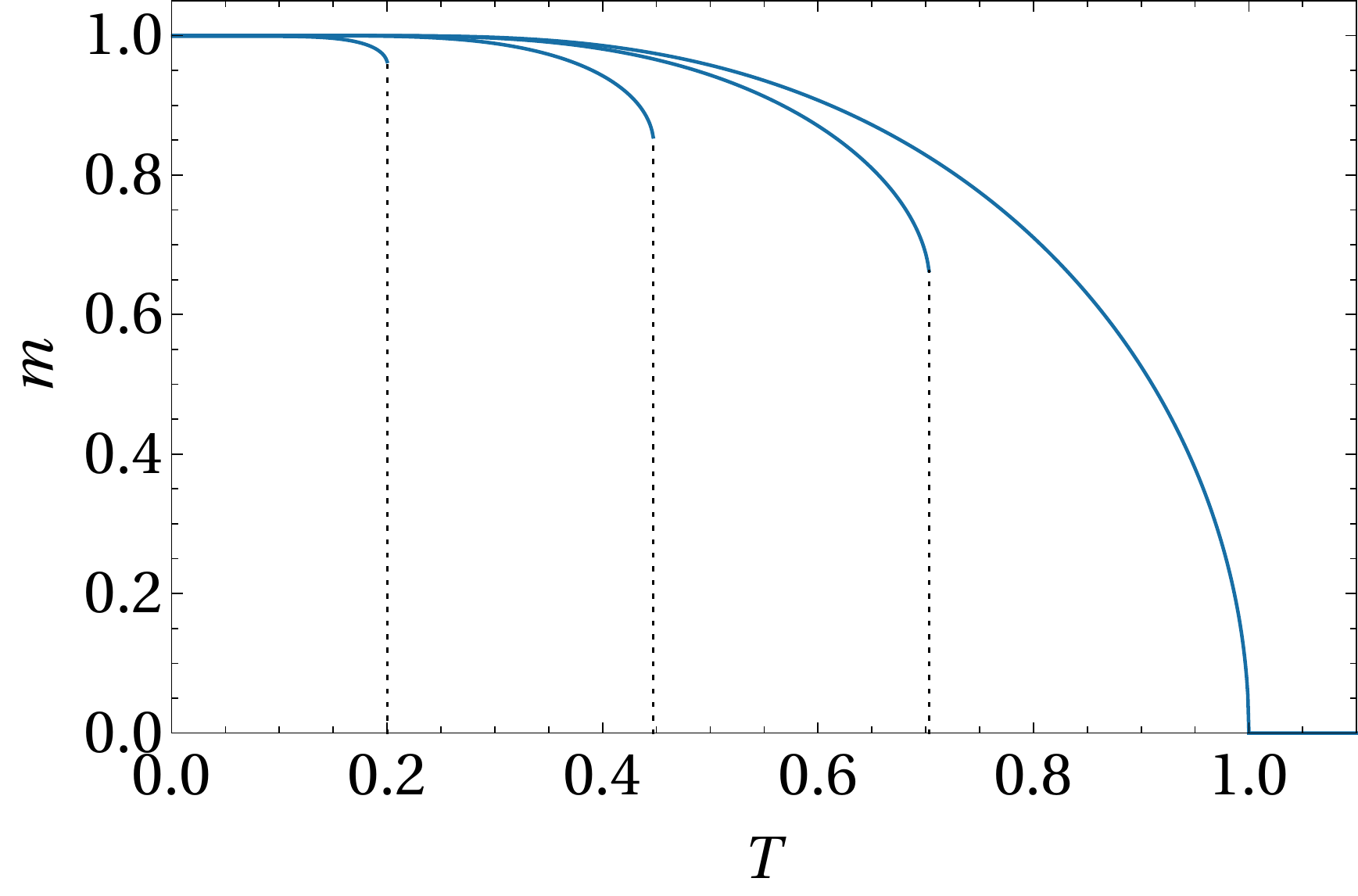}
	\end{minipage}%
	\hspace{2mm}%
	\begin{minipage}[c]{.49\textwidth}
		\centering
		\includegraphics[width=\textwidth]{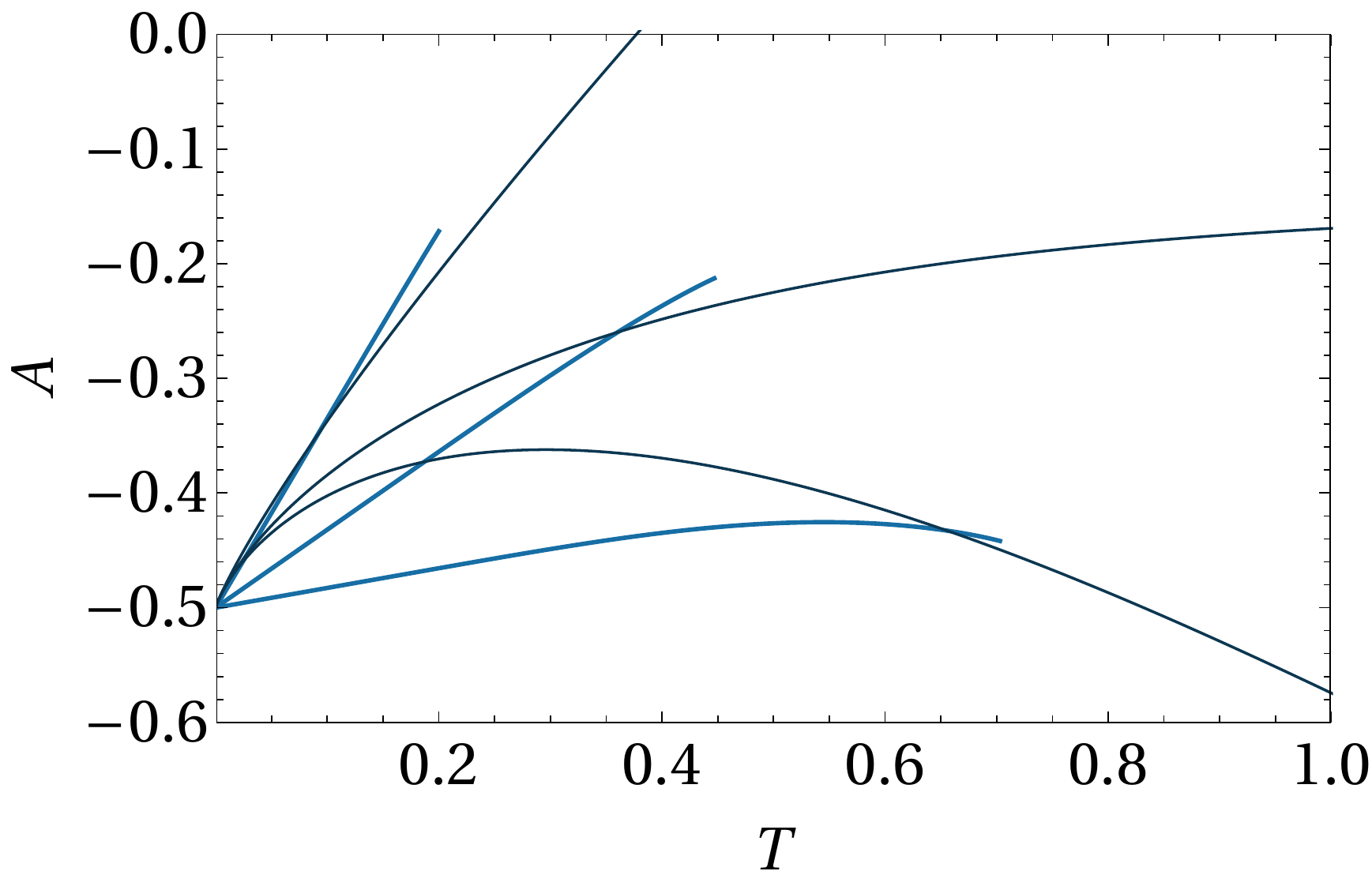}
	\end{minipage}
	\caption{{\bfseries Mattis magnetization and free-energy for $t=1000$.} Left: the plot shows the Mattis magnetization $m$ as a function of the temperature for various storage capacity values ($\alpha=0$, $0.05$, $0.2$ and $0.5$, going from the right to the left). The vertical dotted lines indicates the jump discontinuity identifying the critical temperature $T_c(\alpha)$ which separates the retrieval region from the spin-glass phase. Right: the plot shows the free-energy as a function of the temperature for various storage capacity values ($\alpha=0.05$, $0.2$ and $0.5$, going from the bottom to the top) in the retrieval (thicker light blue lines) and spin-glass (thinner dark blue lines) states.}\label{fig:jumpVST}
\end{figure}

\begin{itemize}
\item \emph{Spin-glass versus Mixed retrieval region}. Here, we focus on the transition between the retrieval region and the spin glass phase, therefore searching for the critical curve $T_c (\alpha)$ beyond which the only possible solution has $\bold{m}=0$ with $q \neq 0$. The situation we found is formally similar to the original Hopfield model: in the low-storage limit ({\it i.e.} $\alpha=0$), the replica-symmetric free-energy is continuous everywhere and differentiable {\it almost} everywhere (with the only exception being the critical point $T_c=1$ as expected, where we have a second-order phase transition in the standard Ehrenfest classification). For higher values of the capacity $\alpha>0$, the phase transition turns out instead  to be of the first kind, with a discontinuity taking place at the critical temperature $T_c(\alpha)$. Left plot in Fig. \ref{fig:jumpVST} shows an example of this behavior for various values of the storage capacity $\alpha$ and $t=1000$. The jumps of the magnetization versus $T$ take place on the critical line separating the retrieval region from the spin glass phase, so that we can study the occurrences of these jumps to reconstruct the phase boundary of the retrieval region. The results have been collected in Figure \ref{fig:criticallines} for various sleep extents. By inspecting the plot, it clearly emerges that the critical storage capacity $\alpha$ effectively increases with the sleeping session, with the zero-capacity critical temperature $T_c(\alpha=0)$ being stable to $1$. Thus, our interpolation scheme effectively leads to an increase of the retrieval performances offering a working tradeoff between unlearning spurious memories and consolidating pure ones.

\item \emph{Mixed retrieval versus Pure retrieval region}. The region where the pure states are global minima for the free-energy is identified by solving the self-consistency equations (with fixed $\alpha$ and $T$) for both retrieval ($m\neq 0$) and spin-glass ($m=0$) states and then comparing the values of the corresponding free-energies. Right plot in Fig. \ref{fig:jumpVST} shows the behavior of free-energy for both these solutions for various storage capacity. The intersection point between the corresponding curves identifies the critical temperature $T_R(\alpha)$ below which the pure states (globally) minimize the free-energy. We performed this analysis for $t=1000$. The result is the phase diagram depicted in Fig. \ref{fig:phased}.
\end{itemize}

A visual comparison between the Hopfield phase diagram (Fig. \ref{fig:equivalenza}) and the present one (Fig. \ref{fig:phased}) immediately evidences the crucial role of sleeping for improving the network performances.

\subsection{Numerical results}
In this Section, we inspect numerically some aspects of the exposed theory which are too difficult to control analytically. In particular, we want to check that our replica-symmetric ansatz is reasonable, by comparing its predictions with Monte Carlo simulations (where no assumptions are made). Then, we want to analyze the field distributions $h_i$ and the robustness of the attraction basins of the pure minima.

\subsubsection{Checking the Replica Symmetric assumption}
We performed Monte Carlo simulation to mimic the evolution of a finite-size network made of $N$ neurons and $P$ patterns. For a given realization of patterns $\xi_i^{\mu}$, $i=1,...,N$ and $\mu=1,...,P$, for a given temperature $T=1/\beta$ and for a given sleeping time $t$, we let the system evolve by a single spin-flip Glauber dynamics and, once the equilibrium state is reached,\footnote{This can be checked by evaluating the stability of observables and the width of their fluctuations} we measure the thermal average of the Mattis magnetization, referred to as $\bar{\mathbf{m}}$. This is repeated for $M$ realizations of the patterns over which thermal averages are accordingly averaged. The resulting value provides our numerical estimate for the Mattis overlaps. Different parameters $(N,P,\beta,t)$ are considered and, for each choice, the same procedure applies. A sample of results is shown in Fig.~\ref{fig:Ele}, where one can check that, as $t$ increases, the Mattis magnetization $m_1$ corresponding to the retrieved pattern $\xi^1$ vanishes at larger values of $T$ and $\alpha$ (with a slight abuse of notation here we mean $\alpha=P/N$). Remarkably, these results are also quantitatively consistent with those presented in Fig. \ref{fig:criticallines}. Since these results were obtained from simulations at finite size and without asking for replica symmetry, this check strongly corroborates the analytical findings.
Further, in Fig.~\ref{fig:FSS} we compare outlines pertaining to systems of different sizes $N$, but same choice of $\beta, t, \alpha$; the theoretical expression found in Eq.~\ref{eq:sceqs} is also depicted. Finite-size effects tend to overestimate the magnetization at temperatures just above the critical one.
However, for a system with size $N=1000$ the curve is already pretty well overlapped with the theoretical one.

\begin{figure}[htbp]
\includegraphics[scale=0.4]{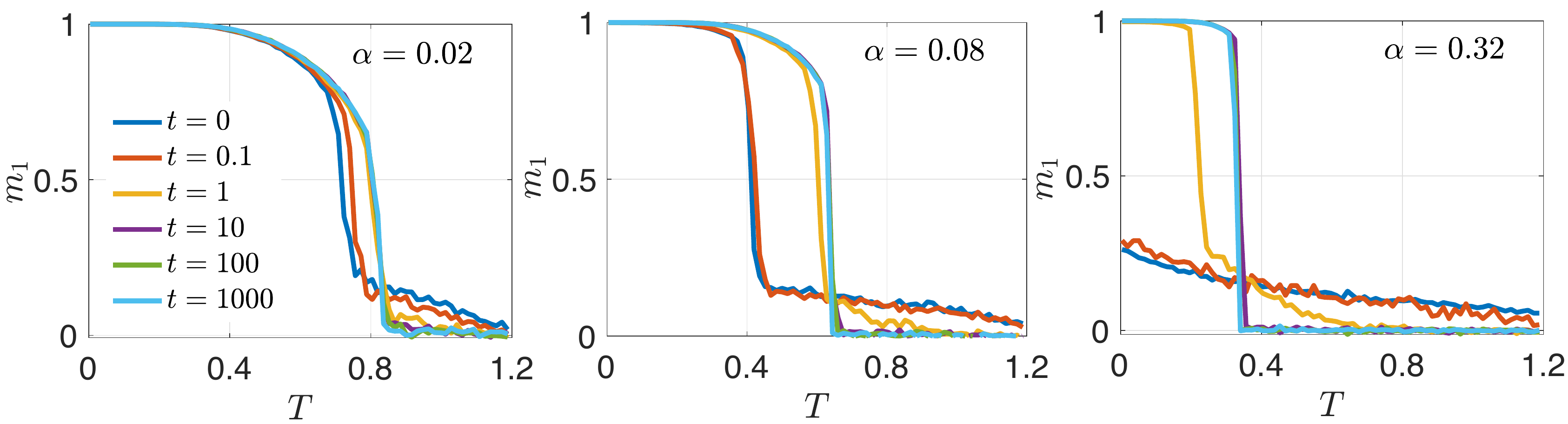}
\caption{{\bfseries Results from Monte Carlo simulations.} These panels report the results from Monte Carlo simulations run for different choices of the parameters $(P,\beta,t)$ and fixing $N=5000$ and $M=10$. More precisely, $1/\beta = T$ ranges from $0$ to $1.2$, $P/N= 0.02$ in the leftmost panel, $P/N= 0.08$ in the middle panel and $P/N= 0.32$ in the rightmost panel. Also, we considered $t=0, 0.1, 1, 10, 100, 1000$, which are depicted in different colors, as explained in the legend.}\label{fig:Ele}
\end{figure}

\begin{figure}[htbp]
\includegraphics[scale=0.425]{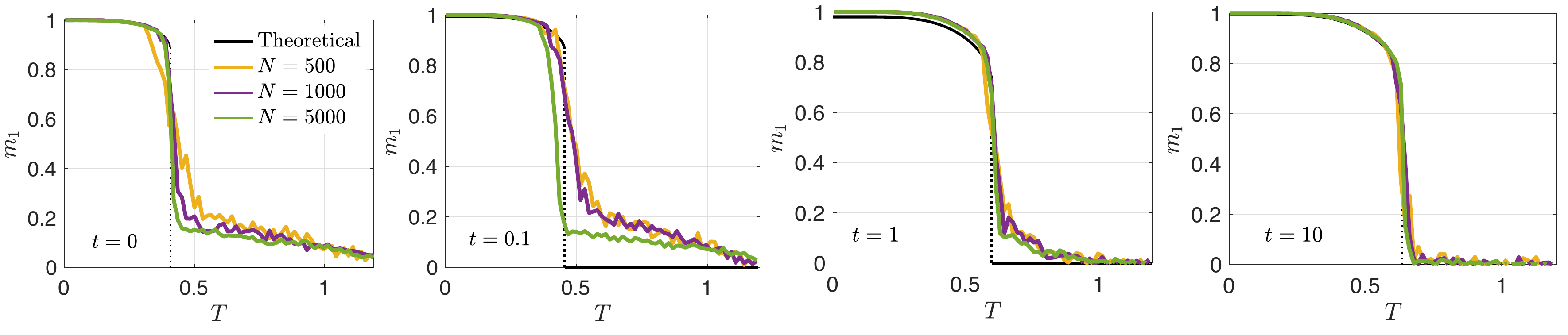}
	\caption{{\bfseries Finite size scaling.} Average values for the Mattis magnetization $m_1$ corresponding to the retrieved pattern $\xi^1$ obtained from numerical simulations for fixed $\alpha=0.08$ and $M=10$. Different sizes ($N=500, 1000, 5000$) are considered and presented in different colors, as explained by the common legend, and are also compared to the theoretical solution reported in Eq.~\ref{eq:sceqs} and obtained in the thermodynamic limit. Moreover, each panel correspond to a different choice of $t$, as reported.}\label{fig:FSS}
\end{figure}

\subsubsection{Fields distributions in retrieved states}
In order to study the internal field distributions characterizing the retrieval mode, we perform extensive MonteCarlo simulations at fixed network size $N$ and for various sleep extents $t$. Since we want to examine the effects of reinforcement and remotion in the retrieval regime, it is convenient to let the network evolve in a point of the tuneable parameters, where retrieval is certainly feasible (namely where pure states dominate the free energy landscape): in the following we will focus on the case $N=1000$ and $P=50$ with a ratio $P/N$ well below the theoretical (Hopfield) critical threshold.\par
A numerical observation is that, as we expect our {\em unlearning$\&$consolidating} algorithm to clean the free energy landscape from metastabilities, we could be able to avoid sophisticated thermalization techniques ({\it e.g.}, simulated annealing \cite{kirkpatrick}). Rather, we aim to check directly if already with rudimental minimizers available for the dynamical update of the neurons, the network is still able to reach a global minimum: these simulations are thus carried with standard Glauber dynamics in the $\beta \to \infty$ limit, with the expressions for the fields acting over the neurons as prescribed by eq. (\ref{new-fields}).
We start the simulations from random initial configurations and simple check that the dynamics ends in a retrieval state. Taking advantage of the mean-field nature of the model, the expression for these fields can be extracted by representing the cost-function (\ref{new-model}) as
\be
H_{N,P}(\sigma|\xi,t) =-\frac{1}{2}\sum_{i=1}^N h_i \sigma_i,
\ee
with the internal fields defined as
\be\label{new-fields}
h_i =\frac{1}{N} \sum_{j=1}^N \sum_{\mu\nu}\xi_{i}^\mu \xi^\nu _j (1+t)(1+tC)_{\mu \nu}^{-1} \sigma_j.
\ee
As stated above, to analyze the internal field configurations, we adopt a standard Glauber dynamics at zero thermal noise level, {\it i.e.} calling $\tau$ the neural update time, with the (parallel) update rule
\be\label{eq:Glauber}
\sigma_i(\tau+1)= \text{sign} [h_i(\tau)],
\ee
so that the simulations stop when all spins are aligned to the internal fields.\footnote{Due to detailed balance, convergence of this kind of algorithm is guaranteed for symmetric synaptic couplings \cite{Amit,Coolen}.}   From the internal field configurations, we estimated numerically the probability density function $P(h)$ and compared it to a standard Gaussian distribution: the results are reported in Fig. \ref{fig:histtall}. Remarkably, the fields distribution $P(h)$ become more and more narrow and peaked as the sleep extent increases. Indeed, the standard deviation $\sigma_{P(h)}$ scaling as a power law of the dreaming time, {\it i.e.} $\sigma_{P(h)} \sim 1/t$, suggesting that dreaming acts, in this picture, as a regularizer in the internal field distributions. The results supporting this picture are shown in Fig. \ref{fig:histtall}.
	\begin{figure}[t]
	\centering
	\begin{minipage}[c]{\textwidth}
		\includegraphics[width=.49\textwidth]{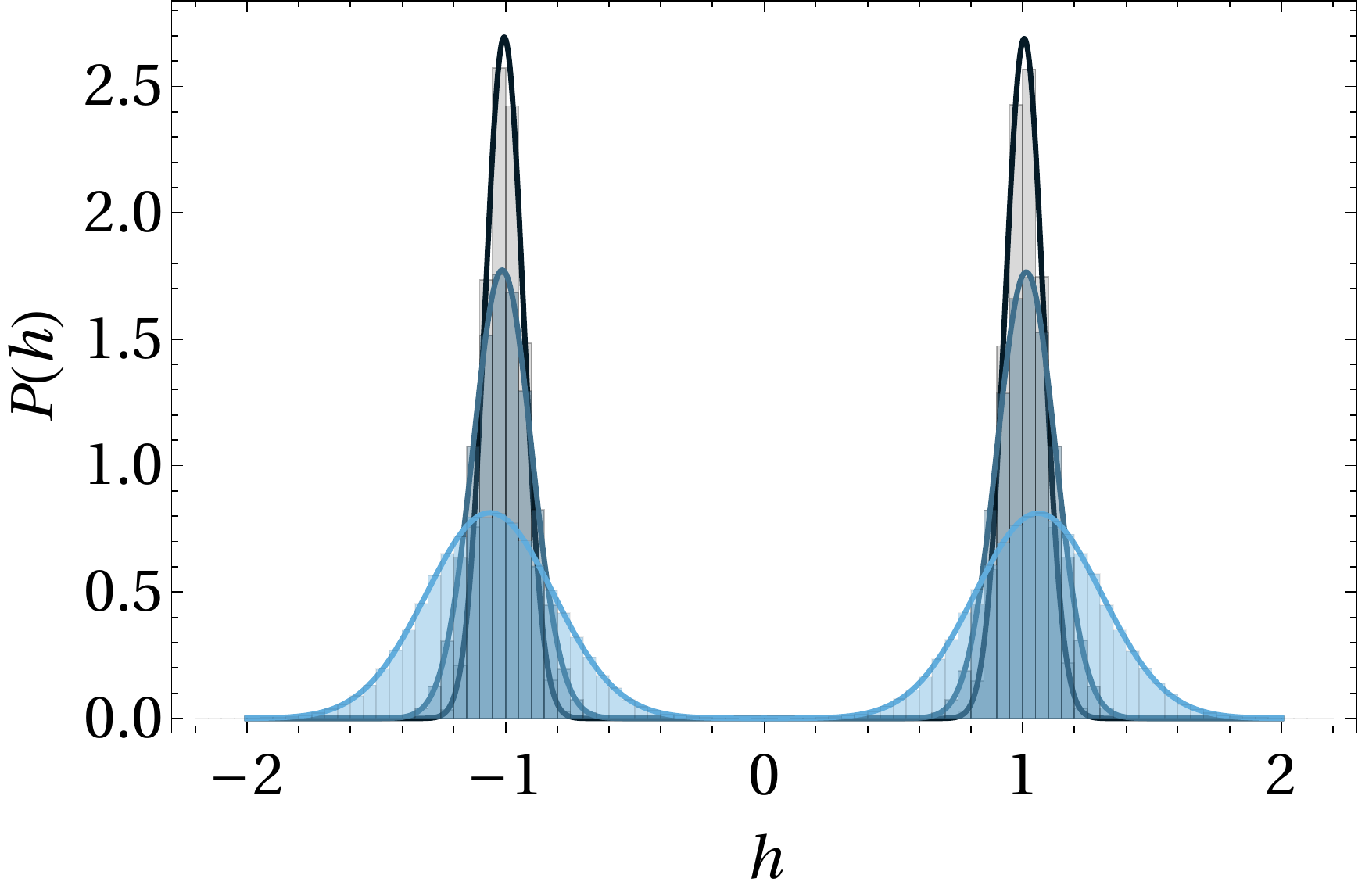}
		\includegraphics[width=.49\textwidth]{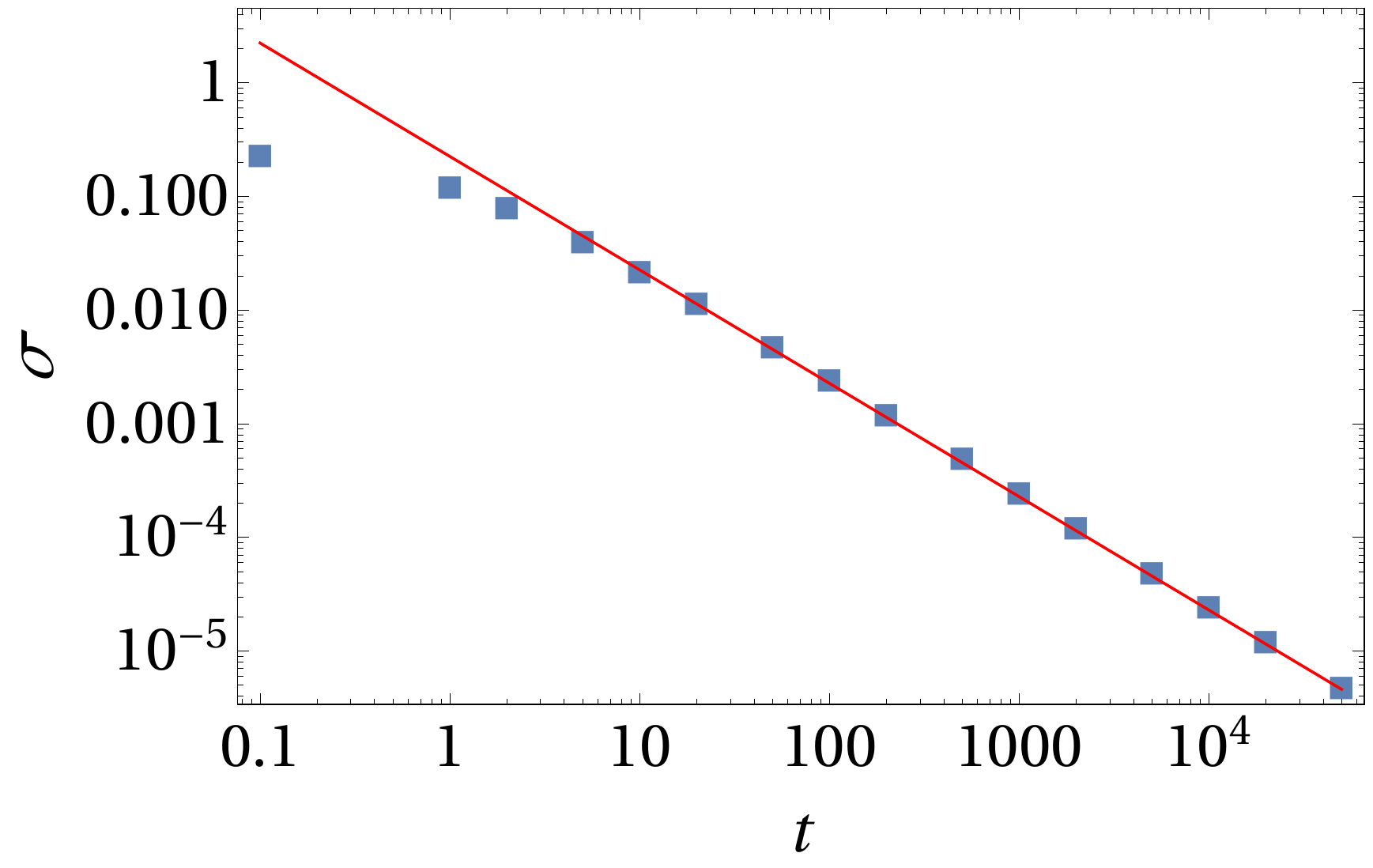}
		\centering
	\end{minipage}%
	\caption{{\bfseries Internal fields probability densities for various unlearning time}. Numerical results (histograms) of the Monte Carlo simulations for the internal fields configuration and comparison with best-fitting Gaussian distributions (smooth curves). The values of the unlearning time here considered are $t=0$ (standard Hopfield case, in light blue), $t=1$ (dark blue) and $t=2$ (light gray). The statistics used in numerical simulations consists in $20$ different stochastic evolutions (with different random initial conditions) and $20$ different realizations of the stored patterns. {\bfseries Dependence of standard deviation of internal fields distribution on the unlearning time.} The plot shows the standard deviation of the (best-fitting) Gaussian distribution of the internal fields configuration as a function of the unlearning time obtained by the previously described Monte Carlo simulations. The results are average on $20$ different stochastic evolutions (with different random initial conditions) and $20$ different realizations of the stored patterns for each unlearning time value. The fit returns a power-law scaling as $\sigma(t) \sim 0.224 \cdot t^{-0.998}$.}\label{fig:histtall}
\end{figure}

\subsubsection{Retrieval frequency for noisy inputs: on the attraction basins}
As a natural successive step, we need to (partially) reintroduce the noise in the network and use it to analyze the depth of the free energy pure minima: the underlying idea is to present some noisy inputs to the network (at various noise intensities $p$) and check the proper signal reconstruction. Otherwise stated, check if, once supplied a noisy input, the network is still able to find its path to the related global free energy minimum accounting for the correct pattern.
\newline
Also in this case, the parameters are fixed to $N=1000$ and $P=50$.\footnote{We checked that analogous results hold also for (randomly selected) different configurations (whose results we do not report).} The procedure we adopted is standard: we prepare the network $\{ \sigma_i \}_{i=1,...,N}$ aligning it to the first pattern $\xi^1$, then we flip each neuron ($\sigma_i\rightarrow -\sigma_i$) with probability $p$. In this way, we construct the initial state of the network. Then, we let the network evolve according to the classical zero-noise Glauber dynamics (see eq. (\ref{eq:Glauber})) and count how many times the signal is properly reconstructed (in other words, we check that the overlap of the network  state with the candidate pattern $\xi^1$, {\it i.e.} $m_1$ is the maximal Mattis magnetization). The results are plotted in Fig. \ref{fig:retr}, and show that our algorithm has the effect of making the basins of the pure attractors more stable with the sleeping session:  this is intuitively in agreement with the observation that - increasing the sleep extent - the retrieval region becomes larger (w.r.t. the Hopfield reference).

\begin{figure}[htbp]
	\centering
	\begin{minipage}[c]{.7\textwidth}
		\includegraphics[width=\textwidth]{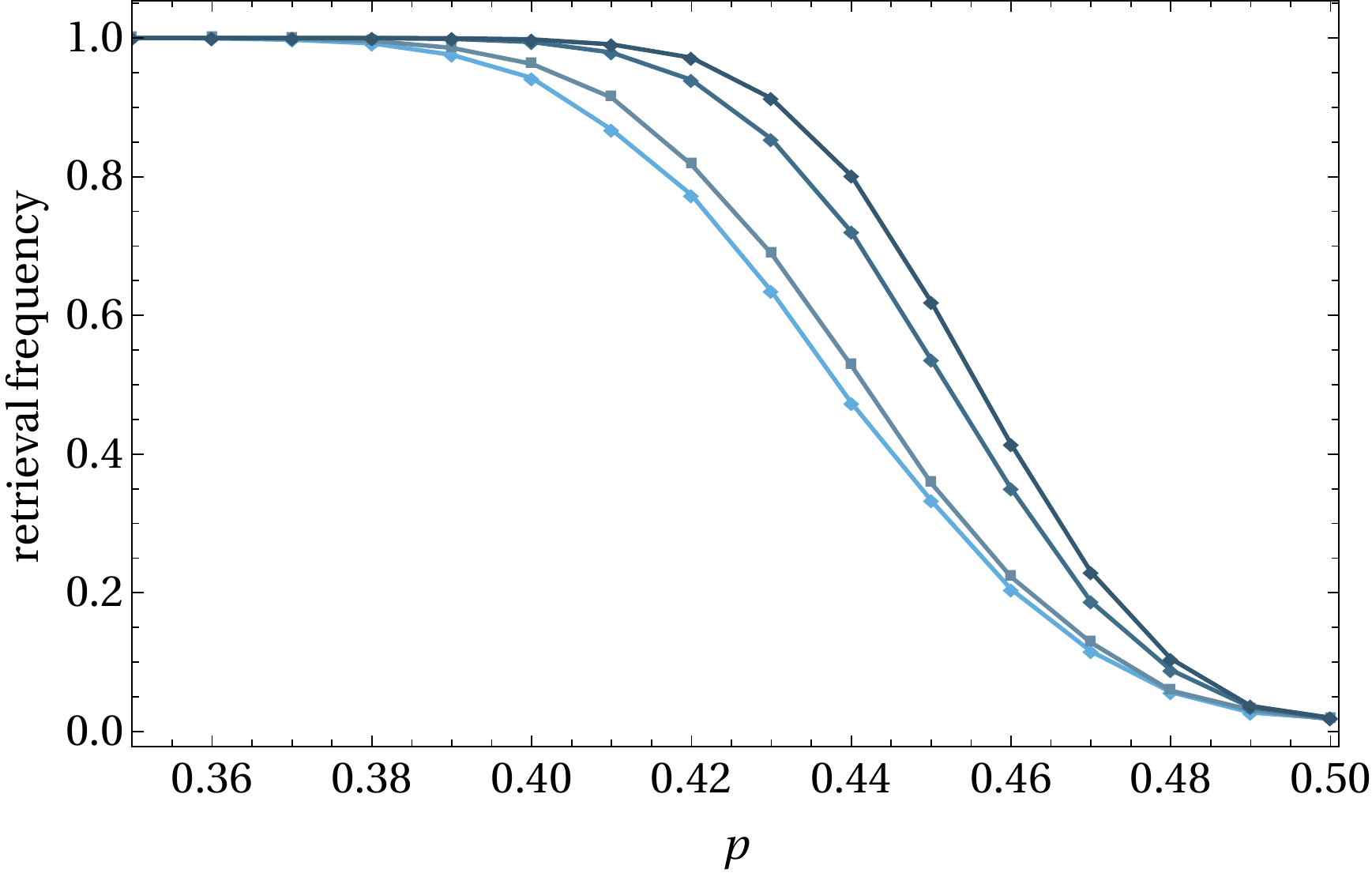}
		\centering
	\end{minipage}%
	\caption{{\bfseries Analysis of attraction basins.} The plots shows the retrieval frequency as a function of the spin-flip probability for $t=0$, $0.1$, $1$ and $1000$ (from the left to the right). These results are obtained with 200 different stochastic evolutions for each of the 200 pattern realizations.}\label{fig:retr}
\end{figure}

\newpage

\section{{\em Unlearning$\&$Consolidating}: Focusing on Synapses}\label{sezione-quattro}
\subsection{Time evolution of the synaptic matrix}

The Hebbian paradigma can be interpreted as the adiabatic collection of memories sequentially learnt. Along the same line, we show that the coupling (\ref{new-model}) encoding for reinforcement {and} removal can be seen as the result of a sequential synaptic updating. To this goal it is convenient to first look at the evolution of the coupling $J$ as a continuous dynamic process. Later, we will show how to make it discrete and suitable for an iterative implementation.

\subsubsection{The continuous algorithm}
By construction, the interpolating coupling matrix
\be
J_{ij} (t)= \frac{1}{N}\sum_{\mu\nu} \xi^\mu _i \xi^\nu_j \left(\frac{1+t}{1+t C}\right)_{\mu\nu},
\ee
has as limiting cases $J(0)= J$ and $J(\infty)=J^{p}$. Exploiting the identity
\be
\frac{1}{N}\sum_{\mu\nu}\xi^\mu _i \xi^\nu_j (C^n)_{\mu\nu}= \sum_k J(0)_{ik}(J(0)^n)_{kj},
\ee
we can recast the time-dependent matrix $J(t)$ as
\be\label{eq:Jform}
J(t)=(1+t) J(0) (1+t J(0))^{-1}.
\ee
Upon differentiating \eqref{eq:Jform} with respect to $t$, we end with the evolution equation
\be
\dot J = \frac{1}{1+t} (J-J^2).
\ee
Comparing this matrix ODE with standard unlearning process in the Literature \cite{Plakhov,Semenov1,Semenov2}, we notice two main difference. First, we have a non-trivial dependence on the sleep time through the prefactor $(t+1)^{-1}$. Second, there are two contributions in the evolution equation, associated to different scaling (respectively, linear and quadratic in $J$) and opposed signs (mimicing the two opposite features of the algorithm, {\it i.e.} consolidation and remotion). %
\par\medskip
\subsubsection{The discrete algorithm}
To go to the discrete picture, we have to (re)introduce a tunable parameter $\epsilon$ (the {\it unlearning strength} or equivalently {\em effectiveness of sleep}) defining a temporal scale in which remotion and consolidation are effective. Therefore, we perform the following replacements:
\be
\begin{split}
dt &\rightarrow \epsilon,\\
t&\rightarrow k\epsilon,\\
\dot J&\rightarrow \epsilon^{-1}[J(k+1)-J(k)],
\end{split}
\ee
where $k$ labels the number of sleeping sessions. Thus, at a discrete level, the {\em reinforcement$\&$remotion} procedure is provided by the following rule:
\be\label{eq:unlearningrule}
J(k+1)= J(k)+\frac{\epsilon}{1+\epsilon k} [J(k)-J(k)^2].
\ee

\begin{figure}[t!]
	\begin{minipage}[c]{.49\textwidth}
		\centering
		\includegraphics[width=\textwidth]{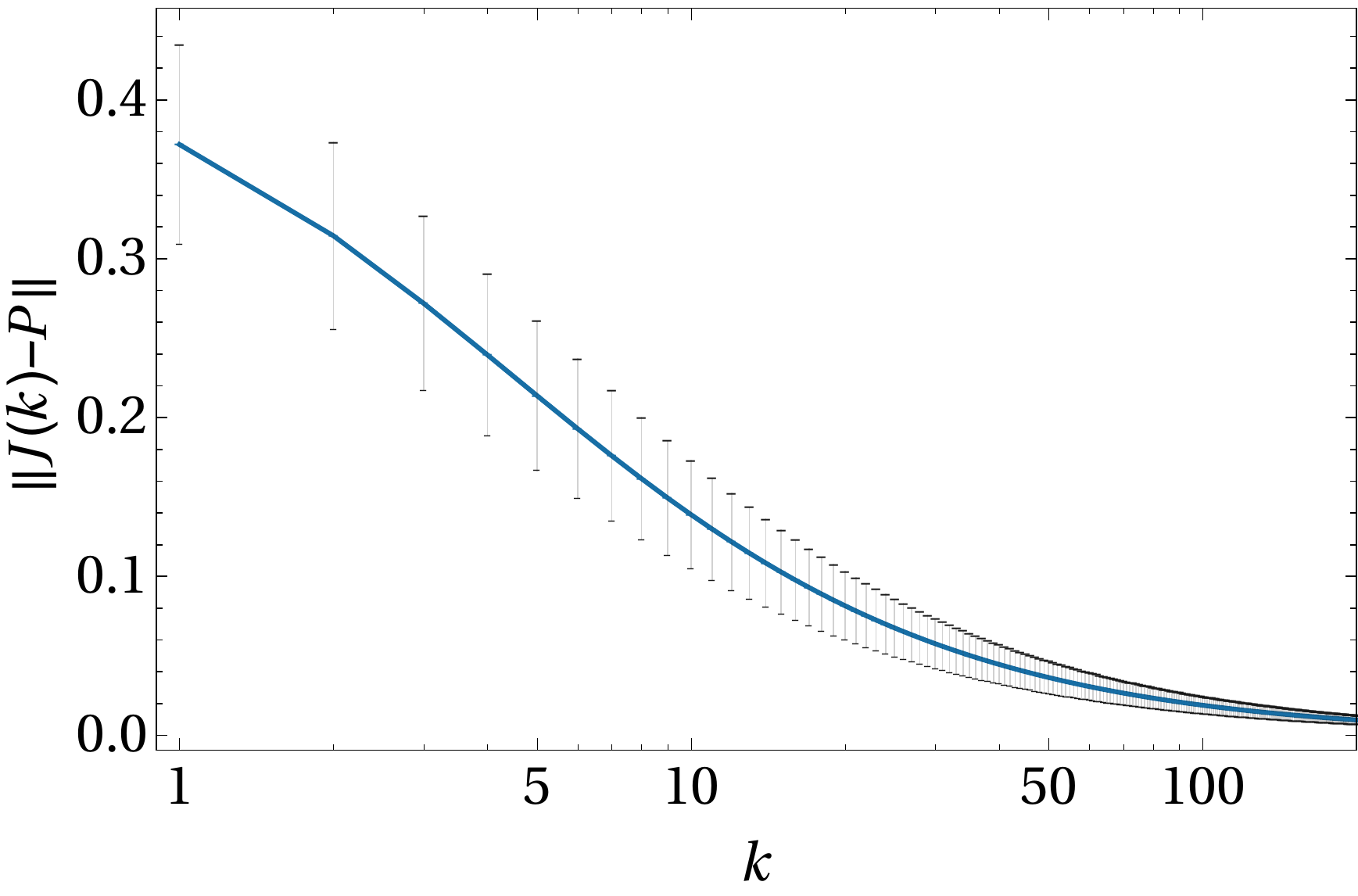}
	\end{minipage}%
	\hspace{-0mm}%
	\begin{minipage}[c]{.49\textwidth}
		\centering
		\includegraphics[width=\textwidth]{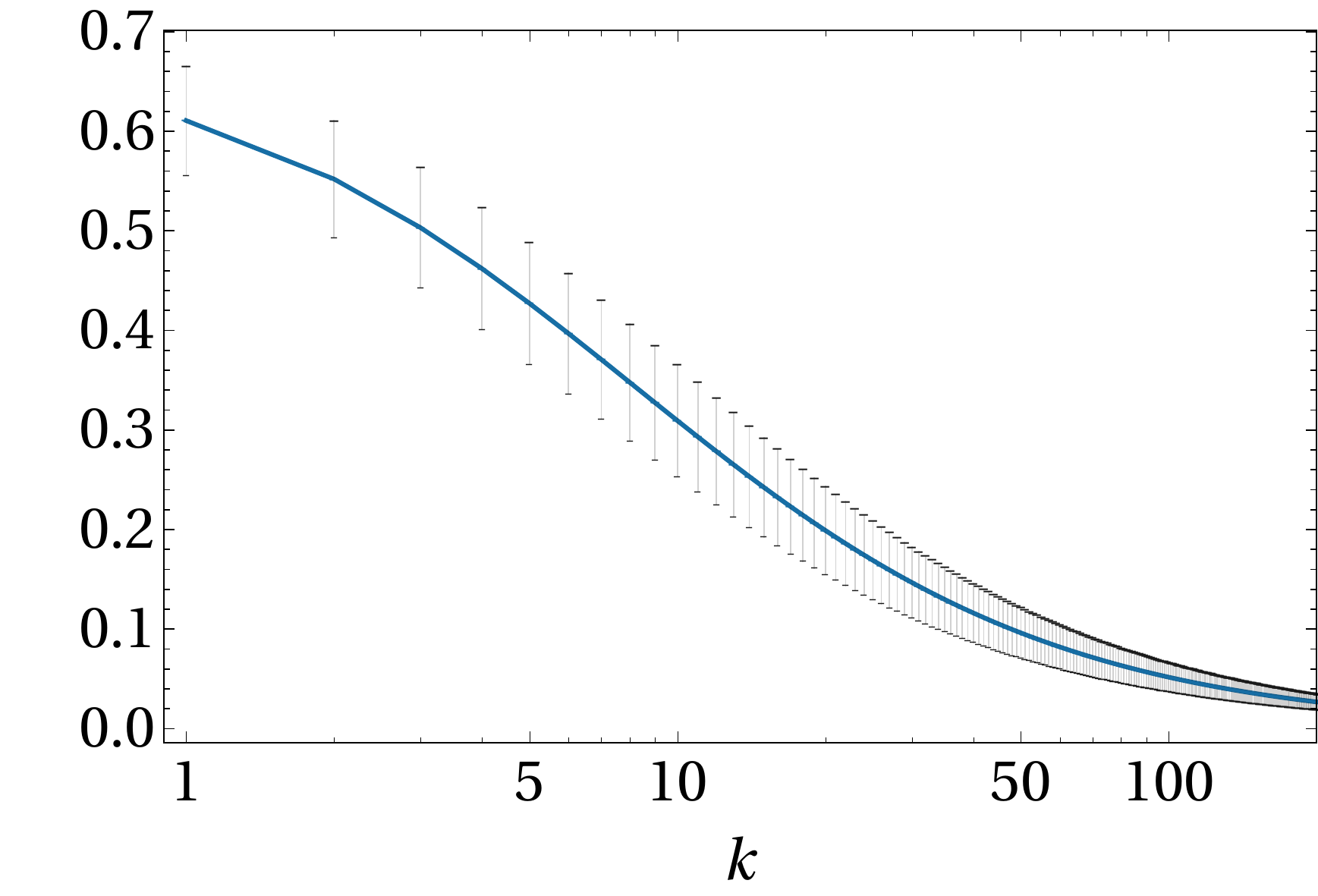}
	\end{minipage}
	\begin{minipage}[c]{.49\textwidth}
		\centering
		{\includegraphics[width=\textwidth]{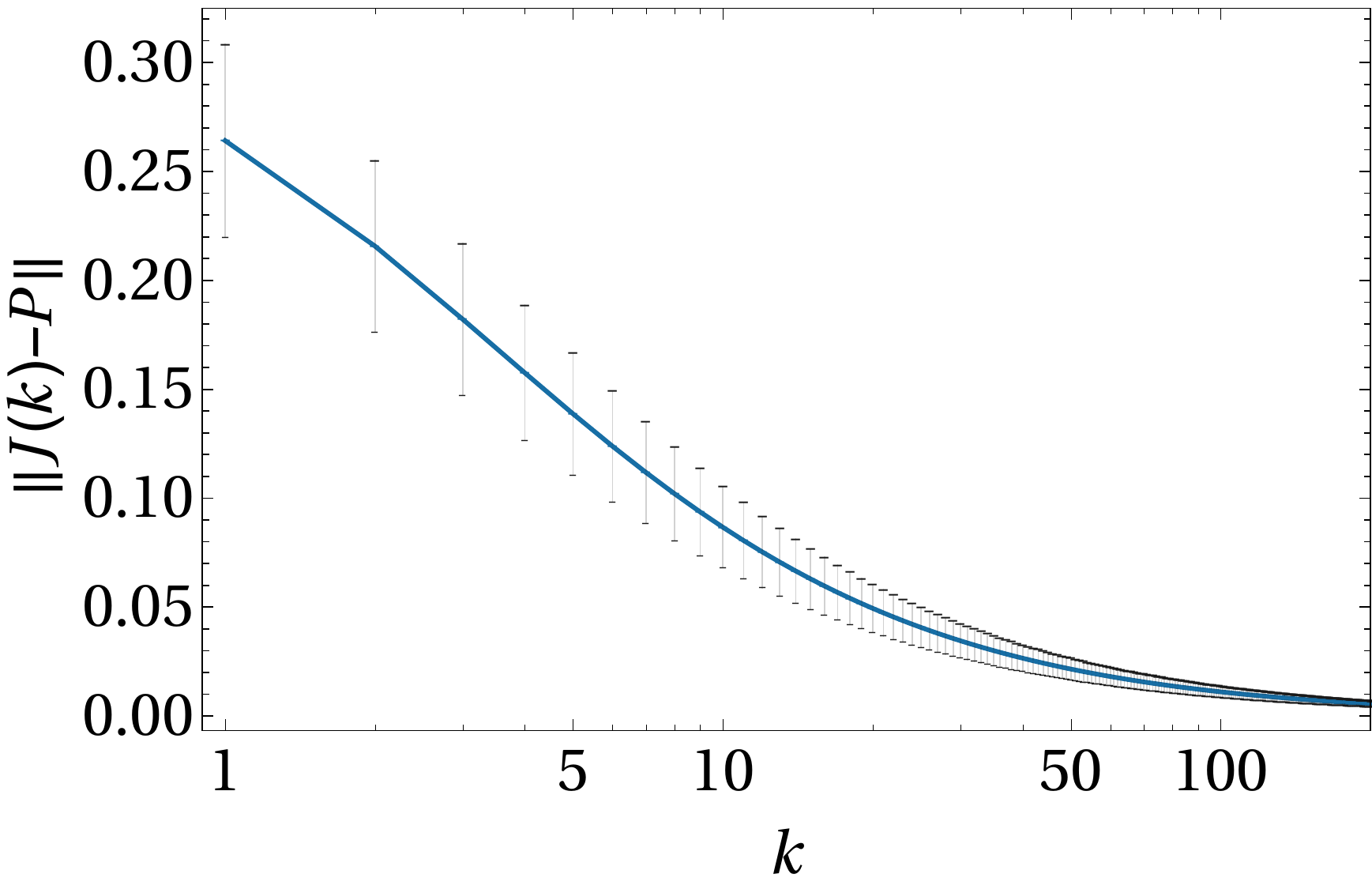}}
			\end{minipage}%
	\hspace{-1.2mm}
	\begin{minipage}[c]{.49\textwidth}
		\centering
		{\includegraphics[width=\textwidth]{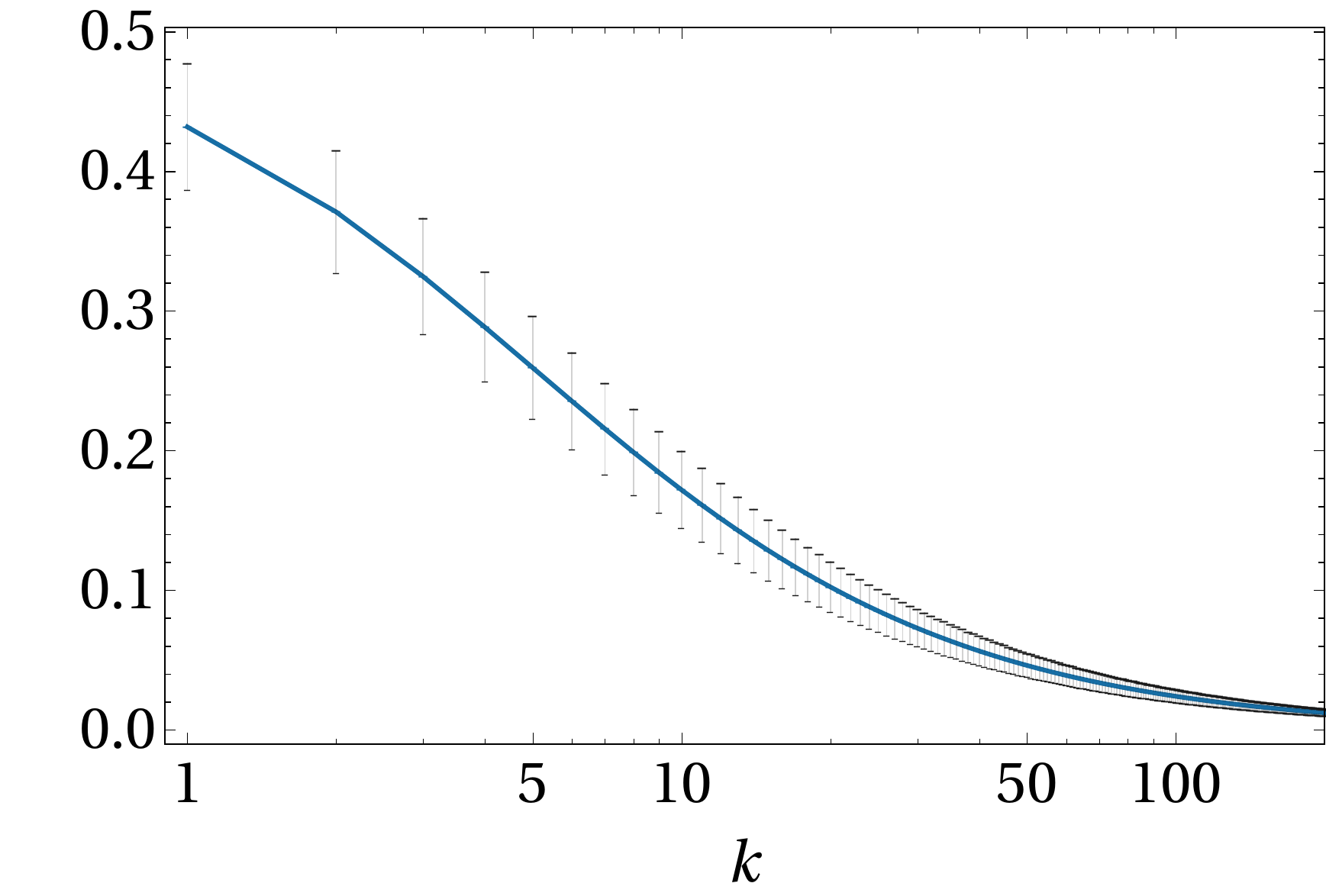}}
	\end{minipage}
	\caption{{\bfseries Unlearning procedures for various networks}. The four plots show the convergence (in the the operator norm) of the coupling matrix with the unlearning procedure \eqref{eq:unlearningrule} for various network parameters: first line - $N=64$ ($P=8$ and $P=16$); second line - $N=128$ ($P=8$ and $P=16$). The temporal window shown in the plots is limited to for the first $200$ cycles. The results are the average over 500 different realizations of the patterns. The parameter $\epsilon$ is fixed to $0.5$.} \label{fig:convergence}
\end{figure}
We stress that this procedure can naturally be interpreted as an {\it adaptive} unlearning $\&$ consolidation scheme, as the {\it effective} sleep strength ({\it i.e.} the coefficient of the $J$-dependent corrections in eq. \eqref{eq:unlearningrule}) does depend on the sleep session $k$. With this prescription, the coupling matrix converges to the projection matrix in the limit of infinite dreams, as is clear from Fig.~\ref{fig:convergence}. To prove this, it is convenient to write the coupling matrix in the form
\be
J_{ij}(k) = \frac1N\sum_{\mu\nu}\xi^\mu _i \xi^\nu _j G_{\mu\nu}(k).
\ee
Then, the above unlearning$\&$consolidation rule can be recast as
\be\label{eq:Gmatrix}
G(k+1)= \Big(1+\frac{\epsilon}{1+\epsilon k}\Big)G(k)- \frac{\epsilon}{1+\epsilon k} G(k) C G(k),
\ee
with the initial condition $G(0)=\mathbb I$. The analytical proof of the convergence of this algorithm is reported in Appendix. \ref{app:convergence}. An important point we would like to stress is that the critical value of the unlearning strength ensuring the convergence can be sharply estimated as (see Appendix \ref{app:convergence}, Corollary \ref{cor:ollario})
\be\label{eq:critical}
\epsilon_c= \frac{1}{\lVert C \rVert -1}.
\ee
This equality is very instructive, since it states that the critical strength for the synaptic update is fixed by the magnitude of the patterns' correlations: the stronger the correlations, the longer the amount of time required to the sleep for optimizing the network's free energy landscape.
\par
We also notice that the singular case $\lVert C \rVert =1$ is excluded, since it only happens if all the eigenvalues of $C$ are 1, meaning that $C=\mathbb I$ (or equivalently, all patterns are uncorrelated). In the latter case, no unlearning is needed, since the starting point is exactly the solution of the above recurrence relation.\footnote{We stress, however, for identally distributed random boolean patterns for finite $N$, the correlation is always presented with a rough estimation $\sim 1/\sqrt{N}$.}\par
It is interesting to compare these results with the unlearning procedure analyzed in \cite{Semenov1}, for which the critical unlearning strength is given by
\be\label{eq:criticalSem}
\epsilon_c = (N \lVert J(0)\rVert )^{-1},
\ee
where $J(0)$ is the Hopfield coupling matrix. By inspecting at Figure \ref{fig:criticalS}, it is clear that, in our case, the critical unlearning strength is higher than the usual one \eqref{eq:criticalSem} by many order of magnitudes, and (in the low storage case, {\it i.e.} $P \ll N$) it is independent on $N$. Another important difference between the two algorithms is that, while for \eqref{eq:criticalSem} and fixed $P$ the critical strength fastly decreases as $N$ grows, in our case it slowly increases for higher network size. Thus, our method appears to be more stable with respect to the network size.

\begin{figure}[htbp]
	\centering
	\begin{minipage}[c]{.7\textwidth}
		\includegraphics[width=\textwidth]{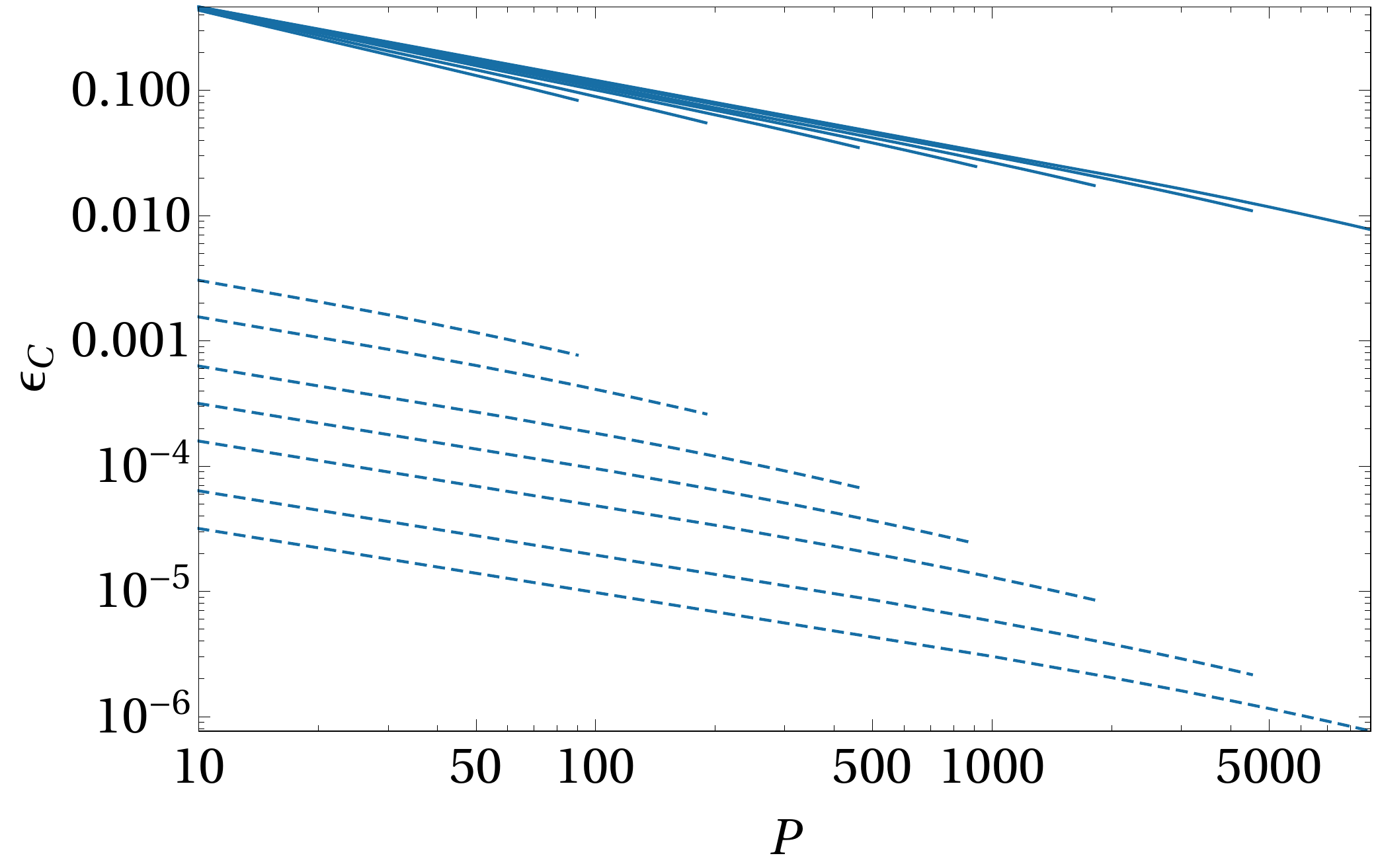}
		\centering
	\end{minipage}%
	\caption{{\bfseries Dependence of the critical unlearning strength versus $P$ and $N$.} The plot shows the values of the critical unlearning strength as functions of the number of neurons $N$ and the number of stored patterns $P$. Each curve corresponds to a fixed value of $N=100$, $200$, $500$, $1000$, $2000$, $5000$ and $10000$. The solid blue lines are the critical strength \ref{eq:critical} of our unlearning procedure \ref{eq:unlearningrule}. The dashed blue lines are instead associated to the critical strength \eqref{eq:criticalSem}. The results are averaged on $50$ different realizations of the patterns (the statistical errors are not reported since they are too small to be visualized in the log-log scale). }\label{fig:criticalS}
\end{figure}

\section{Conclusions} 
Inspired by information optimization during sleep episodes in mammal's brains, in this paper we study Hebbian unlearning with reinforcement, namely we discuss how the Hopfield model (where the bare Hebb prescription is forecasted) can be generalized to better optimize its resources, namely, in order to have the most possible robustness w.r.t. (fast/thermal) noise and the larger  possible capacity (i.e. the ratio among the stored patterns and the neurons available to handle them).
\newline
Oversimplifying, during human's sleep, two main types of dreaming  alternate, namely the slow-wave and the random-eye-movement phases, with two coupled -but different- purposes: while they share the final goal of achieving best possible optimization of information storage, the former contributes to the scope by consolidating important memories (that we match with the patterns in the AI counterpart played by associative neural networks), the latter instead gets rid of the -by far more abundant- unimportant memories (that we match with spurious/mixture states in the AI counterpart played by associative neural networks).
\newline
To account for both these features at once in AI too, we proposed a novel {\em unlearning$\&$consolidating} algorithm that we ideally use to stylize a dream (with the same spirit by which a neuron is reduced to a Boolean variable in mathematical modeling) and we tested it on the standard reference provided by the Hopfield model. We stress that, while in Hopfield networks  just pairwise correlations are stored, the  present theory can be applied in a much broader generality, e.g. for instance extending the cost-function to account for P-spin higher-order contributions \cite{Albert,Dimitry,Metha} for deep nets too.
\newline
Our algorithm has solely one novel parameter, accounting for the time the network spent in dreaming, and we studied how both the neural performances as well as the synaptic couplings evolve as this parameter is tuned.
As far as the neurons are concerned, as a result of our procedure, at the amount of dreams increases we obtain a significant improvement in the critical capacity that, at first (mainly thanks to discarding spurious states), in the zero fast noise limit, increases from $\alpha_c \sim 0.14$ to $\alpha_c \sim 1$ (that is the maximal capacity if the network is equipped with symmetric couplings, as prescribed by the Gardner theory \cite{Gardner,VanHemmen}), further (mainly due to the reinforcement term), pure memories remain stable even against high level of (fast) noise. Indeed we inspected how the fields acting on the neurons gets affected by the dreams and, as the dreaming time increases, the fields get better and better peaked over pattern's entries -getting rid of the noise- and, remarkably, their standard deviations $\sigma$ have a power-law scaling with the dreaming time, i.e. $\sigma \propto t^{-1}$. It is also worth pointing out that network performances increase in a high non-linear way with the dreaming time (such that with a few cycles a massive optimization has already been achieved and there is no longer need to reach un-physical epochs).
\newline
This gets crystal clear when focusing on the synapses as, already an elementary glance at their dynamical evolution, suggests that the -starting with standard pairwise Hopfield- the dynamics forces the Hebbian kernel to match the projection matrix, close to the scenario pictured by Kanter and Sompolinsky \cite{KanterSompo}. We confirmed this statement both analytically and numerically and we found a sharp estimate for the optimal {\em dreaming rate} -the analogous of a learning or unlearning rate(s) in existing Literature.
\newline
Finally we aim to notice that there is also another important reason to investigate these improvements over the standard scenario in Hebbian machines: there is a one-to-one correspondence among Hopfield networks and restricted Boltzmann machines \cite{Agliari-PRL1,BarraEquivalenceRBMeAHN} thus, as Boltzmann machines are the building blocks in modern Deep Learning architectures \cite{DL1,Hinton1}, increasing efficiency in the former may imply progress even in the latter, and ultimately in Deep Learninsg (as we know that modern machines trained with deep learning actually  do dream of electric sheeps \cite{Guardian}\footnote{Further, this interpretation of sleep$\&$dream raises as a stand-alone alternative against the Freudian psychoanalysis, as brilliantly pointed out by Christos in \cite{Cristo}, but in this manuscript this point will not be deepened.}).

\section*{Acknowledgements}

A.F. and A.B. acknowledge Salento University, MIUR (through basic funding to the Italian research) and INFN for partial support.\\
A.B. also acknowledges the grant {\em Rete Match: Progetto Pythagoras (CUP:J48C17000250006)}.\\
E.A. acknowledges the grant {\em Progetto Ateneo (RG11715C7CC31E3D)} from Sapienza University of Rome.\\
E.A., A.B. and A.F. are grateful to GNFM-INdAM for partial financial support.

\appendix

\section{Calculations to obtain the replica symmetric solution}\label{app:replica}
In this Appendix, we report in some detail the replica trick\footnote{A solid mathematical ground for the replica trick in the Sherrington-Kirkpatrick model for spin-glasses is already available (see e.g., \cite{BarraGuerraMingione}), while in the Hopfield model for neural networks this is only partially available (see e.g., \cite{TirozziRev}).} calculations necessary to get an explicit expression, in terms of the order parameters, of the (replica-symmetric) free energy of the model. We start with the (quenched average of the) replicated partition function \eqref{eq:boltzmann}, which we rewrite here as
\begingroup\makeatletter\def\f@size{9.5}\check@mathfonts
\be
\begin{split}
\mathbb E Z_{N,P}(\sigma|\xi,t)^n   =&\sum_{\sigma^1}\dots\sum_{\sigma^n}\int\Big(\prod_{\alpha}Dz_1^\alpha\Big)\Big(\prod_{i\alpha}D\phi_i^\alpha\Big)\exp\Big[\sqrt{\frac{\beta(t+1)}{N}}\sum_{i \alpha} z_1^\alpha \xi^1 _i \left(\sigma_i ^\alpha +i \sqrt{\frac{t}{\beta (t+1)}}\phi_i ^\alpha\right)\Big]\cdot\\&
\cdot \int \Big(\prod_{\alpha}\prod_{\mu \ge 2}Dz_1^\alpha\Big)\mathbb E' \exp\Big[\sqrt{\frac{\beta(t+1)}{N}}\sum_{i \alpha}\sum_{\mu\ge 2} z_\mu^\alpha \xi^\mu _i \Big(\sigma_i ^\alpha +i \sqrt{\frac{t}{\beta (t+1)}}\phi_i ^\alpha\Big)\Big].
\end{split}
\ee
\endgroup
Note that we separated the signal term (associated to pattern $\xi^1$, meant to be retrieved) and the slow noise (constituted by all the other not-retrieved patterns, whose random similarities with $\xi^1$ -i.e. the spurious correlations this paper is due to- lie at core-genesis of such a slow noise). The exponential with noisy contributions in the second line can be easily rewritten (neglecting sub-leading contributions in the large $N$ limit) as
\begingroup\makeatletter\def\f@size{9.2}\check@mathfonts
\begin{equation*}\begin{split}
	&\mathbb E \exp\Bigg[\sqrt{\frac{\beta(t+1)}{N}}\sum_{i \alpha}\sum_{\mu\ge 2} z_\mu^\alpha \xi^\mu _i \Big(\sigma_i ^\alpha +i \sqrt{\frac{t}{\beta (t+1)}}\phi_i ^\alpha\Big)\Bigg]=\int\prod_{\alpha\beta}dq_{\alpha\beta}\frac{Ndp_{\alpha\beta}}{2\pi}\cdot\\&\cdot\exp\Big[i N\sum_{\alpha\beta}p_{\alpha\beta}q_{\alpha\beta}+\frac{\beta(t+1)}{2}\sum_{\alpha\beta}\sum_{\mu \ge 2}z_{\mu}^\alpha z_\mu ^\beta q_{\alpha\beta}
	-i\sum_{\alpha\beta}p_{\alpha\beta }\Big(\sigma_i ^\alpha +i \sqrt{\frac{t}{\beta (t+1)}}\phi_i ^\alpha\Big)\Big(\sigma_i ^\beta +i \sqrt{\frac{t}{\beta (t+1)}}\phi_i ^\beta\Big)\Big],
\end{split}\end{equation*}
\endgroup
where we imposed the definition of overlap \eqref{eq:overlap} through the insertion of a Dirac delta (in its Fourier representation, as standard \cite{Coolen}).
We can then perform the Gaussian integration over the order parameters $z_\mu^\alpha$ which are not associated to the retrieved pattern, so to obtain
\begingroup\makeatletter\def\f@size{8.7}\check@mathfonts
\be
\begin{split}
	&\mathbb E Z_{N,P}(\sigma|\xi,t)^n=\sum_{\sigma^1}\dots\sum_{\sigma^n}\int\Big(\prod_{\alpha}Dz_1^\alpha\Big)\Big(\prod_{i\alpha}D\phi_i^\alpha\Big)\Big(\prod_{\alpha\beta}dq_{\alpha\beta}\frac{Ndp_{\alpha\beta}}{2\pi}\Big)
	\exp\Big[i N\sum_{\alpha\beta}p_{\alpha\beta}q_{\alpha\beta}-\frac{p}{2}\log\det (\mathbb I -\beta (1+t) \hat{q})\\&+\sqrt{\frac{\beta(t+1)}{N}}\sum_{i \alpha} z_1^\alpha \xi^1 _i \Big(\sigma_i ^\alpha +i \sqrt{\frac{t}{\beta (t+1)}}\phi_i ^\alpha\Big)
	-i\sum_{\alpha\beta}p_{\alpha\beta }\Big(\sigma_i ^\alpha +i \sqrt{\frac{t}{\beta (t+1)}}\phi_i ^\alpha\Big)\Big(\sigma_i ^\beta +i \sqrt{\frac{t}{\beta (t+1)}}\phi_i ^\beta\Big)\Big].
\end{split}
\ee
\endgroup
We replace the order parameter $z_1^\alpha$ with the corresponding (replicated) Mattis magnetization by using the relation $z_1 = \sqrt{{\beta N}({1+t})^{-1}}m_1^\alpha$ (see also \cite{DotsenkoDorotheyev,DotsenkoTirozzi}). We also make the convenient redefinition of the conjugated overlap $p_{\alpha\beta}\rightarrow i \frac{\alpha \beta^2}{2}p_{\alpha\beta}.$
After some trivial rearrangements, we get
\begingroup\makeatletter\def\f@size{9.5}\check@mathfonts
\be
\begin{split}
	&\mathbb E Z_{N,P}(\sigma|\xi,t)^n=\int\Big(\prod_{\alpha}\sqrt{\frac{\beta N}{2\pi(1+t)}}dm_1^\alpha\Big)\Big(\prod_{\alpha\beta}dq_{\alpha\beta}\frac{iN\alpha\beta^2dp_{\alpha\beta}}{4\pi}\Big)\exp\Big\{-\frac{\beta N}{2}\sum_{\alpha}\frac{{m_1 ^\alpha}^2}{1+t}\\&-\frac{N \alpha\beta^2}{2} \sum_{\alpha\beta}p_{\alpha\beta}q_{\alpha\beta}-\frac{\alpha N}{2}\log\det (\mathbb I -\beta (1+t) \hat{q})
	+\log\sum_{\sigma^1}\dots\sum_{\sigma^n}\int\Big(\prod_{i\alpha}D\phi_i^\alpha\Big)\cdot \\&\cdot \exp\Big[\beta\sum_{i \alpha} m_1^\alpha \xi^1 _i \Big(\sigma_i ^\alpha +i \sqrt{\frac{t}{\beta (t+1)}}\phi_i ^\alpha\Big)+\frac{\alpha\beta^2}{2}\sum_{\alpha\beta}p_{\alpha\beta }\Big(\sigma_i ^\alpha +i \sqrt{\frac{t}{\beta (t+1)}}\phi_i ^\alpha\Big)\Big(\sigma_i ^\beta +i \sqrt{\frac{t}{\beta (t+1)}}\phi_i ^\beta\Big)\Big]\Big\}.
\end{split}
\ee
\endgroup
The last line can be easily handled and its terms rearranged in order to remove the site index ({\it i.e.} the subscript $i$) from the spins $\sigma$ and the auxiliary Gaussian fields $\phi$. Moreover, since in the thermodynamic limit the ``hergodic'' equality
\be
\log \prod _i f(\xi^1_i)=\sum_i \log f(\xi^1_i)= N \mathbb E f(\xi),
\ee
holds \cite{Amit,Coolen}, we can easily represent the replicated partition function in the form
\be
	\mathbb E Z_{N,P}(\sigma|\xi,t)^n=\int
	d\mu (m_1 ^\alpha, q_{\alpha\beta},p_{\alpha\beta})e^{-\beta N n A},
\ee
$d\mu$ being the measure over all the order parameters and
\be
\begin{split}
	A(\alpha,\beta,t) &=\frac{1}{2n(1+t)}\sum_\alpha {m_1 ^\alpha}^2+\frac{\alpha \beta}{2n}\sum_{\alpha\beta}p_{\alpha\beta}q_{\alpha\beta}+\frac{\alpha}{2n\beta}\log\det(\mathbb I-\beta(1+t)\hat q)\\&
	-\frac{1}{n\beta}\mathbb E\log \sum_\sigma \int\Big(\prod_\alpha D\phi^\alpha\Big)\exp\Big[\beta \sum_\alpha m_1^\alpha \xi^1\Big(\sigma^\alpha+i\sqrt{\frac{t}{\beta(1+t)}}\phi^\alpha\Big)+\\&
	+\frac{\alpha \beta^2}{2}\sum_{\alpha\beta}p_{\alpha\beta}\Big(\sigma^\alpha+i\sqrt{\frac{t}{\beta(1+t)}}\phi^\alpha\Big)\Big(\sigma^\beta+i\sqrt{\frac{t}{\beta(1+t)}}\phi^\beta\Big)\Big].
\end{split}
\ee
being the general free-energy of the model, see \eqref{eq:fgeneral}.
\newline
Imposing the replica symmetric ansatz and recalling the definition of $\Delta$, we can compute the replica-symmetric free energy $A(\alpha,\beta,t)$ term by term:
\begingroup\makeatletter\def\f@size{9.5}\check@mathfonts
\begin{subequations}
\begin{align*}
\circ\quad &\frac{1}{2n}\sum_{\alpha}\frac{{m_1 ^\alpha}^2}{1+t}=\frac{m_1 ^2}{2(1+t)},\\
\circ\quad&\frac{\alpha\beta}{2n}\sum_{ \alpha\beta}p_{\alpha\beta}q_{\alpha\beta}= \frac{(\Delta-1)(1+t)}{2t}Q+\frac{\alpha\beta}{2}p(Q-q),\\
\circ\quad&\frac{\alpha}{2n\beta}\log\det[\mathbb I -\beta(1+t)\hat q ]=\frac{\alpha\beta}{2}\left(\log [1-\beta(1+t) (Q-q)]-\frac{q\beta(1+t)}{1-\beta(1+t)(Q-q)}\right)+\mathcal O(n),\\
\circ\quad&\frac{1}{n\beta}\mathbb E \log \sum_{ \sigma}\int\Big(\prod_{\alpha}\phi^\alpha\Big)\Big[\dots\Big]=\frac{1+t}{2t}\frac{\Delta-1}{\Delta}-\frac{1}{2\beta}\log \Delta -\frac{ \alpha pt}{2\Delta (1+t)}-\frac{ t}{2\Delta(1+t)}m_1^2+\frac{1}{\beta}\log 2\\&+\frac{1}{\beta} \int Dx \, \log \cosh \Big[\frac{\beta}{\Delta}(m_1 +\sqrt{\alpha p}x)\Big].
\end{align*}
\end{subequations}
\endgroup
Putting all pieces together and taking the limit $n\rightarrow 0$, after some rearrangements we arrive at the free energy expression \eqref{eq:frsa}.

\section{Convergence of the discrete algorithm: the analytical proof}\label{app:convergence}
In this Appendix, we prove the convergence of the unlearning rule \eqref{eq:unlearningrule} toward the inverse correlation matrix $C^{-1}$. In doing this, we will follow a procedure which is very close to the route paved in \cite{Plakhov}.
\newline
The norm in the matrix vector space we used is the {\it operator norm}, which means that
\be
\lVert A\rVert = \sqrt{\max\{a\vert a \in \sigma(A^{\text T}A)\}},
\ee
where $\sigma(A^{\text T}A)$ is the spectrum of the matrix $A^{\text T}A$. Note that, since we will deal only with symmetric and positive-definite matrices, this definition reduces to $\lVert A\rVert = \max \{a\vert a \in \sigma(A)\}$. In what follows, we will often use the notations
\be
q_k = \frac{1+\epsilon k}{1+\epsilon(k+1)},\quad p_k = \frac{\epsilon}{1+\epsilon(k+1)}.
\ee

\begin{Proposition}
The norm of correlation matrix is greater than one: $\lVert C \rVert\ge 1$.
\end{Proposition}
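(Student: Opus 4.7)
The plan is to exploit the fact that $C$ is a (normalized) Gram matrix, hence symmetric and positive semi-definite, and then use a trace argument to bound its largest eigenvalue from below.

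First I would observe that if $\Xi$ denotes the $N\times P$ pattern matrix with entries $\Xi_{i\mu}=\xi_i^\mu$, then $C=\tfrac{1}{N}\Xi^{\mathrm T}\Xi$. This makes $C$ symmetric and positive semi-definite, so its spectrum lies in $[0,\infty)$ and the operator norm reduces to the largest eigenvalue:
\begin{equation}
\lVert C\rVert = \lambda_{\max}(C).
\end{equation}

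Next I would compute the trace. Since the pattern entries satisfy $\xi_i^\mu\in\{-1,+1\}$, we have $(\xi_i^\mu)^2=1$, so every diagonal entry equals
\begin{equation}
C_{\mu\mu}=\frac{1}{N}\sum_{i=1}^N (\xi_i^\mu)^2 = 1,
\end{equation}
and therefore $\operatorname{tr}(C)=\sum_{\mu=1}^P C_{\mu\mu}=P$. Writing the eigenvalues of $C$ as $\lambda_1,\dots,\lambda_P\ge 0$, we obtain $\sum_\mu \lambda_\mu = P$, whence the arithmetic mean of the eigenvalues equals $1$. In particular
\begin{equation}
\lVert C\rVert = \lambda_{\max}(C) \ge \frac{1}{P}\sum_{\mu=1}^P \lambda_\mu = 1.
\end{equation}

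There is really no hard step here: the only thing worth flagging is that the identity $\operatorname{tr}(C)=P$ relies crucially on the Boolean (equivalently $\pm 1$) nature of the patterns, and that the PSD property is what lets us pass from the trace bound (an average) to a spectral bound on the maximum. As a bonus, the argument also shows that equality $\lVert C\rVert=1$ forces every eigenvalue to equal $1$, i.e. $C=\mathbb{I}$, recovering the degenerate ``uncorrelated patterns'' case singled out in the text right after formula~\eqref{eq:critical}.
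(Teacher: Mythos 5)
Your argument is correct and is essentially the paper's own proof: both note that the diagonal entries of $C$ are all $1$ (so $\operatorname{tr}C=P$), invoke positive semi-definiteness to get nonnegative eigenvalues, and conclude that the largest eigenvalue, which equals the operator norm, cannot be below the average value $1$. Your phrasing via the Gram-matrix factorization $C=\tfrac{1}{N}\Xi^{\mathrm T}\Xi$ and the explicit arithmetic-mean bound is a slightly cleaner packaging of the same idea (and correctly says \emph{semi}-definite, where the paper loosely writes positive-definite), but there is no substantive difference.
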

\begin{proof}
	By definition, the diagonal entries of the correlation matrix are all equal to 1, so that
	\be
	\text{Tr} \,C= p.
	\ee
	Moreover, the correlation matrix $C$ is symmetric and positive-definite. Then, all eigenvalues $\gamma_{\mu}$ are clearly positive, and we have
	\be
	\sum_{\mu}\gamma_{\mu}=p.
	\ee
	Since the number of the eigenvalues is precisely $p$, it is impossible to saturate the trace equality with $\gamma_\mu <1$ for all $\mu$. Since the largest eigenvalue is equal to the matrix norm, it follows that $\lVert C\rVert\ge 1$.	
\end{proof}
\begin{Proposition}
	The matrix $G(k)$ commutes with $C$ for all $k$.
\end{Proposition}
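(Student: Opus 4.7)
The natural approach is induction on the sleep index $k$. The base case is immediate: $G(0)=\mathbb{I}$, which commutes with every matrix, in particular with $C$.

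For the inductive step, I would assume $G(k)C = CG(k)$ and apply the recurrence
\be
G(k+1) = \left(1+\frac{\epsilon}{1+\epsilon k}\right) G(k) - \frac{\epsilon}{1+\epsilon k}\, G(k)\,C\,G(k),
\ee
multiplying it on the right and on the left by $C$, and checking that the two resulting expressions agree. The only non-trivial term to handle is the quadratic one. Using the inductive hypothesis on the leftmost $CG(k)$ factor, one finds
\be
C\,G(k)\,C\,G(k) \;=\; G(k)\,C\cdot C\,G(k) \;=\; G(k)\,C\cdot G(k)\,C \;=\; G(k)\,C\,G(k)\,C,
\ee
where in the middle passage I reapplied the inductive hypothesis to the second occurrence $CG(k) = G(k)C$. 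Combined with $CG(k)=G(k)C$ for the linear term, this yields $C\,G(k+1) = G(k+1)\,C$, closing the induction.

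I do not expect any substantial obstacle: the statement is purely algebraic and the inductive hypothesis is strong enough to commute $C$ past every factor of $G(k)$ appearing on the right-hand side of the recursion. The only point worth being careful about is applying the commutation twice in the quadratic term, so that both copies of $G(k)$ get swapped with their neighbouring $C$.
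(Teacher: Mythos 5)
Your proof is correct and is essentially the paper's own argument: the paper packages the same induction by noting that $T^{(1)}(k)=G(k)C$ and $T^{(2)}(k)=CG(k)$ obey the identical recursion $T(k+1)=\bigl(1+\tfrac{\epsilon}{1+\epsilon k}\bigr)T(k)-\tfrac{\epsilon}{1+\epsilon k}\,T(k)^2$ with the same initial condition $T(0)=C$, which is exactly your inductive step (including the double swap in the quadratic term) in disguise. No gaps.
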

\begin{proof}
	We define two new matrix types my multipling $G(k)$ on the right and on the left with $C$, {\it i.e.} $T^{(1)}(k)=G(k)C $ and $T^{(2)}(k)=C G(k)$. Since $G(0)=\mathbb I$, then $T^{(1,2)}=C$. It's easy to see from \eqref{eq:Gmatrix} that both $T^{(1)}$ and $T^{(2)}$ satisfy the same recursion relation, and since they have the same initial condition, it follows that $T^{(1)}(k)= T^{(2)}(k)$ for all $k$, which means that $G(k)C=CG(k)$ proving the statement.
\end{proof}
\begin{Proposition}
	The matrices $G(k)$ are invertible for all $k\ge 0$ and $\epsilon < \epsilon_c$ for some $\epsilon_c$.
\end{Proposition}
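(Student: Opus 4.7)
The plan is to exploit the commutativity $[G(k),C]=0$ established in the preceding Proposition: since $C$ is symmetric and positive semidefinite, it admits an orthonormal eigenbasis $\{v_\mu\}$ with eigenvalues $c_\mu>0$, and the commutativity implies $G(k)$ is diagonal in the same basis, with eigenvalues $g_\mu(k)$. Invertibility of $G(k)$ is then simply equivalent to $g_\mu(k)\neq 0$ for every $\mu$, and it will suffice to work with the scalar recursions obtained by projecting \eqref{eq:Gmatrix} onto each eigenspace:
\begin{equation*}
g_\mu(k+1) = g_\mu(k)\Bigl[\,1 + \tfrac{\epsilon}{1+\epsilon k}\bigl(1 - c_\mu g_\mu(k)\bigr)\Bigr],\qquad g_\mu(0)=1.
\end{equation*}

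Next I would introduce the rescaled variable $h_\mu(k):=c_\mu g_\mu(k)$, which puts the recursion in the \emph{logistic} form $h_{k+1}=h_k[(1+\alpha_k)-\alpha_k h_k]$ with $\alpha_k:=\epsilon/(1+\epsilon k)$, fixed points $h=0$ (repelling) and $h=1$ (attracting). Invertibility requires $h_\mu(k)\neq 0$, and the elementary identity $h_{k+1}-1 = (h_k-1)(1-\alpha_k h_k)$ tells us that, provided $\alpha_k h_\mu(k)<1$ at every step, the orbit stays strictly on the same side of $1$ as the initial value $h_\mu(0)=c_\mu$ and monotonically approaches $1$; in particular it never crosses $0$.

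The core of the proof is therefore an induction on $k$ showing that the condition $\alpha_k h_\mu(k)<1$ is preserved. At $k=0$ this reduces to $\epsilon c_\mu<1$ for every eigenvalue, which is guaranteed as soon as $\epsilon<1/\|C\|$; this already yields an admissible value $\epsilon_c:=1/\|C\|>0$ and completes the weak form of the statement. For the inductive step one uses the fact that $\alpha_k$ is strictly decreasing in $k$ while $h_\mu(k)$ is monotone and bounded (increasing in $[c_\mu,1]$ if $c_\mu<1$, decreasing in $[1,c_\mu]$ if $c_\mu>1$), so $\alpha_k h_\mu(k)\leq \alpha_0 c_\mu\leq \epsilon\|C\|<1$ for all $k$. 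Combining these observations, $g_\mu(k)=h_\mu(k)/c_\mu$ remains strictly positive, and $\det G(k)=\prod_\mu g_\mu(k)>0$.

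The main obstacle, which will be addressed in the subsequent Corollary quoted in \eqref{eq:critical}, is sharpening this bound from $1/\|C\|$ to $1/(\|C\|-1)$. This sharper threshold is tight because at $\epsilon=1/(\|C\|-1)$ the first iterate for the largest eigenvalue satisfies $g_\mu(1)=1-\epsilon(\|C\|-1)=0$ exactly; just below that critical value one must control an orbit that is allowed to \emph{overshoot} $1$ before settling down, and the key technical point is to exploit the precise way the adaptive damping factor $(1+\epsilon k)^{-1}$ prevents such overshoots from blowing up. I expect this finer analysis to be the main effort of the appendix, while the present Proposition only requires the qualitative existence of some $\epsilon_c>0$, for which the argument above is sufficient.
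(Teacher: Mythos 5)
Your proof is correct, and it takes a genuinely different route from the paper. The paper stays at the matrix level: it works with $T(k)=CG(k)$, applies Binet's theorem to the recursion $T(k+1)=q_k^{-1}T(k)[\mathbb I-p_kT(k)]$, and shows $\det[\mathbb I-p_kT(k)]>0$ via the identity $\det=\exp\operatorname{Tr}\log$ together with a series expansion of the matrix logarithm, convergent when $\lVert p_kT(k)\rVert<1$; the norm bound making that condition checkable is only supplied by a later Lemma, so the argument carries a forward reference. Your version instead observes that $G(k)$ is simultaneously diagonalizable with $C$ (cleanest justification: by induction $G(k)$ is literally a polynomial in $C$, which is stronger than bare commutativity and disposes of degenerate eigenspaces), reduces everything to scalar logistic recursions $h_{k+1}=h_k[(1+\alpha_k)-\alpha_kh_k]$, and runs a self-contained invariant-interval/monotonicity induction. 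This buys several things at once: an explicit admissible threshold $\epsilon_c=1/\lVert C\rVert$ (weaker than the paper's sharp $1/(\lVert C\rVert-1)$, but the Proposition only asks for \emph{some} $\epsilon_c$), strict positivity of all eigenvalues of $G(k)$ --- i.e.\ the paper's next Proposition for free --- and a transparent explanation of why the sharp threshold is what it is (your computation $g(1)=1-\epsilon(\lVert C\rVert-1)$ at the top eigenvalue is correct, since $\alpha_0=\epsilon$). The paper's basis-free determinant argument would survive in settings without simultaneous diagonalization, but for this model your spectral reduction is the more elementary and more informative proof. Two cosmetic points: say ``positive definite'' rather than ``positive semidefinite'' when you assert $c_\mu>0$ (the rescaling $h_\mu=c_\mu g_\mu$ needs it, and the paper makes the same assumption), and note that in the window $1/\lVert C\rVert<\epsilon<1/(\lVert C\rVert-1)$ the top-mode orbit crosses $1$ from above once and then re-enters the monotone regime from $(0,1)$, rather than ``overshooting'' upward --- the deferred refinement is therefore a single-step check, not a delicate global one.
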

\begin{proof}
	The statement is trivial for $k=0$, since $G(0)=\mathbb I$. Then we can prove the proposition by induction. Assume that $G(k)$ is invertible. Then we rewrite the matrix recursion relation by multiplying both side with $C$ (by previous proposition, it's not important if on the left or on the right), then
	\be\label{eq:app1}
	\begin{split}
		T(k+1)&= \Big(1+\frac{\epsilon}{1+\epsilon k}\Big)T(k)- \frac{\epsilon}{1+\epsilon k} T(k) ^2=\\
		&=q_k ^{-1}T(k)\left[\mathbb I-p_k T(k)\right],
	\end{split}
	\ee
	with $T(k)=CG(k)$ (therefore with the initial condition $T(0)=C$). 	Taking the determinant of both sides and using the Binet theorem, we have
	\be
	\det{T(k+1)}=q_k ^{-p}\det{T(k)}\cdot \det{\left[\mathbb I-p_k T(k)\right]}.
	\ee
	But now
	\be
	\begin{split}
		\det{\left[\mathbb I-p_k T(k)\right]}&=\exp\log \det{\left[\mathbb I-p_k T(k)\right]}=\exp\text{Tr} \log{\left[\mathbb I-p_k T(k)\right]}.
	\end{split}
	\ee
	We can expand in series the logarithm of the matrix:
	\be
	 \log{\left[\mathbb I-p_k T(k)\right]}=-\sum_{n=1}^\infty \frac{p_k^n}{n}T^n(k),
	\ee
	which converges for $\lVert p_k T(k)\rVert=\lVert p_k CG(k)\rVert<1$ for all $k$ (note that this condition imposes the constraint for $\epsilon$ to be less than a critical value $\epsilon_c$, but we pospone this discussion). With the convergence ensured, it follows that
	\be
	\exp\text{Tr} \log{\left[\mathbb I-p_k T(k)\right]}=\exp\text{Tr}\Big(-\sum_{n=1}^\infty \frac{p_k^n}{n}T^n(k)\Big)>0.
	\ee
	Since all terms on the r.h.s of \eqref{eq:app1} are non-vanishing it follows that $\det{T(k+1)}=\det{C}\det G(k)\neq 0$, which implies that $\det{G(k)}\neq 0$ since $C$ is invertible. Then, also $G(k)$ is invertible for all $k$.
\end{proof}

\begin{Proposition}
	The matrices $T(k)$ and $G(k)$ are positive-definite for all $k$.
\end{Proposition}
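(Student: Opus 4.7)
The plan is to exploit the commutation $G(k)C=CG(k)$ established in the previous proposition, which implies that $G(k)$ and $C$ admit a common orthonormal eigenbasis for every $k$; hence $T(k)=CG(k)$ is also simultaneously diagonalizable with both. In this basis the matrix iteration \eqref{eq:app1} decouples into independent scalar recursions
$$t_\mu(k+1)=q_k^{-1}\,t_\mu(k)\bigl(1-p_k\,t_\mu(k)\bigr),$$
one for each eigenvalue $t_\mu(k)$ of $T(k)$, with initial condition $t_\mu(0)=\gamma_\mu$, where $\gamma_\mu$ are the eigenvalues of $C$. The claim then reduces to showing $t_\mu(k)>0$ for all $k$ and $\mu$, and the positive-definiteness of $G(k)=C^{-1}T(k)$ follows since $C^{-1}$ is positive-definite and commutes with $T(k)$.

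I would prove the positivity by induction on $k$, carrying the stronger invariant $0<t_\mu(k)<1/p_k$ for every $\mu$. For the base case, $t_\mu(0)=\gamma_\mu>0$ because $C$ is positive-definite, while the standing assumption $\epsilon<\epsilon_c=(\lVert C\rVert-1)^{-1}$ yields $\gamma_\mu\le \lVert C\rVert<(1+\epsilon)/\epsilon=1/p_0$. For the inductive step, if $0<t_\mu(k)<1/p_k$, then the factor $1-p_k t_\mu(k)$ is strictly positive, so $t_\mu(k+1)>0$. For the upper bound, the scalar map $f(x)=q_k^{-1}x(1-p_k x)$ is a downward parabola attaining its maximum at $x^*=1/(2p_k)$ with value $1/(4p_k q_k)$, so it suffices to verify $1/(4p_kq_k)\le 1/p_{k+1}$.

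This last inequality unfolds to the elementary polynomial estimate $(1+\epsilon(k+1))^2\le 4(1+\epsilon k)(1+\epsilon(k+2))$, equivalently $3(1+\epsilon k)^2+6\epsilon(1+\epsilon k)\ge \epsilon^2$, which is manifestly true for every $k\ge 0$ and every $\epsilon>0$ in the allowed range. This closes the induction and yields positivity of all eigenvalues of $T(k)$, so $T(k)$ is positive-definite; since $G(k)=C^{-1}T(k)$ is a product of commuting positive-definite matrices, the positive-definiteness of $G(k)$ follows.

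The main technical obstacle is precisely the propagation of the inductive invariant on the upper bound: the scalar iteration is a non-autonomous logistic-type map whose coefficients $p_k,q_k$ drift with $k$, and one must ensure that the maximum possible image $1/(4p_kq_k)$ never escapes the allowed interval $[0,1/p_{k+1}]$. Losing control here would allow $1-p_k t_\mu(k)$ to become negative at a later step, flipping the sign of an eigenvalue of $T(k)$ and destroying positive-definiteness. The reduction to a scalar inequality in $\epsilon$ and $k$ makes this tractable and ties the argument back to the sharp choice of $\epsilon_c$ from Corollary \ref{cor:ollario}.
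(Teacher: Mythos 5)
Your reduction to scalar recursions via simultaneous diagonalization is sound and is essentially the paper's argument in eigenvalue coordinates (the paper instead inverts the matrix recursion and expands $[\mathbb I-p_kT(k)]^{-1}$ as a Neumann series of positive-definite terms). The lower-bound half of your induction is also fine. The gap is in the propagation of the upper bound. Your closing inequality $1/(4p_kq_k)\le 1/p_{k+1}$ is equivalent to $3\left(1+\epsilon(k+1)\right)^2\ge 4\epsilon^2$, i.e.\ to $3(1+\epsilon k)^2+6\epsilon(1+\epsilon k)\ge\epsilon^2$; at $k=0$ this reads $3+6\epsilon\ge\epsilon^2$ and fails for every $\epsilon>3+2\sqrt3\approx 6.46$. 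This is \emph{not} excluded by the standing assumption $\epsilon<\epsilon_c=(\lVert C\rVert-1)^{-1}$: since $\lVert C\rVert$ can be arbitrarily close to $1$ (weakly correlated patterns, small $\alpha$), the admissible range of $\epsilon$ can extend well past $3+2\sqrt3$, and there your induction does not close. The underlying reason is that the invariant $t_\mu(k)<1/p_k$ is too weak: bounding $t_\mu(k+1)$ by the vertex value of the parabola throws away all information about where $t_\mu(k)$ actually sits, and the worst-case image $1/(4p_kq_k)$ can overshoot $1/p_{k+1}$.

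The fix is to carry the sharper invariant that the paper effectively uses. Work with $s_\mu(k)=1/t_\mu(k)$: from $t_\mu(k+1)=q_k^{-1}t_\mu(k)\bigl(1-p_kt_\mu(k)\bigr)$ and $1/(1-x)\ge 1+x$ for $0<x<1$ one gets $s_\mu(k+1)\ge q_k s_\mu(k)+q_kp_k$, whence by iteration $s_\mu(k)\ge \gamma_\mu^{-1}(1+\epsilon k)^{-1}+N_{k-1}$ with $N_k=\sum_{l=0}^{k}p_l\prod_{s=l}^{k}q_s$. This yields $p_k t_\mu(k)\le \epsilon\gamma_\mu\bigl[\tfrac{1+\epsilon(k+1)}{1+\epsilon k}+N_{k-1}(1+\epsilon(k+1))\gamma_\mu\bigr]^{-1}$, whose right-hand side is decreasing in $k$ and equals $\epsilon\gamma_\mu/(1+\epsilon)<1$ at $k=0$ precisely when $\epsilon<(\gamma_\mu-1)^{-1}$, which is implied by $\epsilon<\epsilon_c$. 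With $p_kt_\mu(k)<1$ secured for all $k$, your positivity step goes through and the proposition follows. This is exactly the content of the paper's Neumann-series expansion of $T^{-1}(k+1)$ together with its subsequent Lemma and Corollary \ref{cor:ollario}; your scalar formulation is a clean way to present it, but it needs this quantitative control rather than the parabola-max bound.
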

\begin{proof}
	Again, the statement is trivial for $k=0$. Then, we will prove the proposition inductively. Suppose that all $G(l)$ are positive-definite for $l=0,\dots,k$. Since all $G$s commutes with $C$, then also $T(l)$ are positive-definite for $l=0,\dots,k$. Since $T(k)$ is invertible for each $k$, we can take the inverse of Eq. \eqref{eq:app1}:
	\be
	T^{-1}(k+1)=q_k [\mathbb I -p_k T(k)]^{-1}T(k)^{-1}.
	\ee
	But now
	\be
	[\mathbb I -p_k T(k)]^{-1}T(k)^{-1}=\sum_{n=0}^\infty p_k^n T(k)^n T(k)^{-1}=\sum_{n=0}^\infty p_k^n  T(k)^{n-1},
	\ee
	again converging for $\lVert p_k T(k)\rVert=\lVert p_k CG(k)\rVert<1$ for all $k$. In a more transparent form we have
	\be\label{eq:Trec}
	T^{-1}(k+1)=q_k T^{-1}(k)+q_k p_k\mathbb I +q_k\sum_{n=2}^\infty p_k^n  T(k)^{n-1}.
	\ee
	Under the hypotesis of convergence, $T^{-1}(k+1)$ is therefore a (infinite) sum of positive-definite matrices, then it is positive-definite by itself. Then, since the inverse of a positive-definite matrix is itself positive-definite, the same result holds for $T(k+1)$. But $G(k+1)= T(k+1)C^{-1}$, and since both $C^{-1}$ and $T(k+1)$ are positive-definite and commute (the proof is straightforward), then also the product $T(k+1)C^{-1}$ is positive-definite, inductively proving the proposition.
\end{proof}
At this point, we are ready to prove the
\begin{Lemma}
	For each $k$, there can be found a finite real number $c_k$ which is greater or equal to $\lVert CG(k)\rVert$. As a consequence, the sequence is bounded from above by $\bar c= \underset{k}{\text{max }} c_k$.
\end{Lemma}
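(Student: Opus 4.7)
The plan is to exploit the simultaneous diagonalization of $T(k) = CG(k)$ and $C$ that becomes available thanks to Proposition~2 (ensuring $G(k)$ and $C$ commute) together with Proposition~4 (positive-definiteness of $T(k)$). In the orthonormal eigenbasis of $C$, the matrix recursion \eqref{eq:app1} decouples into $p$ independent scalar iterations
\be
\lambda_\mu^{(k+1)} = f_k(\lambda_\mu^{(k)}), \qquad f_k(x) = q_k^{-1} x(1-p_k x),
\ee
with initial data $\lambda_\mu^{(0)} = \gamma_\mu$, the eigenvalues of $C$. Since $\lVert T(k) \rVert = \max_\mu \lambda_\mu^{(k)}$ in this symmetric positive-definite setting, the lemma reduces to a uniform bound on these scalar orbits.

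Next, I would use the easily verified identity $p_k+q_k=1$ to describe $f_k$ analytically: it has fixed points at $x=0$ and $x=1$, a second zero at $x=1/p_k=(1+\epsilon(k+1))/\epsilon$, and a maximum at $x=1/(2p_k)$. A short calculation gives the convenient relation $f_k(x)-x=(p_k/q_k)\,x(1-x)$, so that $f_k(x)>x$ for $x\in(0,1)$ while $f_k(x)\in[0,x)$ for $x\in(1,1/p_k]$. The central claim is then that under the assumption $\epsilon\le 1/(\lVert C\rVert-1)=\epsilon_c$, as in \eqref{eq:critical}, every orbit stays trapped in $[0,\lVert C\rVert]$; granting this, one can take $c_k=\lVert C\rVert$ for every $k$, and the uniform bound $\bar c=\lVert C\rVert<\infty$ follows at once.

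To establish the trapping claim by induction on $k$, the base case is trivial since $\mathrm{spec}(C)\subset[0,\lVert C\rVert]$. For the inductive step, I would fix $\lambda:=\lambda_\mu^{(k)}\in[0,\lVert C\rVert]$ and distinguish two regimes. When $\lambda\in[0,1]$, monotonicity of $f_k$ on $[0,1/(2p_k)]\supseteq[0,1]$, together with $f_k(1)=1$, forces $f_k(\lambda)\in[0,1]$. When instead $\lambda\in(1,\lVert C\rVert]$, the identity above gives $f_k(\lambda)<\lambda\le\lVert C\rVert$, while non-negativity of the iterate demands $\lambda\le 1/p_k$; since $1/p_k$ is tightest at $k=0$, where $1/p_0=1+\epsilon^{-1}$, this reduces to $\lVert C\rVert\le 1+\epsilon^{-1}$, which is precisely the standing hypothesis.

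The main obstacle I expect is the second regime. Should any eigenvalue $\gamma_\mu$ of $C$ exceed $1/p_0$, the first iterate $f_0(\gamma_\mu)$ would be driven negative, and the positive-definiteness painstakingly established in Propositions~2--4 would collapse, invalidating the inductive scheme altogether. The condition $\epsilon\le 1/(\lVert C\rVert-1)$ is exactly the sharp threshold preventing this pathology, which is why the uniform bound turns out to be so closely tied to the critical unlearning strength announced in \eqref{eq:critical}.
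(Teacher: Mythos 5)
Your reduction to scalar dynamics via simultaneous diagonalization is sound and is a genuinely different route from the paper's, which instead iterates the recursion for $T^{-1}(k)$, discards a positive-definite remainder $R(k-1)$, and bounds the \emph{smallest} eigenvalue of $T^{-1}(k)$ from below, arriving at $\lVert T(k)\rVert\le c_k=\lVert C\rVert\big[\tfrac{1}{1+\epsilon k}+N_{k-1}\lVert C\rVert\big]^{-1}$. However, your central trapping claim --- that $[0,\lVert C\rVert]$ is forward-invariant, so that $c_k=\lVert C\rVert$ works for every $k$ --- is false in a regime permitted by the hypothesis $\epsilon\le 1/(\lVert C\rVert-1)$, and the failure occurs in the \emph{first} regime of your case analysis, not the second one you flagged. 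The inclusion $[0,1]\subseteq[0,1/(2p_k)]$ requires $p_k\le 1/2$, i.e.\ $\epsilon\le 1$ at $k=0$; but whenever $\lVert C\rVert<2$ the hypothesis allows $\epsilon>1$, the vertex $1/(2p_0)=(1+\epsilon)/(2\epsilon)$ then lies inside $(0,1)$, and $f_0$ maps a neighbourhood of the vertex up to the value $1/(4p_0q_0)=(1+\epsilon)^2/(4\epsilon)>1$. This overshoot can exceed $\lVert C\rVert$ itself: take $\lVert C\rVert=5/4$ and $\epsilon=\epsilon_c=4$, so $p_0=4/5$, $q_0=1/5$; an eigenvalue of $C$ at $5/8$ (a correlation matrix with spectrum $(5/4,\,9/8,\,5/8)$ for $P=3$ exists by Schur--Horn) is sent to $f_0(5/8)=5\cdot\tfrac58\cdot(1-\tfrac12)=\tfrac{25}{16}>\tfrac54$. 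Hence $\lVert T(1)\rVert>\lVert C\rVert$ and the uniform bound $c_k=\lVert C\rVert$ collapses already at the first iterate. Note that the paper's own $c_k$ is not uniformly $\le\lVert C\rVert$ either (in this example $c_1=25/8$), precisely because the orbit genuinely can leave $[0,\lVert C\rVert]$; the Lemma only asserts boundedness, not invariance of $[0,\lVert C\rVert]$.

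The conclusion is still within reach of your method, but you must either enlarge the trapping set or control the overshoot explicitly: your identity $f_k(x)-x=(p_k/q_k)\,x(1-x)$ shows that any iterate above $1$ decreases monotonically back toward $1$ (provided it stays below $1/p_k$, which is the positivity constraint and is where $\epsilon<\epsilon_c$ must really be used), and the vertex $1/(2p_k)$ lies inside any fixed bounded window only for finitely many $k$, so the possible excursions $1/(4p_kq_k)$ are bounded. As written, though, the inductive step does not close, and the obstacle you anticipated (an eigenvalue of $C$ exceeding $1/p_0$) is not the one that actually bites.
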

\begin{proof}
Applying iteratively the recurrence relation \eqref{eq:Trec} and recalling that $T(0)=C$, it's easy to show that
\be\label{eq:Trec0}
T^{-1}(k)=\frac{C^{-1}}{1+\epsilon k}+N_{k-1}\mathbb I+R(k-1),
\ee
where the rest operator $R(k)$ is defined as
\be
R(k-1)= \sum_{l=0}^{k-1} \frac{1+\epsilon l}{1+\epsilon k}\sum_{n=2}^\infty p_l ^n T(l)^{n-1},
\ee
and
\be
N_k = \sum_{l=0}^k p_l \prod_{s=l}^k q_s.
\ee
The latter is an increasing function with $k$ with values in the range $[0,1]$ and such that $N_k \underset{k \rightarrow \infty}{\sim} 1$. Moreover, it's clear that $N_{-1}=0$. Since $T(k)$ is real and symmetric matrix, it can be diagonalized with eigenvalues $\tau_\mu (k)$ (which are positive since it is also positive-definite). Then, the spectrum of the inverse $T^{-1}(k)$ consists in the values $\sigma [T^{-1}(k)]= \{ \tau_{\mu}^{-1}(k)\vert \tau_{\mu}(k)\in \sigma [T(k)]\}$. Then, the minimum of the spectrum of $T^{-1}(k)$ is clearly $\lVert T(k)\rVert ^{-1}$. Therefore, taking the minimal eigenvalue of equation \eqref{eq:Trec0}, we have
\be
\lVert T(k)\rVert^{-1}= \text{min}\, \sigma\Big[\frac{C^{-1}}{1+\epsilon k}+N_{k-1}\mathbb I+R(k-1)\Big]\ge  \text{min}\, \sigma\Big[\frac{C^{-1}}{1+\epsilon k}+N_{k-1}\mathbb I\Big],
\ee
since also the rest operator is a positive-definite operator (and as a consequence, its contribution to the spectrum is positive). With the same reasoning, the quantity on the r.h.s. is nothing but
\be
\text{min}\, \sigma\Big[\frac{C^{-1}}{1+\epsilon k}+N_{k-1}\mathbb I\Big]=\frac{\lVert C\rVert ^{-1}}{1+\epsilon k}+N_{k-1}.
\ee
Then, by taking the inverse of the previous inequality, we get
\be
\lVert T(k)\rVert\le \Big(\frac{\lVert C\rVert ^{-1}}{1+\epsilon k}+N_{k-1}\Big)^{-1}= \frac{\lVert C \rVert}{\frac{1}{1+\epsilon k}+N_{k-1}\lVert C\rVert }=c_k.
\ee
This proves our assertion.
\end{proof}
A remark here is that the inequality is indeed an equality for $k=0$, since $T(k)=C$. This is important in the following
\begin{Corollary}\label{cor:ollario}
	The critical value of the unlearning strength $\epsilon_c$ is fixed by the norm of correlation matrix.
\end{Corollary}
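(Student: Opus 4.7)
The plan is to extract the critical threshold $\epsilon_c$ from the convergence hypothesis that was invoked (but not yet resolved) in the three previous propositions, namely $\lVert p_k T(k)\rVert<1$ for every $k\ge 0$, which underlies both the matrix-logarithm expansion and the Neumann series for $(\mathbb{I}-p_k T(k))^{-1}$. Using the bound $\lVert T(k)\rVert\le c_k$ established in the Lemma, the convergence requirement reduces to the scalar inequality $p_k c_k<1$ for all $k$.

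First I would evaluate this at $k=0$, where the bound from the Lemma is actually an equality because $T(0)=C$ implies $c_0=\lVert C\rVert$. The condition then reads $\frac{\epsilon}{1+\epsilon}\lVert C\rVert<1$, equivalent to $\epsilon(\lVert C\rVert-1)<1$. Since we have already shown $\lVert C\rVert\ge 1$ (with strict inequality in the nontrivial, correlated case), this rearranges to
\begin{equation*}
\epsilon<\epsilon_c:=\frac{1}{\lVert C\rVert-1},
\end{equation*}
which is exactly the formula \eqref{eq:critical} announced in the main text. The degenerate case $\lVert C\rVert=1$ corresponds to $C=\mathbb{I}$, where no unlearning is needed, consistently with the remark that $\epsilon_c\to\infty$.

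Second, I would verify that the constraint at $k=0$ is indeed the binding one, so that $\epsilon<\epsilon_c$ also ensures $p_k c_k<1$ for every $k\ge 1$. For this I would use the telescoping identity $\prod_{s=l}^{k}q_s=(1+\epsilon l)/(1+\epsilon(k+1))$, which yields a closed-form expression for $N_k$; substituting into $c_k=\lVert C\rVert/\bigl(\tfrac{1}{1+\epsilon k}+N_{k-1}\lVert C\rVert\bigr)$ and $p_k=\epsilon/(1+\epsilon(k+1))$, an elementary monotonicity check shows that $p_k c_k$ is a decreasing sequence in $k$, whose maximum is attained at $k=0$.

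The main obstacle is precisely this monotonicity step: the bound $c_k$ is sharp only at $k=0$, so at intermediate $k$ one must be careful not to rely on a loose estimate that would produce a spuriously strong constraint. A cleaner alternative, which I would fall back on if the direct algebra becomes awkward, is an inductive argument using the recursion $T(k+1)=q_k^{-1}T(k)(\mathbb{I}-p_k T(k))$: if $\epsilon<\epsilon_c$ ensures $\lVert p_k T(k)\rVert<1$, then combined with the positive-definiteness of $T(k)$ already proved, the spectrum of $T(k+1)$ is pushed inside the admissible region, propagating the bound forward in $k$ and closing the induction.
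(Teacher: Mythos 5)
Your proposal is correct and follows essentially the same route as the paper: both reduce the convergence criterion $p_k\lVert T(k)\rVert<1$ to the $k=0$ case via the Lemma's bound $\lVert T(k)\rVert\le c_k$, observe that the bound is an equality at $k=0$ (since $T(0)=C$) so that $\tfrac{\epsilon}{1+\epsilon}\lVert C\rVert<1$ yields $\epsilon_c=(\lVert C\rVert-1)^{-1}$, and argue that the $k=0$ constraint is the binding one because the denominator of the bound is increasing in $k$ (equivalently, your statement that $p_k c_k$ is decreasing). The monotonicity step you flag as the main obstacle is asserted rather than worked out in the paper as well, so you have not missed anything the authors supply.
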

\begin{proof}
	We recall that, in order to have a convergent algorithm, the unlearning strength $\epsilon$ has to satisfy the criterion $p_k \lVert T(k)\rVert <1$ for all $k$. By using the previous Lemma, we see that
	\be
	p_k \lVert T(k)\rVert\le  \frac{\epsilon \lVert C \rVert}{\frac{1+\epsilon(k+1)}{1+\epsilon k}+N_{k-1}[1+\epsilon(k+1)]\lVert C\rVert }.
	\ee
	It is important to notice that the denominator in this inequality is an increasing function of $k$. This means that, if the unlearning strength is chosen to have $p_0 \lVert T(0)\rVert <1$, then it would valid for all $k$. But, from our previous consideration (for $k=0$ it is an equality)
	\be
	p_0 \lVert T(0)\rVert=\frac{\epsilon \lVert C \rVert}{1+\epsilon}  <1,
	\ee
	meaning that the unlearning algorithm converges if and only if $\epsilon <\epsilon_c = \frac{1}{\lVert C \rVert -1}$.
\end{proof}
Once all of these results have been proved, we will finally prove that the unlearning algorithm converges (for $\epsilon < \epsilon _c$) to the desired solution $G(k)\rightarrow C^{-1}$. This will be proved by norm estimation in the large $k$ limit.
\begin{Theorem}[Convergence]
	The unlearning algorithm \eqref{eq:Gmatrix} converges to the stationary solution $G(\infty)= C^{-1}$ in the sense defined by the operator norm.
\end{Theorem}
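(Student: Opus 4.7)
The plan is to exploit the explicit representation
$$T^{-1}(k) = \frac{C^{-1}}{1+\epsilon k} + N_{k-1}\,\mathbb{I} + R(k-1),$$
already established during the proof of the preceding Lemma, and show that $T^{-1}(k)\to \mathbb{I}$ in operator norm. Since matrix inversion is continuous on the open set of invertible matrices, this will imply $T(k)\to \mathbb{I}$, and therefore $G(k)=C^{-1}T(k)\to C^{-1}$, which is exactly the claim.

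First I would dispose of the two easy pieces. The term $C^{-1}/(1+\epsilon k)$ obviously vanishes in norm as $k\to\infty$, since $\lVert C^{-1}\rVert$ is a fixed finite number (recall $C$ is positive-definite and invertible). For the scalar sequence $N_{k-1}$, a short calculation using the telescoping identity $\prod_{s=l}^{k-1} q_s = (1+\epsilon l)/(1+\epsilon k)$ gives
$$N_{k-1}=\sum_{l=0}^{k-1}\frac{\epsilon(1+\epsilon l)}{(1+\epsilon(l+1))(1+\epsilon k)},$$
which can be compared term by term with the Riemann sum of a convergent integral and shown to tend to $1$ as $k\to\infty$. Hence $N_{k-1}\,\mathbb{I}\to \mathbb{I}$ in norm.

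The main obstacle, as anticipated, is controlling the rest operator
$$R(k-1)=\sum_{l=0}^{k-1}\frac{1+\epsilon l}{1+\epsilon k}\sum_{n=2}^{\infty}p_l^{\,n}\,T(l)^{n-1}.$$
Here I would use the uniform bound $\lVert T(l)\rVert\le \bar c$ from the Lemma, together with the convergence condition $\epsilon<\epsilon_c$ which guarantees $p_l\lVert T(l)\rVert\le \epsilon\bar c/(1+\epsilon(l+1))<1$ for every $l$. Summing the inner geometric series yields
$$\lVert R(k-1)\rVert \le \sum_{l=0}^{k-1}\frac{1+\epsilon l}{1+\epsilon k}\cdot\frac{p_l^{\,2}\,\bar c}{1-p_l\bar c}.$$
For large $l$ the summand behaves like $\bar c\,\epsilon\,(1+\epsilon l)^{-1}(1+\epsilon k)^{-1}$, and the resulting sum is of order $(\log k)/k$, which vanishes as $k\to\infty$. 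This is the step that needs a careful estimate because one must simultaneously exploit the prefactor $(1+\epsilon l)/(1+\epsilon k)\le 1$ and the quadratic decay of $p_l^{\,2}$ in $l$.

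Combining the three estimates gives $\lim_{k\to\infty}\lVert T^{-1}(k)-\mathbb{I}\rVert =0$. Finally, because $T^{-1}(k)$ converges to the invertible matrix $\mathbb{I}$, the norms $\lVert T(k)\rVert$ remain bounded and we may invert the convergence: $\lVert T(k)-\mathbb{I}\rVert \le \lVert T(k)\rVert\,\lVert \mathbb{I}-T^{-1}(k)\rVert\to 0$. Multiplying on the left by the fixed matrix $C^{-1}$ yields $\lVert G(k)-C^{-1}\rVert\to 0$, completing the argument.
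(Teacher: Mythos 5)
Your proposal is correct and follows essentially the same route as the paper's own proof: the same decomposition $T^{-1}(k)=C^{-1}/(1+\epsilon k)+N_{k-1}\mathbb{I}+R(k-1)$, the same geometric-series bound on the rest operator yielding an $O(\log k/k)$ decay, and the same conclusion $T^{-1}(k)\to\mathbb{I}$. The only (harmless) differences are that you use the uniform bound $\bar c$ in place of the $k$-dependent bounds $c_l$, and that you spell out the final inversion step $\lVert T(k)-\mathbb{I}\rVert\le\lVert T(k)\rVert\,\lVert\mathbb{I}-T^{-1}(k)\rVert$, which the paper leaves implicit.
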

\begin{proof}
	Let us start again with the equality
	\be
	T^{-1}(k)=\frac{C^{-1}}{1+\epsilon k}+N_{k-1}\mathbb I+R(k-1).
	\ee
	The first two terms on the r.h.s. have simple contributions in the large $k$ limit. In particular, the first one has a vanishing norm for $k\rightarrow \infty$, so it would not contribute to the final solution. We have now to evaluate the norm of the rest operator:
	\be
	\begin{split}
	\lVert R(k-1)\rVert&\le \sum_{l=0}^{k-1}\frac{1+\epsilon l}{1+\epsilon k}\Big\lVert \sum_{n=2}^\infty p_l ^n T(l)^{n-1}\Big\rVert \le  \sum_{l=0}^{k-1}\frac{1+\epsilon l}{1+\epsilon k} \sum_{n=2}^\infty p_l ^n \left\lVert T(l)\right\rVert ^{n-1}=\\
	&=\sum_{l=0}^{k-1}p_l\frac{1+\epsilon l}{1+\epsilon k}  \sum_{n=1}^\infty p_l ^n \left\lVert T(l)\right\rVert ^{n}=\frac{\epsilon}{1+\epsilon k} \sum_{l=0}^{k-1}\frac{1+\epsilon l}{1+\epsilon (l+1)}  \frac{p_l \left\lVert T(l)\right\rVert}{1-p_l\left\lVert T(l)\right\rVert},
	\end{split}
	\ee
	since $p_l \left\lVert T(l)\right\rVert<1$. To evaluate the last sum in this equation we adopt a counting argument by analyzing each factor. For the first one:
	\be
	\frac{1+\epsilon l}{1+\epsilon(l+1)}\sim \mathcal O (l^0),
	\ee
	for large enough $l$. For the second factor:
	\be
	\begin{split}
	&\frac{p_l \left\lVert T(l)\right\rVert}{1-p_l\left\lVert T(l)\right\rVert}\le\\&\le\epsilon \frac{1+\epsilon l}{1+\epsilon (l+1)}\frac{\lVert C \rVert}{1+\lVert C \rVert N_{l-1}(1+\epsilon l)} \Big(1-\epsilon \frac{1+\epsilon l}{1+\epsilon (l+1)}\frac{\lVert C \rVert}{1+\lVert C \rVert N_{l-1}(1+\epsilon l)}\Big)^{-1}=\\&=
	\frac{\epsilon(1+\epsilon l)\lVert C\rVert}{[1+\epsilon (l+1)][1+\lVert C \rVert N_{l-1}(1+\epsilon l)]}\sim \mathcal O (l^{-1}).
	\end{split}
	\ee
	Then, globally we have
	\be
	\frac{1+\epsilon l}{1+\epsilon(l+1)}\frac{p_l \left\lVert T(l)\right\rVert}{1-p_l\left\lVert T(l)\right\rVert}\sim \mathcal O (l^{-1}).
	\ee
	To evaluate the behavior for $k \rightarrow \infty$,\footnote{In this limit, we can replace $k-1$ in the sum simply with $k$.} we then take a large - but finite - integer $\bar l$ such that $1 \gg \bar l \gg k$ and split the sum as
	\be
	\begin{split}
	&\sum_{l=0}^{k}\frac{1+\epsilon l}{1+\epsilon (l+1)}  \frac{p_l \left\lVert T(l)\right\rVert}{1-p_l\left\lVert T(l)\right\rVert}\\&= \sum_{l=0}^{\bar l-1}\frac{1+\epsilon l}{1+\epsilon (l+1)}  \frac{p_l \left\lVert T(l)\right\rVert}{1-p_l\left\lVert T(l)\right\rVert}+\sum_{l=\bar l}^{k}\frac{1+\epsilon l}{1+\epsilon (l+1)}  \frac{p_l \left\lVert T(l)\right\rVert}{1-p_l\left\lVert T(l)\right\rVert}.
	\end{split}
	\ee
	Since $\bar l$ is large, the terms in the second sum are well-approximated with $l^{-1}$ (corrections are subleading in $l$), while the first sum is a finite number:
	\be
	\sum_{l=0}^{k}\frac{1+\epsilon l}{1+\epsilon (l+1)}  \frac{p_l \left\lVert T(l)\right\rVert}{1-p_l\left\lVert T(l)\right\rVert}\sim \text{ finite contributions }+H_k-H_{\bar l-1},
	\ee
	where $	H_s = \sum_{l=1}^s l^{-1}$ is the harmonic number. Since $\bar l$ is finite, the term $H_{\bar l-1}$ can be incorporated in the finite contributions, leaving only with
	\be
	\sum_{l=0}^{k}\frac{1+\epsilon l}{1+\epsilon (l+1)}  \frac{p_l \left\lVert T(l)\right\rVert}{1-p_l\left\lVert T(l)\right\rVert}\sim \text{ finite contributions }+H_k.
	\ee
	It is now well-known that the asymptotical behavior of $H_k$ for large $k$ is $H_k \sim \mathcal O (\log k)$. As a consequence, we find that the leading contribution goes as
	\be
	\frac{\epsilon}{1+\epsilon k} \sum_{l=0}^{k}\frac{1+\epsilon l}{1+\epsilon (l+1)}  \frac{p_l \left\lVert T(l)\right\rVert}{1-p_l\left\lVert T(l)\right\rVert}\sim \mathcal O (\log k /k)+ \text{ subleading contributions},
	\ee
	and therefore $\lVert R(k-1)\rVert$ vanishes in the $k\rightarrow \infty$ limit. By these norm estimation and since $N_k \underset{k\rightarrow \infty}\sim 1$, we can therefore conclude that
	\be
	T^{-1}(k)\underset{k\rightarrow \infty}{\sim} \mathbb I.
	\ee
	Recalling that $T(k)= CG(k)$, it immediately follows that $G(k)\rightarrow C^{-1}$ in the large $k$ limit, as claimed.
\end{proof}

\newpage

\end{document}